\documentclass{article}

\usepackage{microtype}
\usepackage{graphicx}
\usepackage{subcaption}
\usepackage{booktabs} % for professional tables
\usepackage{makecell}
\usepackage{multirow}
\usepackage{colortbl}
\usepackage{xcolor}
\definecolor{myyellow}{HTML}{FFCB64}
\definecolor{myteal}{HTML}{62D4C5}
\definecolor{myblue}{HTML}{3266E3}
\definecolor{mypink}{HTML}{FE839C}
\usepackage[most]{tcolorbox}

\usepackage{amssymb}
\usepackage{hyperref}
\usepackage[preprint]{icml2026}

\usepackage{amsmath}
\usepackage{amssymb}
\usepackage{mathtools}
\usepackage{amsthm}
\usepackage{enumitem}
\usepackage[capitalize,noabbrev]{cleveref}

\theoremstyle{plain}
\newtheorem{theorem}{Theorem}[section]

\newtheorem{lemma}[theorem]{Lemma}

\theoremstyle{definition}

\theoremstyle{remark}
\newtheorem{remark}[theorem]{Remark}

\usepackage[textsize=tiny]{todonotes}

\icmltitlerunning{\texttt{ADV-0}: Closed-Loop Min-Max Adversarial Training for Long-Tail Robustness in Autonomous Driving}

\begin{document}

\twocolumn[
  % \icmltitle{\texttt{ADV-0}: Closed-Loop Min-Max Policy Optimization for \\ Generalizable Adversarial Learning in the Long Tail}
  \icmltitle{\texttt{ADV-0}: Closed-Loop Min-Max Adversarial Training for \\ Long-Tail Robustness in Autonomous Driving}
  
  % \icmltitle{Submission and Formatting Instructions for \\
  %   International Conference on Machine Learning (ICML 2026)}

  % It is OKAY to include author information, even for blind submissions: the
  % style file will automatically remove it for you unless you've provided
  % the [accepted] option to the icml2026 package.

  % List of affiliations: The first argument should be a (short) identifier you
  % will use later to specify author affiliations Academic affiliations
  % should list Department, University, City, Region, Country Industry
  % affiliations should list Company, City, Region, Country

  % You can specify symbols, otherwise they are numbered in order. Ideally, you
  % should not use this facility. Affiliations will be numbered in order of
  % appearance and this is the preferred way.
  \icmlsetsymbol{equal}{*}

  \begin{icmlauthorlist}
    \icmlauthor{Tong Nie}{1,2}
    \icmlauthor{Yihong Tang}{3}
    \icmlauthor{Junlin He}{1}
    \icmlauthor{Yuewen Mei}{2}
    \icmlauthor{Jie Sun}{2}
    \icmlauthor{Lijun Sun}{3}
    \icmlauthor{Wei Ma}{1}
    \icmlauthor{Jian Sun}{2}
    %\icmlauthor{}{sch}
    %\icmlauthor{}{sch}
  \end{icmlauthorlist}

  \icmlaffiliation{1}{The Hong Kong Polytechnic University, Hong Kong SAR, China}
  \icmlaffiliation{2}{Tongji University, Shanghai, China}
  \icmlaffiliation{3}{McGill University, Montreal, QC, Canada}

  \icmlcorrespondingauthor{Wei Ma}{wei.w.ma@polyu.edu.hk}
  \icmlcorrespondingauthor{Jian Sun}{sunjian@tongji.edu.cn}

  % You may provide any keywords that you find helpful for describing your
  % paper; these are used to populate the "keywords" metadata in the PDF but
  % will not be shown in the document
  \icmlkeywords{Machine Learning, ICML}

  \vskip 0.3in
]

% this must go after the closing bracket ] following \twocolumn[ ...

% This command actually creates the footnote in the first column listing the
% affiliations and the copyright notice. The command takes one argument, which
% is text to display at the start of the footnote. The \icmlEqualContribution
% command is standard text for equal contribution. Remove it (just {}) if you
% do not need this facility.

% Use ONE of the following lines. DO NOT remove the command.
% If you have no special notice, KEEP empty braces:
\printAffiliationsAndNotice{}  % no special notice (required even if empty)
% Or, if applicable, use the standard equal contribution text:
% \printAffiliationsAndNotice{\icmlEqualContribution}

\begin{abstract}
Deploying autonomous driving systems requires robustness against long-tail scenarios that are rare but safety-critical. While adversarial training offers a promising solution, existing methods typically decouple scenario generation from policy optimization and rely on heuristic surrogates. This leads to objective misalignment and fails to capture the shifting failure modes of evolving policies. This paper presents \texttt{ADV-0}, a closed-loop min-max optimization framework that treats the interaction between driving policy (defender) and adversarial agent (attacker) as a zero-sum Markov game. 
By aligning the attacker's utility directly with the defender's objective, we reveal the optimal adversary distribution. To make this tractable, we cast dynamic adversary evolution as iterative preference learning, efficiently approximating this optimum and offering an algorithm-agnostic solution to the game.
Theoretically, \texttt{ADV-0} converges to a Nash Equilibrium and maximizes a certified lower bound on real-world performance.
Experiments indicate that it effectively exposes diverse safety-critical failures and greatly enhances the generalizability of both learned policies and motion planners against unseen long-tail risks.
% \footnote{\scriptsize\url{https://anonymous.4open.science/r/ADV-0-D2CB}.}.
\end{abstract}

%%%%%%%%%%%%%%%%%%%%%%%%%%%%%%%%%%%%%%%%%%%%%%%%%%
\vspace{-25pt}
\section{Introduction}
\vspace{-5pt}

Deploying autonomous driving (AD) systems in the open world faces a crucial bottleneck: the inability to anticipate and handle long-tail scenarios that are rare but safety-critical. 
While vast amounts of naturalistic driving data provide a basis for model development, they are dominated by normal logs, where high-risk events like aggressive cut-ins appear with negligible frequency \cite{liu2024curse,xu2025wod}. 
The growing literature has introduced sampling and generative methods to accelerate the discovery of rare events \cite{feng2023dense, ding2023survey}.
However, they are often confined to stress testing or performance validation, failing to actively target long-tail risks. 
Effectively leveraging these synthetic data to improve the generalizability of AD policies in the long tail remains an open question.

Closed-loop adversarial training offers an avenue to address this challenge by exposing the training policy to synthetic risks. 
This paradigm can naturally be formulated as a min-max bi-level optimization problem via a zero-sum Markov game, involving an adversary that generates challenges and a defender that optimizes the policy \cite{pinto2017robust}. 
Despite its theoretical elegance, direct application in AD and robotics has been hindered by nontrivial optimization issues. 
First, end-to-end solutions via gradient descent are often computationally intractable due to the non-differentiable nature of physical simulators and the difficulty of propagating gradients through long-horizon rollouts to provide learning signals. 
Second, the zero-sum interaction between two players is prone to instability and mode collapse \cite{zhang2020stability}, where the adversary converges to unrealistic attack patterns, limiting the scalability of this framework.

To bypass computational difficulties, existing methods typically decouple the min-max objective into separate sub-problems: 
% generating a static set of adversarial scenarios and then training the agent against this fixed distribution \cite{zhang2023cat,zhang2024chatscene,stoler2025seal}. 
generating scenarios via fixed priors or heuristics, then training the agent against this static distribution \cite{zhang2023cat,zhang2024chatscene,stoler2025seal}.
However, this decoupled paradigm introduces notable limitations: 
(1) \textit{Misaligned}.
It creates a misalignment between the goals of the two players. While the defender optimizes a comprehensive reward that accounts for safety, efficiency, and comfort, the attacker solely targets collisions, relying on heuristic surrogates such as collision probability.
First, this discrepancy renders the adversarial objective ill-defined, often resulting in an overly aggressive attacker that overwhelms the defender and destabilizes training \cite{zhang2020stability}. Second, the divergence in gradient directions prevents the attacker from identifying non-collision failures like off-road violations and providing meaningful learning signals. Thus, the defender can overfit to specific collision modes while remaining vulnerable to broader risks \cite{vinitsky2020robust}, without acquiring generalized robustness. 
(2) \textit{Nonstationary}.
Decoupled methods with fixed attack modes fail to uncover the nonstationary vulnerability frontier---the shifting boundary of scenarios where the current policy remains prone to failure. As the defender evolves, its failure modes shift further into the long-tail distribution, becoming increasingly rare under the initial prior. Fixed adversaries with static priors are insufficient to track these shifting weaknesses, leading to fragile generalization in unseen risks. 
(3) \textit{Uncertified}.
% Heuristic priors are often followed with black-box training, which relies on empirical trial-and-error, and thus fails to provide theoretical safety guarantees. 
Training against such static priors or heuristics acts as an empirical trial-and-error process, thus failing to provide theoretical safety guarantees. 
This lack of certified performance bounds contrasts sharply with the rigorous safety demands of real-world deployment, where agents must generalize to an unbounded variety of unknown long-tail scenarios \cite{brunke2022safe}.

This work revisits the min-max formulation and introduces \texttt{ADV-0}, a closed-loop policy optimization framework that enables end-to-end training for generalizable adversarial learning. 
\texttt{ADV-0} solves the Markov game by directly aligning the adversarial utility with the training objective of the defender. To address the tractability and stability issues, we propose an online iterative preference learning algorithm, casting adversarial evolution as preference optimization. This allows the attacker to continuously track the shifting vulnerability frontier of the improving defender.
By coupling the evolution, \texttt{ADV-0} tailors the distribution towards the long tail of the current players, forcing the defender to learn generalized robustness rather than overfitting to heuristics.
Importantly, \texttt{ADV-0} is algorithm-agnostic, applicable to both RL agents and motion planning models, providing a theoretically grounded pathway from adversarial generation to policy improvement.
Our contributions are threefold.
\begin{itemize}[leftmargin=*, itemsep=0pt, topsep=0pt] 
    \item \texttt{ADV-0} is the first closed-loop training framework for long-tail problems of AD that couples adversarial generation and policy optimization in an end-to-end way. 
    \item We propose a preference-based solution to the zero-sum game, a stable and efficient realization supporting on-policy interaction and algorithm-agnostic evolution.
    \item Theoretically, \texttt{ADV-0} converges to a Nash Equilibrium and maximizes a certified lower bound on real-world performances. Empirically, \texttt{ADV-0} not only exposing diverse safety-critical events but also enhances the generalizability of policies against unseen long-tail risks. 
\end{itemize}

%%%%%%%%%%%%%%%%%%%%%%%%%%%%%%%%%%%%%%%%%%%%%%%%%%%%%%%%%%%%%%%
\vspace{-10pt}
\section{Preliminary and Problem Formulation}
\label{sec:formulation}

\vspace{-5pt}
\paragraph{Task description.}
We model the safe AD task as a Markov Decision Process (MDP), defined by $(\mathcal{S}, \mathcal{A}, \mathcal{P}, \mathcal{R}, \gamma, T)$. Here, $\mathcal{S}$ and $\mathcal{A}$ denote the state and action spaces.
The state $s_t \in \mathcal{S}$ includes raw sensor inputs, high-level commands, and kinematic status. The action $a_t \in \mathcal{A}$ represents continuous low-level control signals (e.g., steering, acceleration). 
% The environment dynamics $\mathcal{P}:\mathcal{S} \times \mathcal{A} \rightarrow \mathcal{S}$ describe the transition probabilities of the traffic scene. 
The environment dynamics $\mathcal{P} : \mathcal{S} \times \mathcal{A} \rightarrow \Delta(\mathcal{S})$ describe the transition probabilities of the traffic scene.
We initialize scenarios using real-world driving logs, where the ego vehicle is controlled by a policy $\pi_{\theta}$ parameterized by $\theta$. Background traffic participants are initially governed by naturalistic behavior priors, such as log-replay or traffic models like the Intelligent Driver Model (IDM). 
The reward function $\mathcal{R}(s, a)$ is designed to balance task progress with safety: it encourages route completion and velocity tracking, while imposes heavy penalties for safety violations, such as collisions or off-road events. 
The goal of the ego agent is to learn an optimal policy $\pi_{\theta}^*$ that maximizes the expected cumulative return $J(\pi_\theta) = \mathbb{E}_{\tau \sim \pi_\theta, \mathcal{P}}[\sum_{t=0}^{T} \gamma^t \mathcal{R}(s_t, a_t)]$ within $T$. Unlike imitation learning which assumes a fixed data distribution, we focus on online RL to enable the agent to recover from adversarial perturbations in closed-loop interactions.

\vspace{-10pt}
\paragraph{Min-max formulation.}
Relying solely on naturalistic scenarios often fails to expose the ego agent to low-probability but high-risk events residing in the long tail of the distribution. 
To ensure the robustness of the policy against long-tail risks, adversarial training frames the problem as a robust optimization task via a \textbf{two-player zero-sum game} between the ego agent and an adversary. 
Here, the behaviors of background agents are governed by a parameterized adversarial policy $\psi \in \Psi$ that alters the transition dynamics from the naturalistic $\mathcal{P}$ to an adversarial $\mathcal{P}_{\psi}$. The robust policy optimization is thus cast as a min-max objective:
% \vspace{-5pt}
\begin{equation}
    \label{eq:min_max}
    \max_{\theta} \min_{\psi \in \Psi} \left[J(\pi_\theta, \mathcal{P}_{\psi}) =  \mathbb{E}_{\tau \sim (\pi_\theta, \mathcal{P}_{\psi})}  [\sum_{t=0}^T \gamma^t \mathcal{R}(s_t, a_t)] \right],
\end{equation}
where $\Psi$ represents the feasible set of adversarial configurations that remain physically plausible. The outer loop maximizes the ego's performance, while the inner loop seeks an adversary that minimizes the current ego's reward. 

Directly optimizing the bi-level objective via gradient descent is often computationally intractable due to the non-differentiable nature of physical simulators and the difficulty of propagating gradients through long-horizon rollouts. 
Existing methods often decouple Eq.~\ref{eq:min_max} into separate problems:
(1) \textit{Adversarial generation}: generating a static set of hard scenarios via surrogate objectives $J_\text{adv}$ (e.g., collision probability);
% With a hypothetical policy $\pi_\theta$, the adversary optimizes a surrogate objective $J_\text{adv}$ (e.g., collision probability) to generate hard cases.
(2) \textit{Policy optimization}: then optimizing $\pi_\theta$ against this fixed distribution. 
% The generated scenarios are added to the training data, and $\pi_\theta$ is updated to maximize reward against the prescribed distribution. 
However, this decoupled paradigm introduces objective misalignment and fails to capture the non-stationary vulnerability frontier of the evolving policy.

%%%%%%%%%%%%%%%%%%%%%%%%%%%%%%%%%%%%%%%%%%%%%
\vspace{-10pt}
\section{The ADV-0 Framework}
\label{sec:method}

%%%%%%%%%%%%%%%%%%Framework%%%%%%%%%%%%%%%%%%%%%%%%%%%
% 1. \textbf{Sec.~3.1 (Framework):} We formulate the problem as a game and show that the \textbf{theoretical objective} of the inner loop is to identify an optimal adversarial distribution that follows a Gibbs distribution, yielding an energy-based formulation.

% 2. \textbf{Sec.~3.2 (Sampling):} We address the question of \textbf{``how to obtain data''}. Since direct sampling from the theoretical distribution is intractable, we use the current generator $\mathcal{G}_\psi$ as a proposal distribution and approximate the target Gibbs distribution via importance sampling (i.e., temperature-scaled sampling), further accelerated by a proxy reward.

% 3. \textbf{Sec.~3.3 (Learning):} We address the question of \textbf{``how to update the generator''}. To drive the generator $\mathcal{G}_\psi$ closer to the theoretical Gibbs distribution in the next iteration, we model the parameter update as a preference learning problem.
%%%%%%%%%%%%%%%%%%Framework%%%%%%%%%%%%%%%%%%%%%%%%%%%

\vspace{-5pt}
We introduce \texttt{ADV-0}, a closed-loop adversarial training framework to solve the min-max optimization problem in Eq.~\ref{eq:min_max}. 
Our approach treats the interaction between the driving agent (\textit{defender}) and the traffic environment (\textit{attacker}) as a dynamic \textit{zero-sum game}: the defender minimizes the expected risk, while the attacker continuously explores the long-tail distribution to identify and exploit the ego's evolving weaknesses. 
Due to the nonstationarity of the driving environment and the rarity of critical events, relying on static datasets or heuristic adversarial priors is insufficient. 
Instead, we seek a Nash Equilibrium where the ego policy $\pi_{\theta}$ remains robust to the worst-case distributions generated by a continuously evolving adversary policy $\psi$. 
At this equilibrium, $\pi_{\theta}$ is theoretically guaranteed to perform well under any other distribution within the trust region.
This section details this end-to-end bi-level optimization scheme.

%%%%%%%%%%%%%%%%%%%%%%%%%%%%%%%%%%%%%%%%%%%%%%%
\vspace{-10pt}
\begin{figure}[h]
  \begin{center}
    \centerline{\includegraphics[width=1\columnwidth]{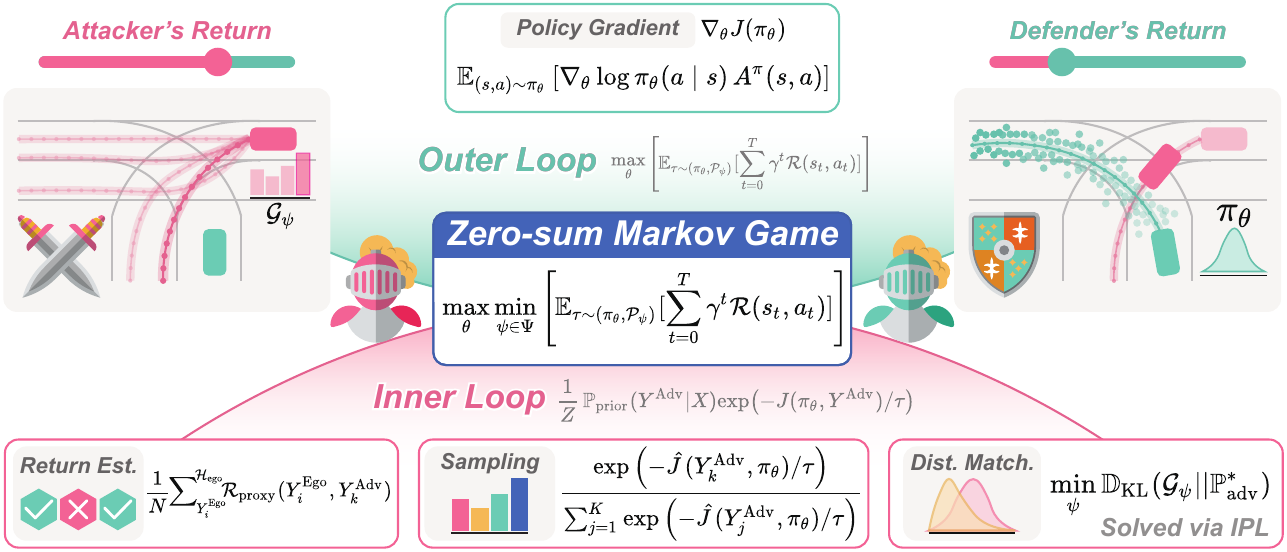}}
    \vspace{-5pt}
    \caption{Illustration of the \texttt{ADV-0} framework.  It alternates between an \textit{Inner Loop} where the adversary $\psi$ evolves via IPL to track the failure frontier, and an \textit{Outer Loop} where the ego $\pi_\theta$ optimizes policy gradients against the induced adversarial distribution.
    }
    \label{fig:method}
  \end{center}
\end{figure}
\vspace{-10pt}
%%%%%%%%%%%%%%%%%%%%%%%%%%%%%%%%%%%%%%%%%%%%%%%

\vspace{-20pt}
\subsection{End-to-End Framework for Min-Max Optimization}
\label{subsec:framework}

\vspace{-5pt}
To enable tractability and stationarity, we propose an iterative end-to-end training pipeline inspired by Robust Adversarial Reinforcement Learning (RARL,~\citet{pinto2017robust}). The core of \texttt{ADV-0} lies in efficiently propagating gradients from the ego's objective ${J}$ to the adversary $\psi$. 
The training alternates between two phases: (1) an \textit{inner loop} that updates $\psi$ to track the theoretical optimal attack distribution; and (2) an \textit{outer loop} that optimizes the policy $\pi_\theta$ against the current induced distribution (\cref{fig:method}).

%%%%%%%%%%%%%%%%%%%%%%%%%%%%%%%%%%%%%%%%%%%%%%%%%%%%%%%
\vspace{-8pt}
\paragraph{Inner loop.}
Formally, let $X$ denote the context (e.g., map topology and initial traffic state) and $Y^\text{Adv}$ denote the future adversarial trajectories. 
% We define the goal of the attacker as generating trajectories that have high prior likelihood while forcing the ego agent to fail. 
We define the critical event $\mathcal{E}$ as the set of scenarios where the ego's performance drops below a safety threshold $\epsilon$: $\mathcal{E} := \{Y^{\text{Adv}} \mid J(\pi_\theta, Y^{\text{Adv}}) \leq \epsilon\}$. 
Then the attacker's goal is to find an optimal adversarial distribution $\mathbb{P}_\text{adv}$ that maximizes the likelihood of this critical event while constrained within the trust region of the naturalistic prior $\mathbb{P}_{\text{prior}}$.
% This is equivalent to solving a constrained optimization problem subject to the ego's performance falling below a safety threshold $\epsilon$:
This is equivalent to a constrained optimization:
\begin{equation}
    \label{eq:constrained_opt}
    \begin{aligned}
        \max_{\mathbb{P}_\text{adv}} ~&\mathbb{E}_{X \sim \mathcal{D}} \left[ \mathbb{E}_{Y^\text{Adv} \sim \mathbb{P}_\text{adv}(\cdot|X)} [\log \mathbb{P}(\mathcal{E})] \right] ,\\
        &\text{s.t.}~\mathbb{D}_{\text{KL}}(\mathbb{P}_\text{adv} || \mathbb{P}_{\text{prior}}) \le \delta.
    \end{aligned}
\end{equation}
Directly solving Eq.~\ref{eq:constrained_opt} involves optimizing a hard indicator function $\mathbb{I}(Y^{\text{Adv}}\in\mathcal{E})$, suffering from vanishing gradients and event sparsity. To enable end-to-end gradient optimization, we relax the hard constraint into a soft energy function. We view the negative return $-J(\pi_\theta,Y^{\text{Adv}})$ as the unnormalized log-likelihood of the critical event. The objective is thus relaxed to maximizing the expected adversarial utility $\mathbb{P}(\mathcal{E} | Y^\text{Adv})\approx\mathbb{E}_{Y^{\text{Adv}} \sim \mathbb{P}_\text{adv}} \left[-J(\pi_\theta, Y^{\text{Adv}}) \right]$ within the trust region.
By applying Lagrange multipliers, the optima $\mathbb{P}_\text{adv}^*$ can be derived in closed form as the Gibbs distribution:
\vspace{-5pt}
\begin{equation}\label{eq:factorization}
    \underbrace{\mathbb{P}_\text{adv}^*(Y^{\text{Adv}}|X)}_{\text{Posterior}} = \frac{1}{Z} \underbrace{\mathbb{P}_{\text{prior}}(Y^{\text{Adv}}|X)}_{\text{Traffic prior}} \underbrace{\exp \bigl( -J(\pi_\theta, Y^{\text{Adv}})/\tau \bigr)}_{\text{Generalized adversarial utility}},
\end{equation}
where $Z$ is the partition function and $\tau$ is the temperature. 
Eq.~\ref{eq:factorization} reveals that the theoretically optimal adversary re-weights the traffic prior based on the generalized adversarial utility (GAU).
Traffic prior ensures plausibility, and GAU is the likelihood that $Y^{\text{Adv}}$ causes the current policy $\pi_\theta$ to fail.
However, directly sampling from $\mathbb{P}_\text{adv}^*(\cdot|X)$ is intractable due to the unknown partition function. 
\texttt{ADV-0} solves for this via a two-step approximation: (1) \textbf{Sampling (Sec.~\ref{subsec:sampling})}: We approximate expectations over $\mathbb{P}_\text{adv}^*$ using importance sampling from the current prior;
and (2) \textbf{Learning (Sec.~\ref{subsec:ipl})}: We update the parameterized adversary $\psi$ to approximate the theoretical optimum $\mathbb{P}_\text{adv}^*$ via preference learning.

\vspace{-10pt}
\paragraph{Outer loop.} With the adversary $\psi_{{k+1}}$ fixed, the ego updates its policy to maximize the expected return under the induced adversarial distribution using standard RL methods:
\vspace{-5pt}
\begin{equation}
    \theta^{k+1} \leftarrow \underset{\theta^{k}}{\arg\max} \ J(\pi_\theta^{k}, \mathcal{P}_{\psi_{{k+1}}}).
\end{equation}
% \vspace{-5pt}
Crucially, this general framework is agnostic to the specific RL method used for the ego policy. Since the adversary interacts with the ego solely through generated trajectories, the outer loop supports both on-policy and off-policy algorithms, by adjusting the synchronization schedule between the two. Moreover, \texttt{ADV-0} can be applied to $\pi_{\theta}$ either outputs continuous control signals (e.g., acceleration) or future trajectory plans (e.g., multi-modal trajectories with scores).

Training proceeds by fixing one player while updating the other. This coupled iteration ensures that the attacker dynamically tracks the defender's vulnerability frontier, while the defender learns to generalize against an increasingly sophisticated attacker. See \cref{alg:adv0} for implementation.

\vspace{-10pt}
\subsection{Reward-guided Adversarial Sampling \& Alignment}
\label{subsec:sampling}
\vspace{-5pt}

To approximate the optimal distribution $\mathbb{P}_\text{adv}^*$ without computationally expensive MCMC, we adopt a generate-and-resample paradigm. 
Instead of generating trajectories from scratch, we sample from a pretrained multi-modal trajectory generator $\mathcal{G}_\psi$ that approximates $\mathbb{P}_{\text{prior}}(\cdot|X)$.
Given a context $X$, the current $\mathcal{G}_\psi$ produces $K$ candidates $\{Y^{\text{Adv}}_k\}_{k=1}^K \sim \mathcal{G}_\psi(\cdot|X)$ with prior probabilities. We then re-weight these candidate to approximate samples from $\mathbb{P}_\text{adv}^*$ using GAU.
% This ensures that the search space is confined to the feasible set $\Psi$ of physically plausible behaviors.
% The core challenge lies in re-weighting these candidates according to the adversarial utility $\mathbb{P}(\mathcal{E} | Y^\text{Adv}, \pi_\theta)$.

\vspace{-10pt}
\paragraph{Direct objective alignment.}
Prior works \cite{zhang2022adversarial,zhang2023cat} simplify the GAU by assuming a heuristic surrogate, e.g., collision probability. 
% They often use a collision probability estimator as the likelihood $\mathbb{P}(Y^\text{Adv} | \text{Collision}, X)$.
However, this introduces a misalignment: the defender optimizes a comprehensive reward (safety, efficiency, and comfort), while the attacker solely targets collisions. 
This discrepancy in gradient direction prevents the attacker from identifying non-collision failures (e.g., off-road) and allows the defender to overfit to specific collision modes while remaining vulnerable to other risks.
In contrast, \texttt{ADV-0} directly aligns the GAU with the defender's objective by \textit{setting the energy function as the negation of the ego's cumulative return} (Eq.~\ref{eq:factorization}). 
% We define the generalized adversarial utility based on the direct minimization of the ego's cumulative return (Eq.~\ref{eq:factorization}). 
By targeting exactly what the ego optimizes, the attacker offers holistic supervision signals across the entire reward space, whether they are safety violations or efficiency drops.
% By targeting the exact goal the ego is trying to optimize, the attacker uncovers the most relevant failure modes, whether they be safety violations or efficiency drops, without relying on handcrafted heuristics.

%%%%%%%%%%%%%%%%%%%%%%%%%%%%%%%%%%%%%%%%%%%%%%%%%%%%%%%%%%%%%%%%%%%
\vspace{-5pt}
\begin{algorithm}[h]
\begin{small}
   \caption{Closed-Loop Min-Max Policy Optimization}
   \label{alg:adv0}
\begin{algorithmic}[1]
    \STATE {\bfseries Input:} Initial ego policy $\pi_\theta$, Pretrained adversary prior $\mathcal{G}_{\text{ref}}$.
    % Ego history buffer $\mathcal{H}_{\text{ego}}$, RL \texttt{Algorithm}.
   \STATE {\bfseries Hyperparameters:} Temperature $\tau$, Reward filters $\delta,\xi$, Learning rates $\alpha, \eta$, Frequency $N_{\text{freq}}$, IPL batch size $M$.
   \STATE {\bfseries Initialize:} Adversary $\mathcal{G}_\psi \leftarrow \mathcal{G}_{\text{ref}}$, Ego history buffer $\mathcal{H}_{\text{ego}} \leftarrow \emptyset$, Replay buffer $\mathcal{D}$ (Off-policy) or Batch buffer $\mathcal{B}$ (On-policy).

   \FOR{timestep $t = 1$ {\bfseries to} $T_{\text{max}}$}
   
   \STATE \textcolor{gray}{\emph{// --- Phase 1: Adversary Update (Inner Loop) ---}}
   \IF{$t \mod N_{\text{freq}} == 0$}
   \FOR{iteration $k = 1$ {\bfseries to} $K_{\text{IPL}}$}
       \STATE Sample context batch $\{X_m\}_{m=1}^M$.
       % \STATE Generate $\{Y^\text{Adv}_{m,k}\} \sim \mathcal{G}_\psi(\cdot|X_m)$ $\forall{X_m}$.
       \STATE Generate candidates $\{Y^\text{Adv}_{m,j}\}_{j=1}^K \sim \mathcal{G}_\psi(\cdot|X_m), \forall X_m$.
       \STATE Calculate $\{\hat{J}(Y^\text{Adv}_{m,j}, \pi_\theta)\}_{j=1}^K$ using $\mathcal{H}_{\text{ego}}[X_m]$ (Eq.~\ref{eq:proxy_reward}).
       \STATE Construct preference pairs $\mathcal{D}_{\text{pref}}$ based on all $\hat{J}$.
       \STATE Update $\psi$ via IPL on $\mathcal{D}_{\text{pref}}$ (Eq.~\ref{eq:ipl_loss}).
   \ENDFOR
   \ENDIF
   
   \STATE \textcolor{gray}{\emph{// --- Phase 2: Ego Update (Outer Loop) ---}}
   \STATE Sample new scenario context $X$.
   
   % \STATE \textbf{Adversarial Sampling:} 
   \STATE Generate candidates $\{Y_k\} \sim \mathcal{G}_\psi(\cdot|X)$.
   \STATE Select $Y^{\text{adv}}$ via softmax sampling (Eq.~\ref{eq:sampling}).
   
   % \STATE \textbf{Interaction:} 
   \STATE Rollout $\pi_\theta$ in environment $\mathcal{P}_{\psi}$ to get $Y^\text{Ego}$.
   \STATE Update history: $\mathcal{H}_{\text{ego}}[X] \leftarrow \text{FIFO}(\mathcal{H}_{\text{ego}} \cup \{Y^\text{Ego}\})$.
   
   % \STATE \textbf{Policy Optimization:}
   \IF{\texttt{Algorithm} is On-Policy (e.g., PPO)}
       \STATE Store $Y^\text{Ego}$ in Batch Buffer $\mathcal{B}$.
       \IF{$\mathcal{B}$ is full}
           \STATE Update $\theta$ via Eq.~\ref{eq:min_max} several steps on $\mathcal{B}$, then clear $\mathcal{B}$.
       \ENDIF
   \ELSIF{\texttt{Algorithm} is Off-Policy (e.g., SAC)}
       \STATE Store transitions from $Y^\text{Ego}$ into Replay Buffer $\mathcal{D}$.
       \STATE Sample mini-batch from $\mathcal{D}$ and update $\theta$ one step (Eq.~\ref{eq:min_max}).
   \ENDIF
   
   \ENDFOR
\end{algorithmic}
\end{small}
\end{algorithm}
\vspace{-5pt}
%%%%%%%%%%%%%%%%%%%%%%%%%%%%%%%%%%%%%%%%%%%%%%%%%%%%%%%%%%%%%%%%%%%%%%

\vspace{-5pt}
\paragraph{Efficient return estimation.}
Evaluating the exact return $\mathbb{E}[J(\pi_\theta, Y^\text{Adv}_k)]$ for all $K$ candidates via closed-loop simulation requires fully rolling out the policy, which is computationally prohibitive. To address this, we propose a \textit{Proxy Reward Evaluator} that estimates the expected return using a lightweight function query.
Recognizing that the ego's response to $Y^\text{Adv}$ is stochastic (exploration noise) during training, we treat $J$ as a random variable. We maintain a context-aware dynamic buffer of the ego's recent responses $\mathcal{H}_\text{ego}(X) = \{Y^\text{Ego}_i|X\}_{i=1}^N$ containing the $N$ most recent trajectories generated by $\pi_\theta$ in context $X$. For a new context with an empty buffer, we perform a single warm-up rollout using $\pi_{\theta}$ against the replay log to initialize the buffer. 
% This ensures that the proxy estimator calculates geometric interactions between trajectories sharing the same spatial topology.
We treat $\mathcal{H}_\text{ego}$ as an empirical approximation of the current policy distribution. 
The expected return for a candidate $Y^\text{Adv}_k$ is estimated via Monte Carlo integration against the history:
\begin{equation}
    \label{eq:proxy_reward}
    \hat{J}(Y^\text{Adv}_k, \pi_\theta) \approx \frac{1}{N} \sum_{Y^\text{Ego}_i \in \mathcal{H}_\text{ego}} \mathcal{R}_{\text{proxy}}(Y^\text{Ego}_i, Y^\text{Adv}_k),
\end{equation}
where $\mathcal{R}_{\text{proxy}}$ is a vectorized function that computes geometric interactions (e.g., progress, collision overlap) between the adversary path and the cached ego paths without stepping the physics engine. 
% We empirically demonstrate that this rule-based proxy provides a sufficiently accurate gradient direction for adversarial sampling (see \cref{subsubsec:proxy_reward}).
% This rule-based proxy provides an efficient and unbiased estimate of the ego's performance against a specific attack $Y^\text{Adv}_k$ under the current distribution.
This rule-based proxy provides an efficient and sufficiently accurate gradient direction for adversarial sampling (see \cref{subsubsec:proxy_reward} for implementation).

\vspace{-10pt}
\paragraph{Temperature-scaled sampling.}
Finally, to select the adversarial trajectory for training, we implement the Gibbs distribution (Eq.~\ref{eq:factorization}) over the finite set of $K$ candidates. The probability of selecting candidate $k$ is given by the scaled softmax distribution over negative estimated returns:
\begin{equation}
    \label{eq:sampling}
    \mathbb{P}(Y^\text{Adv}_k) = \frac{\exp\left(-\hat{J}(Y^\text{Adv}_k, \pi_\theta) / \tau\right)}{\sum_{j=1}^K \exp\left(-\hat{J}(Y^\text{Adv}_j, \pi_\theta) / \tau\right)}.
\end{equation}
This serves as an \textit{importance sampling} step, re-weighting the proposal distribution $\mathcal{G}_\psi$ towards the theoretical optimum $\mathbb{P}_\text{adv}^*$ (see Appendix~\ref{subsec:convergence}). 
$\tau$ balances exploration and exploitation: 
% A limit of $\tau\to0$ selects the worst case, while a higher $\tau$ introduces stochasticity. 
$\tau\to0$ selects the worst case, while larger $\tau$ retains diversity from the prior.
It ensures that the defender is exposed to a diverse range of challenging scenarios from the long tail, rather than collapsing into a single worst case.
% We justify that this sampling strategy corresponds to the closed-form optimality of the Gibbs adversary for the zero-sum game.

%%%%%%%%%%%%%%%%%%%%%%%%%%%%%%%%%%%%%%%%%%%%%%%%%%%%%%%%%%%%%%%%%%%%%%%%%
\vspace{-10pt}
\subsection{Iterative Preference Learning in the Long Tail}
\label{subsec:ipl}

\vspace{-5pt}
While the sampling strategy in \cref{subsec:sampling} identifies hard cases within the support of $\mathcal{G}_\psi$, this fixed proposal bounds its efficacy. As the defender $\pi_\theta$ improves, its weakness shifts into the long tail where the prior $\mathcal{G}_{\text{ref}}$ has negligible mass. Relying solely on static sampling becomes inefficient. 
% Therefore, to actively discover new failure modes, the parameters of $\psi$ should be updated to modulate the generative distribution.
To track the shifting frontier, we update $\psi$ to match the distribution of the generator $\mathcal{G}_\psi$ and the optimal target $\mathbb{P}_\text{adv}^*$.

\vspace{-8pt}
\paragraph{Implicit reward optimization via preferences.} 
Formally, our goal is to minimize the KL-divergence between $\mathcal{G}_\psi$ and $\mathbb{P}_\text{adv}^*$. Recall the definition in Eq.~\ref{eq:factorization}, we have:
\begin{equation}
\begin{aligned}
    &\min_\psi \mathbb{D}_{\text{KL}}(\mathcal{G}_\psi || \mathbb{P}_\text{adv}^*) = \min_\psi \mathbb{E}_{Y \sim \mathcal{G}_\psi} \left[ \log \mathcal{G}_\psi(Y|X)/\mathbb{P}_\text{adv}^*(Y|X) \right] \\
    &= \min_\psi \mathbb{E}_{Y \sim \mathcal{G}_\psi} \bigl[ \log \frac{\mathcal{G}_\psi(Y|X)}{\mathcal{G}_{\text{ref}}(Y|X)} + \frac{1}{\tau} J(\pi_\theta, Y) \bigr] + \text{const.} \\
    &\iff \max_\psi \mathbb{E}_{Y \sim \mathcal{G}_\psi} \bigl[-J(\pi_\theta, Y) - \tau \mathbb{D}_{\text{KL}}(\mathcal{G}_\psi || \mathcal{G}_{\text{ref}}) \bigr].
\end{aligned}
\label{eq:rl_equivalence}
\end{equation}
Eq.~\ref{eq:rl_equivalence} reveals a standard RL objective: maximizing the expected adversarial reward subject to a KL-divergence constraint. However, directly solving it via policy gradient is notoriously unstable in this context due to the high variance of gradients. The action space of continuous trajectories is high-dimensional, and the zero-sum interaction often leads to mode collapse. 
Instead of explicit RL, we cast the problem as preference learning. Following \citet{rafailov2023direct}, the optimal policy for the KL-constrained objective satisfies a specific preference ordering, which is equivalent to optimizing an \textit{implicit reward}.
This allows us to update $\psi$ using a supervised loss on preference pairs, bypassing the need for an explicit value function or unstable reward maximization.

% \vspace{-5pt}
% \paragraph{Implicit reward optimization via preferences.}
% Recall the inner loop objective in Eq.~\ref{eq:constrained_opt}, which seeks to maximize the adversarial likelihood, a natural solution is to maximize the expected adversarial utility with a KL-divergence constraint to stay close to the naturalistic traffic prior $\mathcal{G}_{\text{ref}}$:
% \begin{equation}
%     \label{eq:kl_objective}
%     \max_{\mathcal{G}_\psi} \mathbb{E}_{Y^\text{Adv} \sim \mathcal{G}_\psi} \left[ r(Y^\text{Adv}, \pi_\theta) - \tau \mathbb{D}_{\text{KL}}(\mathcal{G}_\psi(\cdot|X) || \mathcal{G}_{\text{ref}}(\cdot|X)) \right],
% \end{equation}
% where $r(\cdot) = -J(\pi_\theta, Y^\text{Adv})$ represents the induced reward for the adversary, and $\tau$ controls the regularization.
% Directly solving Eq.~\ref{eq:kl_objective} via RL such as policy gradient is notoriously unstable in this context. The action space of trajectory generation is high-dimensional, and the zero-sum interaction between two players often leads to mode collapse, where the adversary converges to an unrealistic attack pattern. 

\vspace{-10pt}
\paragraph{Online iterative evolution.}
Standard preference learning methods are often offline with static preference datasets. Instead, \texttt{ADV-0} operates in a nonstationary game where the preference labels depend on evolving players. 
Therefore, we propose online \textit{Iterative Preference Learning} (IPL) in the inner loop. 
IPL generates preference data on-the-fly, conditioning on the current attacker and labels it using the current defender.
This process proceeds on-policy: 
\textbf{(1) Sampling}: For a given context $X$, a series of candidates are generated $\{Y^\text{Adv}_k\}_{k=1}^K$ from the \textit{current} attacker $\mathcal{G}_\psi$. 
\textbf{(2) Labeling}: Each candidate is evaluated using the proxy reward evaluator $\hat{J}(\cdot, \pi_\theta)$. Crucially, this evaluation uses the \textit{latest} history of the defender. 
\textbf{(3) Pairing}: A preference dataset $\mathcal{D}_{\text{pref}}$ is curated by pairing $(Y_w, Y_l)$ from the candidates, where $Y_w$ is preferred over $Y_l$ if $\hat{J}(Y_w, \pi_\theta) < \hat{J}(Y_l, \pi_\theta)$. To prevent trivial comparisons, we apply a reward margin $\delta$ and a spatial diversity filter $\xi$:
$\mathcal{D}_{\text{pref}} = \left\{ (Y_w, Y_l) \mid \hat{J}(Y_l) - \hat{J}(Y_w) > \delta \;\land\; \|Y_w - Y_l\|_2 > \xi \right\}$.
This reduces noise in the proxy, making preferences distinguishable between structurally different attacks.

\vspace{-10pt}
\paragraph{Objective.}
We update the adversarial policy $\mathcal{G}_\psi$ by minimizing the negative log-likelihood of the preferred trajectories. To handle the high variance of heterogeneous traffic scenarios without the high cost of a massive replay buffer, we employ a streaming gradient accumulation strategy to stabilize the training. We process a stream of scenarios sequentially, accumulating gradients over a mini-batch of $M$ scenarios before performing a parameter update. The loss function $\mathcal{L}_{\text{IPL}}(\mathcal{G}_\psi)$ for a mini-batch $\mathcal{B}$ of generated pairs is:
\begin{equation}
    \label{eq:ipl_loss}
     \frac{-1}{|\mathcal{B}|} \sum_{(Y_w, Y_l) \in \mathcal{B}} \log \sigma \Bigl( \tau \bigl[\log \frac{\mathcal{G}_\psi(Y_w|X)}{\mathcal{G}_{\text{ref}}(Y_w|X)} -  \log \frac{\mathcal{G}_\psi(Y_l|X)}{\mathcal{G}_{\text{ref}}(Y_l|X)}\bigr] \Bigr).
\end{equation}
This reduces the variance of per-scenario update while maintaining the on-policy nature of data generation. Here, the reference model $\mathcal{G}_{\text{ref}}$ remains frozen as the pretrained prior.

This streaming evolution ensures that the adversary continuously adapts to the defender's evolving capabilities. By pushing the generator's distribution towards the theoretical Gibbs optimum $\mathbb{P}_\text{adv}^*$, the defender is continuously trained against the most pertinent long-tail risks, effectively mitigating the distribution shift and forcing robust generalization.

%%%%%%%%%%%%%%%%%%%%%%%%%%%%%%%%%%%%%%%%%%%%%%%%%%%%%%%%%%%%%%%
\vspace{-10pt}
\subsection{Theoretical Analysis}\label{ssec:theory}

\vspace{-5pt}
We provide a theoretical analysis of the convergence properties of \texttt{ADV-0} and establish a generalization bound that certifies the agent's performance in real-world long-tail distributions. Derivations and proofs are provided in \cref{sec:theory_appendix}.

\vspace{-10pt}
\paragraph{Convergence to Nash Equilibrium.}
The interaction between the defender and the attacker is modeled as a regularized zero-sum Markov game. 
Building on the finding that optimizing Eq.~\ref{eq:ipl_loss} recovers the optimal adversary solved for a Gibbs distribution, we prove that iterative updates constitute a contraction mapping on the value function space.

\begin{tcolorbox}[colback=myblue!8, colframe=myblue!8, boxrule=0pt, sharp corners=all, boxsep=0pt, left=5pt, right=5pt, top=5pt, bottom=5pt, before skip=2pt, after skip=2pt]
\begin{theorem}[Convergence to Nash Equilibrium]
The iterative updates in \texttt{ADV-0} converge to a unique fixed point corresponding to the Nash Equilibrium $(\pi^*, \psi^*)$ of the game. This point satisfies the saddle-point inequality $\mathcal{J}_{\tau}(\pi, \mathcal{G}_{\psi^*}) \leq \mathcal{J}_{\tau}(\pi^*, \mathcal{G}_{\psi^*}) \leq \mathcal{J}_{\tau}(\pi^*, \mathcal{G}_\psi)$ for all feasible policies, where $\mathcal{J}_{\tau}$ is the regularized objective.
\end{theorem}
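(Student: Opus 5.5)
We first make the regularized objective explicit. Set
\[
\mathcal{J}_\tau(\pi,\mathcal{G}) = \mathbb{E}_{X\sim\mathcal{D}}\Bigl[\mathbb{E}_{Y\sim\mathcal{G}(\cdot|X)}[J(\pi,Y)] + \tau\,\mathbb{D}_{\text{KL}}(\mathcal{G}(\cdot|X)\,\|\,\mathcal{G}_{\text{ref}}(\cdot|X))\Bigr].
\]
The defender maximizes $\mathcal{J}_\tau$ and the attacker minimizes it, which is exactly the inner problem in Eq.~\ref{eq:rl_equivalence} with the sign flipped.

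\textbf{Inner problem.} For fixed $\pi$, the attacker's problem is strictly convex in $\mathcal{G}$, because the KL term is strictly convex and the expected return is linear in $\mathcal{G}$. Its unique minimizer is the Gibbs distribution of Eq.~\ref{eq:factorization}. By the DPO equivalence of \citet{rafailov2023direct}, a minimizer of the IPL loss Eq.~\ref{eq:ipl_loss}, taken over a sufficiently rich preference dataset, recovers this same distribution. We therefore idealize each inner loop as an exact best response $\mathcal{G}_{\psi(\pi)}=\mathbb{P}^*_{\text{adv}}(\cdot|\pi)$.

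Substituting this best response gives the value in soft-min (log-sum-exp) form:
\[
\min_{\mathcal{G}}\mathcal{J}_\tau(\pi,\mathcal{G}) = -\tau\,\mathbb{E}_X \log \mathbb{E}_{Y\sim\mathcal{G}_{\text{ref}}}\bigl[\exp(-J(\pi,Y)/\tau)\bigr].
\]

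\textbf{Contraction on values.} Next we lift to the Markov game, treating the attacker as acting per state. We define the soft Bellman operator
\[
(\mathcal{T}V)(s) = \max_{\pi(\cdot|s)}\; -\tau\log \mathbb{E}_{y\sim\mathcal{G}_{\text{ref}}}\Bigl[\exp\bigl(-(\mathcal{R}(s,a) + \gamma\,\mathbb{E}V(s'))/\tau\bigr)\Bigr],
\]
with the expectation over $a\sim\pi$ handled inside. Two facts give the contraction. First, soft-min and max are both monotone. Second, each commutes with adding constants: $\operatorname{softmin}(x+c)=\operatorname{softmin}(x)+c$. These yield
\[
\|\mathcal{T}V-\mathcal{T}V'\|_\infty \le \gamma\,\|V-V'\|_\infty .
\]
This is the Shapley-style argument for zero-sum stochastic games, adapted to entropy regularization. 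Banach's fixed-point theorem then gives a unique $V^*$, and alternating updates with exact best responses converge to it geometrically.

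\textbf{Saddle point.} Define $\psi^*$ as the Gibbs best response to $\pi^*$, and $\pi^*$ as greedy with respect to $V^*$. The right inequality, $\mathcal{J}_\tau(\pi^*,\mathcal{G}_{\psi^*}) \le \mathcal{J}_\tau(\pi^*,\mathcal{G}_\psi)$, is immediate from Gibbs optimality. The left inequality requires exchanging max and min. We obtain it from Sion's minimax theorem: $\mathcal{J}_\tau$ is linear, hence concave, in the occupancy measure of $\pi$, and convex in $\mathcal{G}$. The feasible policy set is convex and, under the paper's assumption of finite or compact action spaces, compact. Equality of max-min and min-max, combined with uniqueness of the fixed point, identifies $(\pi^*,\psi^*)$ as the unique Nash equilibrium.

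\textbf{Main obstacle.} The hard step is justifying that the alternating, approximate scheme actually realizes the operator $\mathcal{T}$. This has two parts.

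First, we must show that $\mathcal{J}_\tau$ is concave in $\pi$. We would try to secure this by passing to occupancy measures. Expected return is linear in the occupancy measure, but the adversary's Gibbs reweighting depends on $\pi$ nonlinearly through $J(\pi,Y)$. This is why we keep the soft-min outside and rely on concavity of $-\log\mathbb{E}\exp(-\cdot)$ composed with a linear map.

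Second, the errors from the proxy reward and finite IPL optimization must be controlled. We treat these as bounded per-iteration errors $\varepsilon_k$. The standard perturbed-contraction bound then gives
\[
\limsup_k \|V_k - V^*\|_\infty \le \frac{\sup_k \varepsilon_k}{1-\gamma},
\]
which recovers exact convergence as $\varepsilon_k\to 0$.
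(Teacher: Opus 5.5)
Your contraction step matches the paper's proof. The paper also uses the Gibbs closed form to turn the inner minimization into a soft-min, gets the $\gamma$-contraction from non-expansiveness of max and soft-min (its Soft-Min lemma is exactly your monotonicity-plus-translation argument), and then applies Banach. Your variant, with $\mathbb{E}_{a\sim\pi}$ inside the soft-min, is if anything better posed for a simultaneous-move saddle point than the paper's $\max_a$ outside it, which lets the attacker react to the realized action.

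The paper never derives the saddle-point inequality; it asserts it by appeal to soft policy iteration results of Tessler et al.\ and Scherrer. Your attempt to derive it via Sion does not go through, because Sion needs a fixed product of convex sets on which $\mathcal{J}_\tau$ is concave--convex, and none is available here:
\begin{itemize}
\item In occupancy-measure coordinates, the defender's feasible set depends on $\mathcal{G}$, since the attacker chooses the dynamics. So there is no fixed product.
\item In policy coordinates, $\pi\mapsto\mathcal{J}_\tau(\pi,\mathcal{G})$ is not concave.
\item With a per-state attacker, $\mathcal{J}_\tau$ is not convex in $(\mathcal{G}(\cdot|s))_s$ either, because the attacker's choice at one state changes which states are visited.
\end{itemize}
Your proposed patch fails for the same reason. $J(\pi,Y)$ is linear in $d^\pi_Y$, which is a different occupancy measure for each $Y$, so there is no single linear map to compose with the soft-min.

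Convexity in $\mathcal{G}$ does hold for the trajectory-level $\mathcal{J}_\tau$ you wrote down (one $Y$ and one KL per context). However, that game's value is not the fixed point of your per-state operator. The inequality has to be stated for the per-step objective $\mathbb{E}[\sum_t\gamma^t(\mathcal{R}(s_t,a_t)+\tau\,\mathbb{D}_{\text{KL}}(\mathcal{G}(\cdot|s_t)\|\mathcal{G}_{\text{ref}}(\cdot|s_t)))]$, as in the paper's appendix.

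The missing idea is Shapley's statewise argument.
\begin{itemize}
\item At $V^*$, consider the one-shot game $f_s(p,q)=\mathbb{E}_{a\sim p,\,y\sim q}[\mathcal{Q}_{V^*}(s,a,y)]+\tau\,\mathbb{D}_{\text{KL}}(q\|\mathcal{G}_{\text{ref}}(\cdot|s))$. It is linear in $p$ on a compact simplex and strictly convex in $q$, so it has a value.
\item Take $\pi^*(\cdot|s)$ maximin and $\mathcal{G}^*(\cdot|s)$ its Gibbs best response. By strict convexity, this is the unique minimax strategy.
\item Then $V^*$ solves the Bellman optimality equation of the defender's MDP with $\mathcal{G}^*$ frozen, and also that of the attacker's KL-regularized MDP with $\pi^*$ frozen. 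These two facts are exactly the two inequalities.
\end{itemize}
Note that this gives a unique value, not a unique $\pi^*$.

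Second, your claim that ``alternating updates with exact best responses converge geometrically'' is false. Banach controls $V_{k+1}=\mathcal{T}V_k$, where each step solves the statewise game. Alternating full best responses is a different map, and it can cycle even with the KL on the attacker's side. Take a one-state game with matching-pennies payoff:
\begin{itemize}
\item the defender's best response to any non-uniform $\mathcal{G}$ is a pure action;
\item the Gibbs response to that action tilts toward the mismatching column;
\item so the iterates oscillate forever around the mixed equilibrium.
\end{itemize}
Your bound $\sup_k\varepsilon_k/(1-\gamma)$ is therefore a bound for approximate value iteration, not for the ADV-0 loop. Tying the loop to $\mathcal{T}$ needs a different defender step: either one that solves the statewise game against the current robust $Q$-function (Hoffman--Karp-type policy iteration), or incremental two-timescale updates, rather than a best response to a frozen $\mathcal{G}$. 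The paper handles this step only by citation, so you have located the real obstacle, but what you wrote does not resolve it.
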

\end{tcolorbox}

\textbf{Generalization to real-world long tail.} 
A core concern is whether robustness against a generated adversary $\mathcal{P}_\psi$ translates to safety in the real-world long-tail distribution $\mathcal{P}_{\text{real}}$.  
We model the real dynamics as an unknown distribution lying within the trust region of the traffic prior. We derive a certified lower bound on the expected return by measuring the discrepancy between two induced transition dynamics.

\begin{tcolorbox}[colback=myblue!8, colframe=myblue!8, boxrule=0pt, sharp corners=all, boxsep=0pt, left=5pt, right=5pt, top=5pt, bottom=5pt, before skip=2pt, after skip=2pt]
\begin{theorem}[Generalizability]\label{thm:bound_main}
Let $V_{\max}$ be the maximum of the value function, $\mathcal{P}_{\text{real}}$ be the real dynamics, and $\pi_\theta$ is trained under the adversarial dynamics $\mathcal{P}_\psi$ induced by $\mathcal{G}_\psi$. The performance of $\pi_\theta$ under $\mathcal{P}_{\text{real}}$ is bounded by:
\begin{equation*}
    J(\pi_\theta, \mathcal{P}_{\text{real}}) \ge J(\pi_\theta, \mathcal{P}_{\psi}) - \frac{\gamma V_{\max} \sqrt{2}}{1-\gamma} \sqrt{\mathbb{E} [D_{\text{KL}}(\mathcal{G}_\psi \| \mathcal{G}_{\text{ref}})]}.
\end{equation*}
\end{theorem}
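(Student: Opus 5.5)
The plan is a simulation-lemma argument combined with Pinsker's inequality, with the real dynamics modeled as lying in the trust region of the prior.

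\textbf{Step 1 (modeling).} Identify $\mathcal{P}_{\text{real}}$ with the dynamics induced by the traffic prior $\mathcal{G}_{\text{ref}}$. The statement says the real dynamics lie within the trust region of the prior; the cleanest instantiation is to take $\mathcal{P}_{\text{real}}=\mathcal{P}_{\text{ref}}$. Here $\mathcal{P}_\psi$ and $\mathcal{P}_{\text{ref}}$ share the ego action and differ only in how background agents move. So at each state $s$ and ego action $a$, the next-state kernel is the pushforward of the adversary's conditional distribution (given context/state) through a deterministic simulator step. By the data-processing inequality,
\[
D_{\text{KL}}\bigl(\mathcal{P}_\psi(\cdot|s,a)\,\|\,\mathcal{P}_{\text{real}}(\cdot|s,a)\bigr)\le D_{\text{KL}}\bigl(\mathcal{G}_\psi(\cdot|s)\,\|\,\mathcal{G}_{\text{ref}}(\cdot|s)\bigr).
\]

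\textbf{Step 2 (simulation lemma).} Let $V^\pi_{\psi}$ and $V^\pi_{\text{real}}$ be the value functions of $\pi_\theta$ under the two dynamics. Telescope via the Bellman equations:
\[
J(\pi,\mathcal{P}_\psi)-J(\pi,\mathcal{P}_{\text{real}})=\frac{\gamma}{1-\gamma}\,\mathbb{E}_{(s,a)\sim d^{\pi}_{\psi}}\Bigl[\mathbb{E}_{s'\sim\mathcal{P}_\psi}V^\pi_{\text{real}}(s')-\mathbb{E}_{s'\sim\mathcal{P}_{\text{real}}}V^\pi_{\text{real}}(s')\Bigr],
\]
where $d^\pi_\psi$ is the normalized discounted occupancy under $\mathcal{P}_\psi$. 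For finite $T$, the same argument applies with the factor $\frac{1}{1-\gamma}$ as an upper bound. Bound the inner difference by $2\|V\|_\infty\,\mathrm{TV}(\mathcal{P}_\psi,\mathcal{P}_{\text{real}})$. If $V$ is shifted to be centered, or its range is at most $V_{\max}$, this is $V_{\max}\|\mathcal{P}_\psi-\mathcal{P}_{\text{real}}\|_1$ up to the constant convention. I will fix the convention $\|V\|_\infty\le V_{\max}$ and use $|\mathbb{E}_pV-\mathbb{E}_qV|\le V_{\max}\|p-q\|_1$.

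\textbf{Step 3 (Pinsker and Jensen).} Pinsker gives $\|p-q\|_1\le\sqrt{2D_{\text{KL}}(p\|q)}$. Applying it per state and then Jensen's inequality (concavity of $\sqrt{\cdot}$) moves the expectation over $d^\pi_\psi$ inside the root:
\[
\mathbb{E}\sqrt{2D_{\text{KL}}}\le\sqrt{2\,\mathbb{E}[D_{\text{KL}}(\mathcal{G}_\psi\|\mathcal{G}_{\text{ref}})]}.
\]
Combining Steps 1–3 yields $J(\pi_\theta,\mathcal{P}_{\text{real}})\ge J(\pi_\theta,\mathcal{P}_\psi)-\frac{\gamma V_{\max}\sqrt2}{1-\gamma}\sqrt{\mathbb{E}[D_{\text{KL}}]}$, where the expectation is over the occupancy distribution of contexts/states under $(\pi_\theta,\mathcal{P}_\psi)$.

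\textbf{Main obstacle.} The hard part is the modeling step, not the inequalities.
\begin{itemize}
\item The KL bounding the \emph{real} dynamics must be the one to $\mathcal{G}_{\text{ref}}$, so I must either assume $\mathcal{P}_{\text{real}}$ is induced by $\mathcal{G}_{\text{ref}}$, or add a triangle-type slack term for the gap between $\mathcal{P}_{\text{real}}$ and $\mathcal{P}_{\text{ref}}$. I would state the assumption explicitly.
\item The direction of the KL matters: Pinsker is symmetric in TV, so $D_{\text{KL}}(\mathcal{G}_\psi\|\mathcal{G}_{\text{ref}})$ suffices.
\item The adversary generates whole trajectories $Y^{\text{Adv}}$ given $X$, not per-step kernels. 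The cleaner route is therefore a trajectory-level coupling: if the ego response is a measurable function of $(X,Y^{\text{Adv}},\text{ego noise})$, then data processing gives a trajectory-level TV bound directly. This gives a bound of at least the same form, since $\frac{\gamma}{1-\gamma}\ge$ the needed constant whenever $\gamma\ge1/2$; otherwise fall back to the per-step version.
\end{itemize}
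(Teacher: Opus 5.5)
Your proposal is correct and follows essentially the paper's route. The paper also uses the telescoping simulation lemma, bounds the per-step next-state value gap by the generator divergence pushed through the deterministic simulator, and then applies Pinsker plus Jensen, which gives exactly the constant $\frac{\gamma V_{\max}\sqrt{2}}{1-\gamma}$. It likewise identifies $\mathcal{P}_{\text{real}}$ with the $\mathcal{G}_{\text{ref}}$-induced dynamics (stated as a high-fidelity-approximation assumption), and your variations---the mirror form of the simulation lemma and data processing on KL in place of H\"older on TV---are immaterial.
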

\end{tcolorbox}
Theorem~\ref{thm:bound_main} implies that optimizing against the generated adversary maximizes a certified lower bound on the expected return in the real world.
The outer loop maximizes the robust return $J(\pi_\theta, \mathcal{P}_{\psi})$, while the inner loop minimizes the KL-divergence, ensuring that safety improvements in the adversarial domain transfer to open-world deployment.

%%%%%%%%%%%%%%%%%%%%%%%%%%%%%%%%%%%%%%%%%%%%%%%%%%%%%%%%%%%%%%%%%%

% \newpage
\vspace{-10pt}
\section{Experiments}
\vspace{-5pt}
We empirically evaluate \texttt{ADV-0} to answer three core questions: (1) Can \texttt{ADV-0} generates plausible yet long-tailed scenarios that effectively expose the vulnerabilities of driving agents? (2) Does the training process yield a robust policy that generalizes to diverse adversarial attacks? (3) Can the safety improvements observed in simulation transfer to real-world long-tailed events? 
All experiments are performed in MetaDrive simulator based on the WOMD.

\vspace{-10pt}
\subsection{Generating Safety-Critical Scenarios}

\vspace{-5pt}
\begin{figure}[h]
  \centering 
  \begin{minipage}[t]{0.49\columnwidth}
    \centering
    \includegraphics[width=\linewidth]{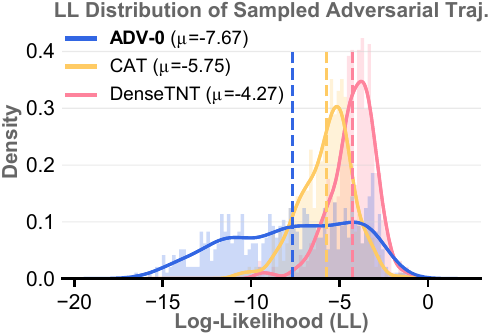}
    \vspace{-15pt}
    \caption{LL distribution of different adversarial generators.}
    \label{fig:ll}
  \end{minipage}
  \hfill 
  \begin{minipage}[t]{0.49\columnwidth}
    \centering
    \includegraphics[width=\linewidth]{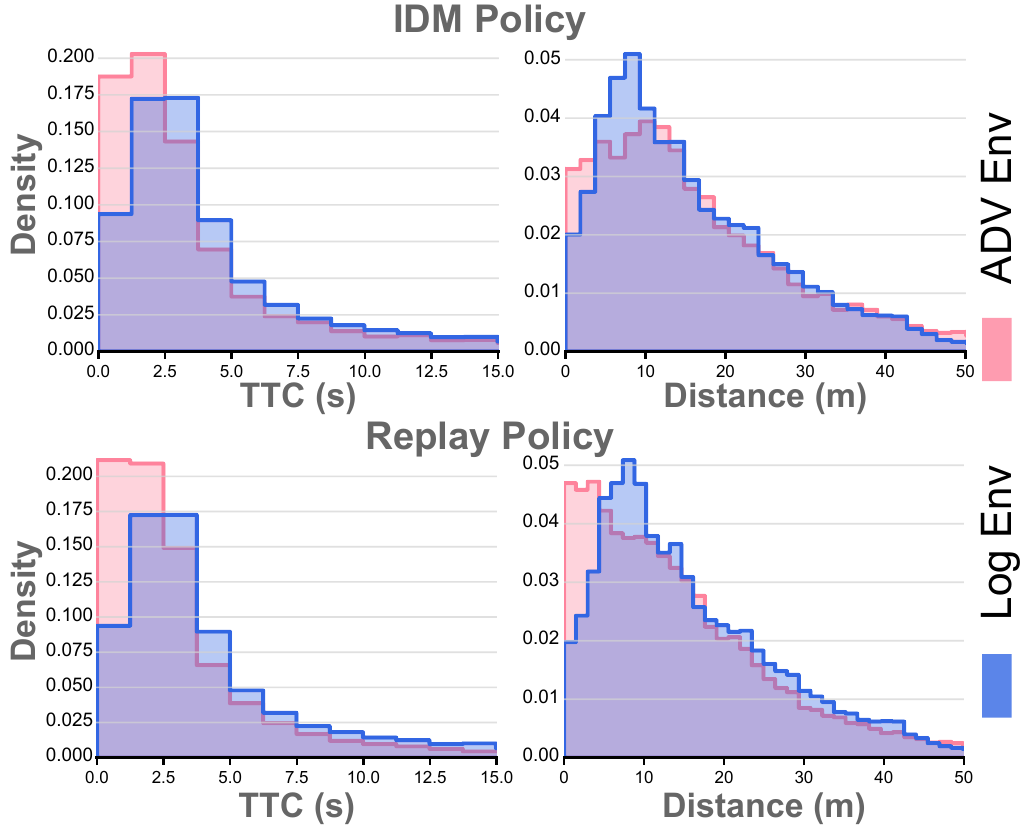}
    \vspace{-15pt}
    \caption{Scene-level distributions of B2B distance and TTC.}
    \label{fig:dist_metrics}
  \end{minipage}
\end{figure}

%%%%%%%%%%%%%%%%%%%%%%%%%%%%%%%%%%%%%%%%%%%%%%%%%%%%%%%%%%%%%%%%%%%%%

\vspace{-10pt}
\paragraph{Main results.} We first evaluate \texttt{ADV-0} in generating safety-critical scenarios against various ego policies. 
As presented in \cref{tab:scenario_gen}, \texttt{ADV-0} consistently outperforms competing baselines in exposing system vulnerabilities, especially for reactive policies such as IDM and RL. 
A detailed ablation in \cref{tab:scenario_gen_adv0} further highlights two findings: (1) The proposed energy-based sampling strategy (Eq.~\ref{eq:sampling}) is effective compared to the standard logit-based scheme.
(2) IPL-based fine-tuning, which is insensitive to RL, further refines the adversary. By fine-tuning the generator against specific ego policies, the adversary learns to exploit specific weaknesses.

\vspace{-5pt}
\begin{table}[h]
\centering
\caption{\textbf{Safety-critical scenario generation}. \textit{CR} denotes the collision rate of the ego in generated scenarios, and \textit{ER} denotes the ego's cumulative return. Ego is controlled by Replay, IDM, and trained PPO policies, respectively. Metrics of \texttt{ADV-0} are mean values across all training methods. See full results in \cref{tab:scenario_gen_adv0}.}
\vspace{-5pt}
\label{tab:scenario_gen}
\setlength{\tabcolsep}{3pt}
\renewcommand{\arraystretch}{1.1}
\begin{small}
\resizebox{0.95\columnwidth}{!}{%
\begin{tabular}{lcccccc}
\toprule
\multirow{2}{*}{\textbf{Adversary}} & \multicolumn{2}{c}{\textbf{Replay}} & \multicolumn{2}{c}{\textbf{IDM}} & \multicolumn{2}{c}{\textbf{RL Agent}} \\
\cmidrule(lr){2-3} \cmidrule(lr){4-5} \cmidrule(lr){6-7}
 & CR $\uparrow$ & ER $\downarrow$ & CR $\uparrow$ & ER $\downarrow$ & CR $\uparrow$ & ER $\downarrow$ \\
\midrule
Replay & - & - & 19.03\% & 51.75 & 16.80\% & 51.65 \\
Heuristic & 100.00\% & 0.00 & 74.70\% & 32.12 & 69.64\% & 24.90 \\
\midrule
CAT & 90.08\% & 1.03 & 43.13\% & 43.47 & 36.84\% & 42.26 \\
KING & 23.28\% & 47.67 & 24.49\% & 49.41 & 21.26\% & 49.32 \\
AdvTrajOpt & 69.64\% & 3.12 & 26.92\% & 45.36 & 28.95\% & 45.10 \\
GOOSE & 20.46\% & 48.52 & 24.48\% & 49.45 & 13.88\% & 51.80 \\
SAGE & 74.53\% & 2.57 & 36.50\% & 43.81 & 35.40\% & 42.87 \\
SEAL & 59.06\% & 8.58 & 36.70\% & 43.75 & 37.63\% & 41.99 \\
\midrule
% \textbf{ADV-0} & $\mathbf{91.10 \pm 0.57\%}$ & $\mathbf{0.99 \pm 0.06}$ & $\mathbf{45.83 \pm 0.42\%}$ & $\mathbf{40.03 \pm 0.14}$ & $\mathbf{40.68 \pm 0.59\%}$ & $\mathbf{39.13 \pm 0.47}$ \\
\rowcolor{myblue!20}\textbf{ADV-0 (ours)} & $\textbf{91.10\%}$ & $\textbf{0.99}$ & $\textbf{45.83\%}$ & $\textbf{40.03}$ & $\textbf{40.68\%}$ & $\textbf{39.13}$ \\
\bottomrule
\end{tabular}%
}
\end{small}
\end{table}

%%%%%%%%%%%%%%%%%%%%%%%%%%%%%%%%%%%%%%%%%%%%%%%%%%%%%%%%%%%%%%%%%%%%%%%%%%%%%%%

\vspace{-10pt}
\paragraph{Distribution of the long tail.} \texttt{ADV-0} can navigate the long-tail distribution of scenarios. 
\cref{fig:ll} illustrates the log-likelihood (LL) distribution of the sampled adversarial trajectories. 
Compared to CAT and pretrained prior, the distribution of \texttt{ADV-0} is shifted towards lower likelihood and exhibits a wider variance. This indicates that it uncovers rare but plausible, \textit{behavior-level} events that are typically ignored by standard priors. \cref{fig:dist_metrics} shows the \textit{scene-level} statistics. \texttt{ADV-0} produces notably lower Time-to-Collision (TTC) and closer Bumper-to-Bumper (B2B) distances compared to the naturalistic data. 
Crucially, this aggressiveness does not compromise physical plausibility. As shown in \cref{fig:real_pen}, \texttt{ADV-0} maintains a comparable realism penalty.

\vspace{-5pt}
\begin{figure}[h]
  \begin{center}
    \centerline{\includegraphics[width=0.9\columnwidth]{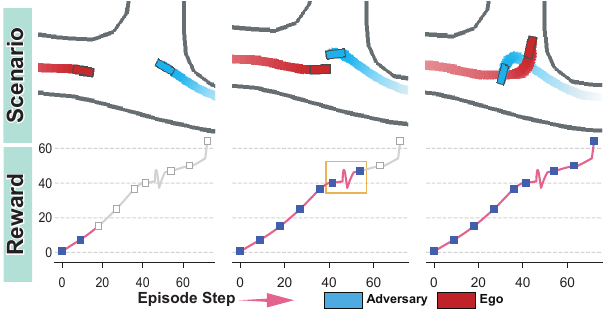}}
    \vspace{-5pt}
    \caption{Examples of generated reward-reduced adversarial scenarios from \texttt{ADV-0}. Additional cases are shown in \cref{fig:case-reward-appendix}.}
    \label{fig:case-reward-main}
  \end{center}
\end{figure}
\vspace{-5pt}

\vspace{-15pt}
\paragraph{Beyond collision.} \texttt{ADV-0} directly targets the ego's return, allowing for the discovery of diverse failure modes beyond crashes. 
As shown in \cref{fig:case-reward-main} and \cref{fig:case-reward-appendix}, the adversary can force the ego into abnormal behaviors and non-collision failures, such as stalling at intersections or deviating from reference paths.
These behaviors result in a drastic drop in accumulated reward (e.g., lack of progress or discomfort penalties) even if a collision is avoided, which are critical performance risks often ignored by collision-centric attacks.

%%%%%%%%%%%%%%%%%%%%%%%%%%%%%%%%%%%%%%%%%%%%%%%%%%%%%%%%%%%%%%%%%%%%%%%
%%%%%%%%%%%%%%%%%%%%%%%%%%%%%%%%%%%%%%%%%%%%%%%%%%%%%%%%%%%%%%%%%%%%%%%
\vspace{-10pt}
\begin{table}[h]
\centering
\caption{\textbf{Performance validation of learned policies}. Results are averaged across 6 RL methods (GRPO, PPO, PPO-Lag, SAC, SAC-Lag, TD3). This table compares the performances of agents learned by different adversarial methods and evaluate on different generated scenarios. Full results are shown in Tables~\ref{tab:agent_grpo},~\ref{tab:agent_ppo},~\ref{tab:agent_ppo_lag},~\ref{tab:agent_sac},~\ref{tab:agent_saclag},~\ref{tab:agent_td3}.}
\vspace{-5pt}
\label{tab:agent_average_all}
\setlength{\tabcolsep}{3pt}
\renewcommand{\arraystretch}{1.1}
% \begin{small}
\resizebox{0.95\columnwidth}{!}{%
\begin{tabular}{llcccc}
\toprule
 & \textbf{Val. Env.} & {RC} $\uparrow$ & {Crash} $\downarrow$ & {Reward} $\uparrow$ & {Cost} $\downarrow$\\ 
\midrule
\multirow{6}{*}{\rotatebox{90}{\textbf{ADV-0 (w/ IPL)}}} 
 & Replay             & 0.742 $\pm$ 0.011    & 0.159 $\pm$ 0.019    & 49.56 $\pm$ 1.05     & 0.480 $\pm$ 0.013 \\
 & ADV-0              & 0.695 $\pm$ 0.011    & 0.289 $\pm$ 0.029    & 44.60 $\pm$ 0.83     & 0.598 $\pm$ 0.022 \\
 & CAT                & 0.704 $\pm$ 0.011    & 0.271 $\pm$ 0.025    & 45.68 $\pm$ 1.16     & 0.585 $\pm$ 0.025 \\
 & SAGE               & 0.699 $\pm$ 0.013    & 0.263 $\pm$ 0.025    & 45.19 $\pm$ 1.54     & 0.567 $\pm$ 0.028 \\
 & Heuristic               & 0.710 $\pm$ 0.016    & 0.217 $\pm$ 0.021    & 45.41 $\pm$ 2.08     & 0.552 $\pm$ 0.035 \\
 \cmidrule(l){2-6}
 & \cellcolor{myblue!30}\textbf{Avg.}       & \cellcolor{myblue!30}\textbf{0.710 $\pm$ 0.012}    & \cellcolor{myblue!30}\textbf{0.240 $\pm$ 0.024}    & \cellcolor{myblue!30}\textbf{46.09 $\pm$ 1.33}     & \cellcolor{myblue!30}\textbf{0.556 $\pm$ 0.027} \\ 
\midrule
\multirow{6}{*}{\rotatebox{90}{ADV-0 (w/o IPL)}} 
 & Replay             & 0.719 $\pm$ 0.020    & 0.167 $\pm$ 0.024    & 46.41 $\pm$ 1.97     & 0.540 $\pm$ 0.023 \\
 & ADV-0              & 0.664 $\pm$ 0.018    & 0.317 $\pm$ 0.030    & 41.03 $\pm$ 1.87     & 0.657 $\pm$ 0.020 \\
 & CAT                & 0.677 $\pm$ 0.017    & 0.299 $\pm$ 0.035    & 42.30 $\pm$ 1.68     & 0.643 $\pm$ 0.030 \\
 & SAGE               & 0.672 $\pm$ 0.024    & 0.270 $\pm$ 0.028    & 42.17 $\pm$ 2.38     & 0.610 $\pm$ 0.038 \\
 & Heuristic               & 0.679 $\pm$ 0.022    & 0.239 $\pm$ 0.041    & 41.78 $\pm$ 2.08     & 0.612 $\pm$ 0.040 \\
 \cmidrule(l){2-6}
 & \cellcolor{myblue!15}\textbf{Avg.}       & \cellcolor{myblue!15}\textbf{0.683 $\pm$ 0.020}    & \cellcolor{myblue!15}\textbf{0.258 $\pm$ 0.031}    & \cellcolor{myblue!15}\textbf{42.74 $\pm$ 2.00}     & \cellcolor{myblue!15}\textbf{0.613 $\pm$ 0.030} \\ 
\midrule
\multirow{6}{*}{\rotatebox{90}{CAT}} 
 & Replay             & 0.720 $\pm$ 0.014    & 0.183 $\pm$ 0.026    & 46.52 $\pm$ 1.15     & 0.528 $\pm$ 0.015 \\
 & ADV-0              & 0.660 $\pm$ 0.018    & 0.332 $\pm$ 0.043    & 40.70 $\pm$ 0.97     & 0.660 $\pm$ 0.020 \\
 & CAT                & 0.667 $\pm$ 0.017    & 0.313 $\pm$ 0.032    & 41.30 $\pm$ 1.48     & 0.652 $\pm$ 0.018 \\
 & SAGE               & 0.660 $\pm$ 0.021    & 0.307 $\pm$ 0.025    & 41.41 $\pm$ 2.09     & 0.628 $\pm$ 0.017 \\
 & Heuristic               & 0.676 $\pm$ 0.021    & 0.261 $\pm$ 0.034    & 41.72 $\pm$ 1.69     & 0.605 $\pm$ 0.023 \\
 \cmidrule(l){2-6}
 & \textbf{Avg.}       & \textbf{0.676 $\pm$ 0.018}    & \textbf{0.279 $\pm$ 0.032}    & \textbf{42.33 $\pm$ 1.47}     & \textbf{0.614 $\pm$ 0.019} \\ 
\midrule
\multirow{6}{*}{\rotatebox{90}{Heuristic}} 
 & Replay             & 0.682 $\pm$ 0.032    & 0.201 $\pm$ 0.018    & 42.75 $\pm$ 3.04     & 0.592 $\pm$ 0.037 \\
 & ADV-0              & 0.637 $\pm$ 0.047    & 0.331 $\pm$ 0.021    & 39.09 $\pm$ 2.12     & 0.678 $\pm$ 0.028 \\
 & CAT                & 0.650 $\pm$ 0.021    & 0.311 $\pm$ 0.020    & 40.11 $\pm$ 2.38     & 0.668 $\pm$ 0.033 \\
 & SAGE               & 0.642 $\pm$ 0.042    & 0.314 $\pm$ 0.019    & 39.53 $\pm$ 2.58     & 0.658 $\pm$ 0.025 \\
 & Heuristic               & 0.641 $\pm$ 0.030    & 0.279 $\pm$ 0.016    & 38.81 $\pm$ 2.61     & 0.655 $\pm$ 0.028 \\
 \cmidrule(l){2-6}
 & \textbf{Avg.}       & \textbf{0.650 $\pm$ 0.034}    & \textbf{0.287 $\pm$ 0.019}    & \textbf{40.06 $\pm$ 2.54}     & \textbf{0.649 $\pm$ 0.032} \\ 
\midrule
\multirow{6}{*}{\rotatebox{90}{Replay}} 
 & Replay             & 0.692 $\pm$ 0.035    & 0.209 $\pm$ 0.030    & 42.93 $\pm$ 1.32     & 0.588 $\pm$ 0.025 \\
 & ADV-0              & 0.622 $\pm$ 0.038    & 0.374 $\pm$ 0.038    & 36.80 $\pm$ 2.48     & 0.700 $\pm$ 0.020 \\
 & CAT                & 0.642 $\pm$ 0.040    & 0.368 $\pm$ 0.037    & 38.26 $\pm$ 1.83     & 0.683 $\pm$ 0.030 \\
 & SAGE               & 0.625 $\pm$ 0.040    & 0.339 $\pm$ 0.040    & 37.68 $\pm$ 1.84     & 0.660 $\pm$ 0.032 \\
 & Heuristic               & 0.638 $\pm$ 0.031    & 0.310 $\pm$ 0.034    & 37.80 $\pm$ 2.41     & 0.672 $\pm$ 0.015 \\
 \cmidrule(l){2-6}
 & \textbf{Avg.}       & \textbf{0.644 $\pm$ 0.037}    & \textbf{0.320 $\pm$ 0.036}    & \textbf{38.70 $\pm$ 1.97}     & \textbf{0.661 $\pm$ 0.028} \\ 
\bottomrule
\end{tabular}%
}
% \end{small}
\end{table}
\vspace{-5pt}
%%%%%%%%%%%%%%%%%%%%%%%%%%%%%%%%%%%%%%%%%%%%%%%%%%%%%%%%%%%%%%%%%%%%%%%%%%%%%%%

\vspace{-10pt}
\subsection{Learning Generalizable Driving Policies}
\label{subsec:exp_generalization}

\vspace{-5pt}
\paragraph{Generalization to unseen adversaries.} 
A core challenge in adversarial RL is overfitting to a specific adversary, which limits generalization to unseen risks. We conduct a cross-validation where agents trained via different methods are tested across a spectrum of environments. 
\cref{tab:agent_average_all} reports the performance averaged across six RL algorithms.
We observe that agents trained with \texttt{ADV-0} (w/ IPL) consistently achieve the best results across all metrics. 
While baselines often perform well against their own attacks, they degrade when applied to unseen adversarial distributions. In contrast, \texttt{ADV-0} maintains consistent generalizability. 
% For example, \texttt{ADV-0} agent outperforms the \texttt{CAT} agent even when evaluated within the \texttt{CAT} generated environment, and similarly outperforms the \texttt{SAGE} agent on \texttt{SAGE} scenarios. 
The comparison between \texttt{ADV-0} (w/o IPL) and CAT indicates the benefit of directly aligning the GAU with the ego's objective. By actively exploring the long tail via IPL, \texttt{ADV-0} enables generalized robustness to handle unseen risks.

\paragraph{Impact of IPL.} To isolate the gain provided by IPL, we compare the performance of agents and adversaries trained with and without IPL in \cref{tab:cross_val_adv0}. 
The inclusion of IPL in the adversary creates a more challenging environment, evidenced by the drop in agent rewards. However, the agent trained with full \texttt{ADV-0} becomes more robust when facing the stronger adversary. This confirms that the dynamic evolution of the adversary via IPL forces the defender to cover broader vulnerabilities.
Qualitative evidence of this dynamic evolution is shown in \cref{fig:ll_shift}.
Finally, we study the sample efficiency and overfitting risks regarding the training budget. As shown in \cref{fig:train_test_gap}, the gap between training and testing performance narrows with IPL, indicating that active discovery of diverse failures effectively mitigates the risk of overfitting to limited training data.
Detailed learning curves and results are provided in Figures~\ref{fig:td3_curves}-\ref{fig:grpo_curves} and Tables~\ref{tab:agent_grpo}-\ref{tab:agent_td3}.

\vspace{-5pt}
\begin{table}[h]
\centering
\caption{\textbf{Cross-validation of} \texttt{ADV-0}. Performances of agents and adversaries with/without IPL. Decrease indicates the percentage change in performance when facing an IPL-enhanced adversary compared to the baseline. Improvement indicates the percentage change when the agent is trained with IPL compared to the other.}
\vspace{-5pt}
\label{tab:cross_val_adv0}
\setlength{\tabcolsep}{5pt}
\renewcommand{\arraystretch}{1.1}
\begin{small}
\resizebox{0.95\columnwidth}{!}{%
\begin{tabular}{lccc}
\toprule
\multirow{3}{*}{\textbf{Agent/Adversary}} & \multicolumn{3}{c}{{Reward} ($\uparrow$)}  \\
\cmidrule(lr){2-4} 
 & \begin{tabular}[c]{@{}c@{}}Adversary\\ (w/o IPL)\end{tabular} & \begin{tabular}[c]{@{}c@{}}Adversary\\ (w/ IPL)\end{tabular} & Decrease  \\
\midrule
Agent w/o IPL & $41.03 \pm 1.87$ & $39.01 \pm 0.97$ & \cellcolor{myyellow!40}$-4.92\%$  \\
Agent w/ IPL & $44.60 \pm 0.83$ & $43.47 \pm 1.40$ & \cellcolor{myyellow!20}$-2.53\%$  \\
\midrule
Improvement & \cellcolor{myteal!20}$+8.70\%$ & \cellcolor{myteal!40}$+11.43\%$  \\
%%%%%%%%%%%%%%%%%%%%%%%%%%%%%%%%%%%%%%%%%%%%%%%%%%%%%%%%%%%%%%%%%%%%%%%%%%
\midrule
\multirow{3}{*}{\textbf{Agent/Adversary}} & \multicolumn{3}{c}{{Cost} ($\downarrow$)} \\
\cmidrule(lr){2-4} 
 & \begin{tabular}[c]{@{}c@{}}Adversary\\ (w/o IPL)\end{tabular} & \begin{tabular}[c]{@{}c@{}}Adversary\\ (w/ IPL)\end{tabular} & Decrease \\
\midrule
Agent w/o IPL & $0.657 \pm 0.020$ & $0.685 \pm 0.016$ &\cellcolor{myyellow!40} $+4.26\%$ \\
Agent w/ IPL & $0.598 \pm 0.022$ & $0.615 \pm 0.025$ & \cellcolor{myyellow!20}$+2.84\%$ \\
\midrule
Improvement  & \cellcolor{myteal!20}$-8.98\%$ & \cellcolor{myteal!40}$-10.22\%$ & -- \\
\bottomrule
\end{tabular}%
}
\end{small}
\end{table}
\vspace{-5pt}

%%%%%%%%%%%%%%%%%%%%%%%%%%%%%%%%%%%%%%%%%%%%%%%%%%%%%%%%%%%%%%%%%%%%%%%%%%%%%%%

% \vspace{-5pt}
\begin{figure}[h]
  \begin{center}
    \centerline{\includegraphics[width=0.95\columnwidth]{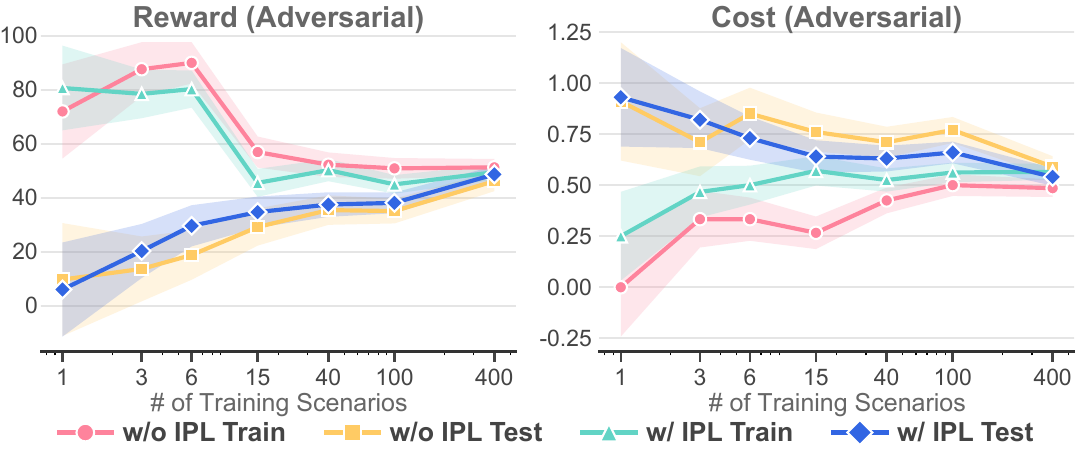}}
    \vspace{-5pt}
    \caption{Generalization gap over different training budgets.}
    \label{fig:train_test_gap}
  \end{center}
\end{figure}
\vspace{-5pt}

%%%%%%%%%%%%%%%%%%%%%%%%%%%%%%%%%%%%%%%%%%%%%%%%%%%%%%%%%%%%%%%%%%%%%%%%%%%%%%%

\begin{table*}[t]
\centering
\caption{\textbf{Robustness on mined real-world long-tailed sets.} Average agent performance on four long-tail scenario categories filtered by criteria: \textit{Critical TTC} ($\min \text{TTC} < 0.4s$), \textit{Critical PET} ($\text{PET} < 1.0s$), \textit{Hard Dynamics} (Acc. $< -4.0 m/s^2$ or $|\text{Jerk}| > 4.0 m/s^3$), and \textit{Rare Cluster} (topologically sparse trajectory clusters). 
% \textbf{Reactive Traffic} denotes whether background vehicles utilize IDM/MOBIL policies to interact with the agent ($\checkmark$) or strictly follow logged trajectories ($\times$). 
Metrics assess average \textit{Safety Margin} (higher values indicate earlier risk detection), \textit{Stability \& Comfort} (lower Jerk indicates smoother control), and \textit{Defensive Driving}, quantified by Near-Miss Rate (proximity without collision) and RDP Violation (percentage of time requiring deceleration $> 6 m/s^2$ to avoid collision). Full results are shown in \cref{tab:long_tail}.}
\vspace{-5pt}
\label{tab:long_tail_avg}
\setlength{\tabcolsep}{2.5pt}
\renewcommand{\arraystretch}{1.1}
\begin{small}
\resizebox{0.95\textwidth}{!}{%
\begin{tabular}{lccccccc}
\toprule
\multirow{2}{*}{\rotatebox{90}{}} & \multirow{2}[3]{*}{\makecell{\textbf{Reactive}\\\textbf{Traffic}}} & \multicolumn{2}{c}{\textbf{Safety Margin}} & \multicolumn{2}{c}{\textbf{Stability \& Comfort}} & \multicolumn{2}{c}{\textbf{Defensive Driving}} \\
\cmidrule(lr){3-4} \cmidrule(lr){5-6} \cmidrule(lr){7-8} 
 & & \makecell{{Avg Min-TTC} ($\uparrow$)} & \makecell{{Avg Min-PET} ($\uparrow$)} & \makecell{{Mean Abs Jerk} ($\downarrow$)} & \makecell{{95\% Jerk} ($\downarrow$)} & \makecell{{Near-Miss Rate} ($\downarrow$)} & \makecell{{RDP Violation Rate} ($\downarrow$)} \\
\midrule
\multirow{2}{*}{\textbf{ADV-0 (w/ IPL)}} 
 & $\checkmark$ & $\cellcolor{myblue!20}{\boldsymbol{0.993 \pm 0.189}}$ & \cellcolor{myblue!20}${\boldsymbol{1.150 \pm 0.317}}$ & \cellcolor{myblue!20}${\boldsymbol{1.653 \pm 0.169}}$ & \cellcolor{myblue!20}${\boldsymbol{5.053 \pm 0.885}}$ & \cellcolor{myblue!20}${\boldsymbol{63.45\% \pm 4.34\%}}$ & \cellcolor{myblue!20}${\boldsymbol{33.70\% \pm 3.77\%}}$ \\
 & $\times$ & \cellcolor{myblue!20}${{1.527 \pm 0.182}}$ & \cellcolor{myblue!20}${\boldsymbol{1.103 \pm 0.313}}$ & \cellcolor{myblue!20}${\boldsymbol{1.617 \pm 0.204}}$ & \cellcolor{myblue!20}${\boldsymbol{5.177 \pm 0.953}}$ & \cellcolor{myblue!20}${\boldsymbol{60.97\% \pm 3.94\%}}$ & \cellcolor{myblue!20}${\boldsymbol{36.45\% \pm 2.85\%}}$ \\
\midrule
% \multirow{2}{*}{{ADV-0 (w/o IPL)}} 
%  & $\checkmark$ & ${0.806 \pm 0.184}$ & ${0.907 \pm 0.300}$ & ${2.074 \pm 0.205}$ & ${6.014 \pm 1.103}$ & ${79.26\% \pm 5.14\%}$ & ${48.06\% \pm 4.33\%}$ \\
%  & $\times$ & ${1.291 \pm 0.598}$ & ${0.873 \pm 0.333}$ & ${2.018 \pm 0.258}$ & ${6.144 \pm 1.090}$ & ${76.90\% \pm 4.90\%}$ & ${52.26\% \pm 3.46\%}$ \\
% \midrule
\multirow{2}{*}{{CAT}} 
 & $\checkmark$ & ${0.825 \pm 0.192}$ & ${1.017 \pm 0.317}$ & ${1.875 \pm 0.229}$ & ${5.422 \pm 0.833}$ & ${83.04\% \pm 5.99\%}$ & ${44.80\% \pm 6.12\%}$ \\
 & $\times$ & ${1.415 \pm 0.462}$ & ${0.980 \pm 0.327}$ & ${1.979 \pm 0.193}$ & ${6.002 \pm 0.931}$ & ${75.36\% \pm 7.86\%}$ & ${51.20\% \pm 4.74\%}$ \\
\midrule
\multirow{2}{*}{{Heuristic}} 
 & $\checkmark$ & ${0.864 \pm 0.200}$ & $\boldsymbol{1.150 \pm 0.467}$ & ${2.129 \pm 0.237}$ & ${6.542 \pm 0.942}$ & ${74.46\% \pm 5.98\%}$ & ${48.79\% \pm 3.65\%}$ \\
 & $\times$ & $\boldsymbol{1.601 \pm 0.538}$ & ${1.093 \pm 0.450}$ & ${2.199 \pm 0.218}$ & ${6.778 \pm 1.066}$ & ${68.82\% \pm 9.05\%}$ & ${52.10\% \pm 1.77\%}$ \\
\midrule
\multirow{2}{*}{{Replay}} 
 & $\checkmark$ & ${0.587 \pm 0.185}$ & ${0.683 \pm 0.217}$ & ${2.779 \pm 0.431}$ & ${8.265 \pm 1.718}$ & ${89.53\% \pm 4.30\%}$ & ${69.73\% \pm 5.27\%}$ \\
 & $\times$ & ${0.978 \pm 0.462}$ & ${0.730 \pm 0.233}$ & ${2.720 \pm 0.383}$ & ${8.104 \pm 1.537}$ & ${85.13\% \pm 4.58\%}$ & ${68.50\% \pm 4.68\%}$ \\
\bottomrule
\end{tabular}%
}
\end{small}
\end{table*}

\vspace{-10pt}
\subsection{Improving Policy Robustness in the Long Tail}
\label{subsec:long_tail_exp}

\vspace{-5pt}
\paragraph{Main results.}
While \texttt{ADV-0} has presented robustness against generated adversaries, it is crucial to verify whether this performance generalizes to real-world, naturally occurring long-tail events. To this end, we curate unbiased held-out sets mined from real-world WOMD logs, including only high-risk scenarios categorized by extreme safety criticality (e.g., critical TTC, PET) and semantic rarity (e.g., rare behaviors). 
As shown in \cref{tab:long_tail_avg} (detailed in \cref{tab:long_tail}), agents trained via \texttt{ADV-0} demonstrate superior zero-shot robustness compared to baselines.
It achieves the highest safety margins and stability scores, significantly reducing the rates of near misses and RDP violations, indicating that the agent learns to anticipate risks before they become critical, rather than merely reacting to emergencies. 
Examples in Figures~\ref{fig:case-main} and \ref{fig:case-appendix} further visualize these defensive behaviors: \texttt{ADV-0} agent proactively yields to aggressive cut-ins and navigates sudden occlusions where baseline agents fail.

\vspace{-5pt}
\begin{figure}[h]
  \begin{center}
    \centerline{\includegraphics[width=0.8\columnwidth]{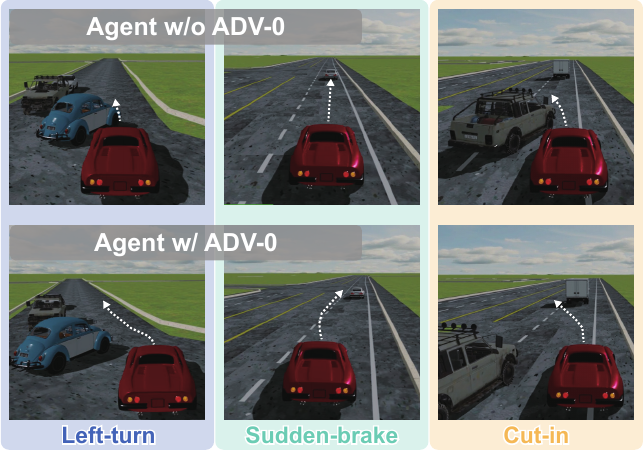}}
    \vspace{-5pt}
    \caption{Visualization of improved safe driving ability after being trained with \texttt{ADV-0}. More examples are shown in ~\cref{fig:case-appendix}.}
    \label{fig:case-main}
  \end{center}
\end{figure}

%%%%%%%%%%%%%%%%%%%%%%%%%%%%%%%%%%%%%%%%%%%%%%%%%%%%%%%%%%%%%%%%%%%%%%%%%%%%%%%

\vspace{-10pt}
\subsection{Algorithmic Analysis}

\vspace{-5pt}
\paragraph{Applications to motion planners.} To show the generality of \texttt{ADV-0}, we extend our evaluation beyond RL agents to two kinds of SOTA learning-based trajectory planners: PlanTF \cite{cheng2024rethinking} (multimodal scoring) and SMART \cite{wu2024smart} (autoregressive generation). 
% Unlike end-to-end RL policies that output immediate control actions, they generate future plans, which are then executed via a low-level controller. 
As shown in Tables~\ref{tab:planner_results_avg} and \ref{tab:planner_results_all}, adversarial fine-tuning via \texttt{ADV-0} yields consistent improvements for both architectures. 
We further analyze the internal behavior of the planners using the trajectory-level breakdown in \cref{tab:planner_breakdown}. The results indicate that fine-tuned models learn to prioritize safety constraints significantly more than pretrained priors. This safety improvement comes with a trade-off in efficiency. 
Interestingly, we observe that the performance of these planners remains slightly lower than RL agents discussed previously. 
We attribute this to two factors: (1) the covariate shift in behavior cloning models~\cite{karkus2025beyond}, where they struggle to recover when the adversary forces it into out-of-distribution states; and (2) the latency introduced by the re-planning horizon. Unlike end-to-end RL policies that output immediate control actions, they generate a future trajectory executed by a controller. This delayed reaction limits the ability to react instantaneously to aggressive attacks.

%%%%%%%%%%%%%%%%%%%%%%%%%%%%%%%%%%%%%%%%%%%%%%%%%%%%%%%%%%%%%%%%%%%%%%%%%%%%%%%
\vspace{-5pt}
\begin{table}[htb]
\centering
\caption{\textbf{Application to learning-based planners}.
Performance comparisons of two SOTA trajectory planning models before and after fine-tuning using \texttt{ADV-0} (GRPO). See \cref{tab:planner_results_all} for details.}
\vspace{-5pt}
\label{tab:planner_results_avg}
\setlength{\tabcolsep}{2.5pt}
\renewcommand{\arraystretch}{1.1}
% \begin{small}
\resizebox{0.95\columnwidth}{!}{%
\begin{tabular}{l|cccc}
\toprule
\textbf{Model} & {RC $\uparrow$} & {Crash $\downarrow$} & {Reward $\uparrow$} & {Cost $\downarrow$} \\ 
\midrule
{PlanTF}  
& {0.628 $\pm$ 0.025} & {0.357 $\pm$ 0.031} & {35.85 $\pm$ 2.51} & {1.04 $\pm$ 0.04}   \\ 
\midrule
\cellcolor{myblue!20}{+ ADV-0} 
& {0.674 $\pm$ 0.016} & {0.263 $\pm$ 0.025} & {41.98 $\pm$ 1.62} & {0.77 $\pm$ 0.02} \\ 
\midrule
\textbf{Rel. Change} & \cellcolor{myteal!15}\textbf{+7.46\%} & \cellcolor{myteal!30}\textbf{-26.23\%} & \cellcolor{myteal!20}\textbf{+17.11\%} & \cellcolor{myteal!25}\textbf{-25.68\%} \\
\cmidrule[0.9pt]{1-5}
{SMART}
& {0.587 $\pm$ 0.029} & {0.396 $\pm$ 0.034} & {32.66 $\pm$ 2.85} & {1.15 $\pm$ 0.05} \\  
\midrule
\cellcolor{myblue!20}{+ ADV-0}
& {0.631 $\pm$ 0.015} & {0.305 $\pm$ 0.023} & {37.85 $\pm$ 1.70} & {0.92 $\pm$ 0.02} \\ 
\midrule
\textbf{Rel. Change} & \cellcolor{myteal!15}\textbf{+7.57\%} & \cellcolor{myteal!30}\textbf{-22.88\%} & \cellcolor{myteal!20}\textbf{+15.86\%} & \cellcolor{myteal!25}\textbf{-20.31\%} \\ 
\bottomrule
\end{tabular}%
}
% \end{small}
\end{table}
\vspace{-5pt}

% \vspace{-5pt}
\paragraph{Impact of temperature parameter.}
We study the sensitivity of the sampling temperature $\tau$ in Eq.~\ref{eq:sampling}, which modulates the trade-off between adversarial exploitation and exploration. \cref{fig:ablation_temperature} illustrates an obvious trade-off: (1) At extremely low temperatures ($\tau \to 0$), the sampling degenerates into a deterministic hard mode. This leads to suboptimal performance, likely because an overly aggressive attacker overwhelms the defender early in training, without learning generalized robustness.
(2) Conversely, high values ($\tau=5.0$) reduce the adversarial signal, degrading \texttt{ADV-0} to domain randomization. The attacker fails to consistently expose weaknesses. The results indicate that a moderate value ($\tau=0.1$) achieves the best balance.
It introduces sufficient stochasticity to cover diverse long-tail distribution while maintaining enough focus to prioritize high-risk regions, effectively forming an automatic regularization.

%%%%%%%%%%%%%%%%%%%%%%%%%%%%%%%%%%%%%%%%%%%%%%%%%%%%%%%%%%%%%%%%%%%%%%%%%%%%%%%

\vspace{-5pt}
\begin{figure}[h]
  \begin{center}
    \centerline{\includegraphics[width=1\columnwidth]{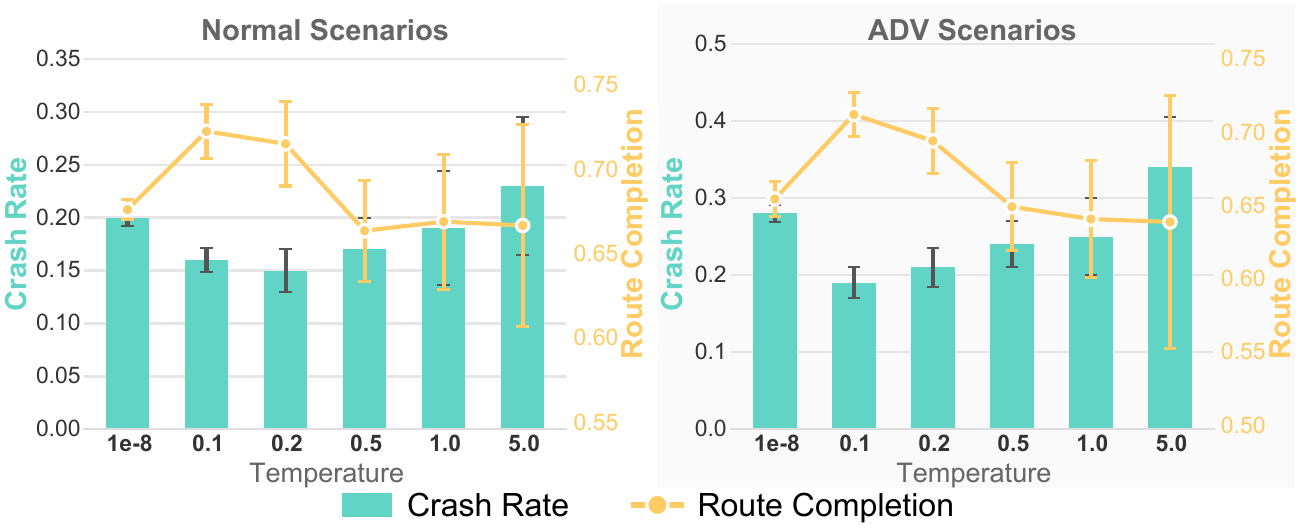}}
    \vspace{-5pt}
    \caption{Impacts of the temperature parameter in sampling.}
    \label{fig:ablation_temperature}
  \end{center}
\end{figure}
\vspace{-5pt}

\vspace{-10pt}
\paragraph{Ablation on return estimator.}
The inner loop relies on the quality of the return estimator $\hat{J}$ used to label preferences. We compare our rule-based proxy against three baselines: \textit{GTReward} (oracle simulation), \textit{Experience} (retrieval from history), and \textit{RewardModel} (learnable neural network). 
\Cref{fig:reward_model} reports the IPL training curves. 
Surprisingly, the rule-based proxy achieves low and stable preference loss comparable to the oracle, outperforming other estimators. 
\cref{fig:proxy_correlation} measures a strong Spearman correlation of $\rho = 0.77$ between the proxy estimates and oracle returns.
This suggests that the geometric proxy provides a high-fidelity and low-variance ranking signal, which is sufficient for adversarial sampling. 
Future work may explore using the Q-network from an Actor-Critic architecture for value estimation to handle more complex interactions.

%%%%%%%%%%%%%%%%%%%%%%%%%%%%%%%%%%%%%%%%%%%%%%%%%%%%%%%%%%%%%%%%%%%%%%%%%%%%%%%
\vspace{-5pt}
\begin{figure}[h]
  \centering 
  \begin{minipage}[t]{0.49\columnwidth}
    \centering
    \includegraphics[width=\linewidth]{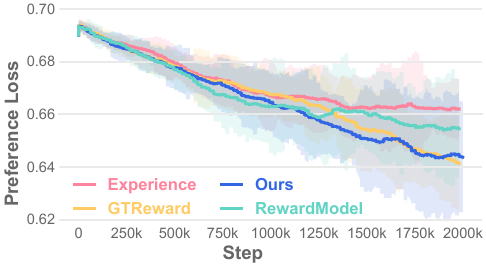}
    \vspace{-15pt}
    \caption{Impacts of different reward calculator schemes.}
    \label{fig:reward_model}
  \end{minipage}
  \hfill 
  \begin{minipage}[t]{0.49\columnwidth}
    \centering
    \includegraphics[width=\linewidth]{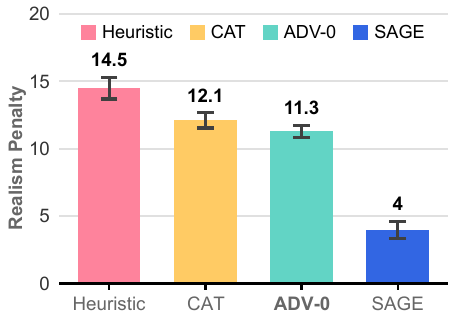}
    \vspace{-15pt}
    \caption{Realism penalty value of different adversarial methods.}
    \label{fig:real_pen}
  \end{minipage}
\end{figure}
\vspace{-5pt}

\vspace{-10pt}
\section{Conclusion}
\vspace{-5pt}
This paper presents \texttt{ADV-0}, a closed-loop min-max policy optimization framework designed to enhance the robustness of AD models against long-tail risks. 
By formulating the problem as a zero-sum game and solving it via an alternating end-to-end training pipeline, we bridge the gap between adversarial generation and robust policy learning. 
We theoretically proved that it converges to a Nash Equilibrium and maximizes a certified lower bound. 
Empirical results suggest that \texttt{ADV-0} not only generates effective safety-critical scenarios but also improves the generalizability of both RL agents and motion planners against diverse long-tail risks.
Despite the promising results, several limitations exist. (1) The reliance on high-fidelity simulators for online RL training limits scalability.
(2) Extending \texttt{ADV-0} to vision-based sensor inputs may require differentiable neural rendering, which remains a non-trivial challenge. 
% (3) While the proxy reward estimator is efficient, it may not fully capture complex, long-horizon causal relationships in scenarios where early subtle actions lead to delayed failures. 
Future work could explore offline RL techniques \cite{karkus2025beyond} to improve training efficiency and vision-based adversarial generation.

% % Acknowledgements should only appear in the accepted version.
% \section*{Acknowledgements}

% \textbf{Do not} include acknowledgements in the initial version of the paper
% submitted for blind review.

% If a paper is accepted, the final camera-ready version can (and usually should)
% include acknowledgements.  Such acknowledgements should be placed at the end of
% the section, in an unnumbered section that does not count towards the paper
% page limit. Typically, this will include thanks to reviewers who gave useful
% comments, to colleagues who contributed to the ideas, and to funding agencies
% and corporate sponsors that provided financial support.

\clearpage
\section*{Impact Statement}

This paper presents work whose goal is to advance the field of Machine Learning, specifically in the domain of safety-critical autonomous systems. Our research focuses on identifying system vulnerabilities and improving robustness against rare, long-tail events, which is a prerequisite for the safe large-scale deployment of autonomous vehicles. By providing a rigorous framework for generating and mitigating high-risk scenarios, this work contributes to reducing potential accidents and enhancing public trust in automation technologies. While adversarial generation techniques could theoretically be repurposed to identify vulnerabilities for malicious intent, our framework is designed as a defensive mechanism to patch these flaws before deployment. We do not identify any specific negative ethical consequences or societal risks associated with this research, as the adversarial generation is strictly confined to simulation environments for the purpose of system validation and improvement.

\bibliography{reference}

@article{karkus2025beyond,
  title={Beyond Behavior Cloning in Autonomous Driving: a Survey of Closed-Loop Training Techniques},
  author={Karkus, Peter and Igl, Maximilian and Chen, Yuxiao and Chitta, Kashyap and Packer, Jef and Douillard, Bertrand and Tian, Ran and Naumann, Alexander and Garcia-Cobo, Guillermo and Tan, Shuhan and others},
  journal={Authorea Preprints},
  year={2025},
  publisher={Authorea}
}

@article{li2022metadrive,
  title={Metadrive: Composing diverse driving scenarios for generalizable reinforcement learning},
  author={Li, Quanyi and Peng, Zhenghao and Feng, Lan and Zhang, Qihang and Xue, Zhenghai and Zhou, Bolei},
  journal={IEEE transactions on pattern analysis and machine intelligence},
  volume={45},
  number={3},
  pages={3461--3475},
  year={2022},
  publisher={IEEE}
}

@inproceedings{gu2021densetnt,
  title={Densetnt: End-to-end trajectory prediction from dense goal sets},
  author={Gu, Junru and Sun, Chen and Zhao, Hang},
  booktitle={Proceedings of the IEEE/CVF international conference on computer vision},
  pages={15303--15312},
  year={2021}
}

@inproceedings{pinto2017robust,
  title={Robust adversarial reinforcement learning},
  author={Pinto, Lerrel and Davidson, James and Sukthankar, Rahul and Gupta, Abhinav},
  booktitle={International conference on machine learning},
  pages={2817--2826},
  year={2017},
  organization={PMLR}
}

@inproceedings{zhang2023cat,
  title={Cat: Closed-loop adversarial training for safe end-to-end driving},
  author={Zhang, Linrui and Peng, Zhenghao and Li, Quanyi and Zhou, Bolei},
  booktitle={Conference on Robot Learning},
  pages={2357--2372},
  year={2023},
  organization={PMLR}
}

@inproceedings{pan2019risk,
  title={Risk averse robust adversarial reinforcement learning},
  author={Pan, Xinlei and Seita, Daniel and Gao, Yang and Canny, John},
  booktitle={2019 International Conference on Robotics and Automation (ICRA)},
  pages={8522--8528},
  year={2019},
  organization={IEEE}
}

@article{nie2025steerable,
  title={Steerable adversarial scenario generation through test-time preference alignment},
  author={Nie, Tong and Mei, Yuewen and Tang, Yihong and He, Junlin and Sun, Jie and Shi, Haotian and Ma, Wei and Sun, Jian},
  journal={arXiv preprint arXiv:2509.20102},
  year={2025}
}

@article{mei2025llm,
  title={Llm-attacker: Enhancing closed-loop adversarial scenario generation for autonomous driving with large language models},
  author={Mei, Yuewen and Nie, Tong and Sun, Jian and Tian, Ye},
  journal={arXiv preprint arXiv:2501.15850},
  year={2025}
}

@article{zhang2020robust,
  title={Robust deep reinforcement learning against adversarial perturbations on state observations},
  author={Zhang, Huan and Chen, Hongge and Xiao, Chaowei and Li, Bo and Liu, Mingyan and Boning, Duane and Hsieh, Cho-Jui},
  journal={Advances in neural information processing systems},
  volume={33},
  pages={21024--21037},
  year={2020}
}

@article{zhang2021robust,
  title={Robust reinforcement learning on state observations with learned optimal adversary},
  author={Zhang, Huan and Chen, Hongge and Boning, Duane and Hsieh, Cho-Jui},
  journal={arXiv preprint arXiv:2101.08452},
  year={2021}
}

@article{vinitsky2020robust,
  title={Robust reinforcement learning using adversarial populations},
  author={Vinitsky, Eugene and Du, Yuqing and Parvate, Kanaad and Jang, Kathy and Abbeel, Pieter and Bayen, Alexandre},
  journal={arXiv preprint arXiv:2008.01825},
  year={2020}
}

@inproceedings{tessler2019action,
  title={Action robust reinforcement learning and applications in continuous control},
  author={Tessler, Chen and Efroni, Yonathan and Mannor, Shie},
  booktitle={International Conference on Machine Learning},
  pages={6215--6224},
  year={2019},
  organization={PMLR}
}

@article{kamalaruban2020robust,
  title={Robust reinforcement learning via adversarial training with langevin dynamics},
  author={Kamalaruban, Parameswaran and Huang, Yu-Ting and Hsieh, Ya-Ping and Rolland, Paul and Shi, Cheng and Cevher, Volkan},
  journal={Advances in Neural Information Processing Systems},
  volume={33},
  pages={8127--8138},
  year={2020}
}

@article{zhang2020stability,
  title={On the stability and convergence of robust adversarial reinforcement learning: A case study on linear quadratic systems},
  author={Zhang, Kaiqing and Hu, Bin and Basar, Tamer},
  journal={Advances in Neural Information Processing Systems},
  volume={33},
  pages={22056--22068},
  year={2020}
}

@article{deng2021deep,
  title={Deep learning-based autonomous driving systems: A survey of attacks and defenses},
  author={Deng, Yao and Zhang, Tiehua and Lou, Guannan and Zheng, Xi and Jin, Jiong and Han, Qing-Long},
  journal={IEEE Transactions on Industrial Informatics},
  volume={17},
  number={12},
  pages={7897--7912},
  year={2021},
  publisher={IEEE}
}

@article{tu2021exploring,
  title={Exploring adversarial robustness of multi-sensor perception systems in self driving},
  author={Tu, James and Li, Huichen and Yan, Xinchen and Ren, Mengye and Chen, Yun and Liang, Ming and Bitar, Eilyan and Yumer, Ersin and Urtasun, Raquel},
  journal={arXiv preprint arXiv:2101.06784},
  year={2021}
}

@article{wang2025generative,
  title={Generative ai for autonomous driving: Frontiers and opportunities},
  author={Wang, Yuping and Xing, Shuo and Can, Cui and Li, Renjie and Hua, Hongyuan and Tian, Kexin and Mo, Zhaobin and Gao, Xiangbo and Wu, Keshu and Zhou, Sulong and others},
  journal={arXiv preprint arXiv:2505.08854},
  year={2025}
}

@article{mei2024bayesian,
  title={Bayesian fault injection safety testing for highly automated vehicles with uncertainty},
  author={Mei, Yuewen and Nie, Tong and Sun, Jian and Tian, Ye},
  journal={IEEE Transactions on Intelligent Vehicles},
  year={2024},
  publisher={IEEE}
}

@article{wachi2019failure,
  title={Failure-scenario maker for rule-based agent using multi-agent adversarial reinforcement learning and its application to autonomous driving},
  author={Wachi, Akifumi},
  journal={arXiv preprint arXiv:1903.10654},
  year={2019}
}

@inproceedings{ma2018improved,
  title={Improved robustness and safety for autonomous vehicle control with adversarial reinforcement learning},
  author={Ma, Xiaobai and Driggs-Campbell, Katherine and Kochenderfer, Mykel J},
  booktitle={2018 IEEE Intelligent Vehicles Symposium (IV)},
  pages={1665--1671},
  year={2018},
  organization={IEEE}
}

@inproceedings{boloor2019simple,
  title={Simple physical adversarial examples against end-to-end autonomous driving models},
  author={Boloor, Adith and He, Xin and Gill, Christopher and Vorobeychik, Yevgeniy and Zhang, Xuan},
  booktitle={2019 IEEE International Conference on Embedded Software and Systems (ICESS)},
  pages={1--7},
  year={2019},
  organization={IEEE}
}

@article{amirkhani2023survey,
  title={A survey on adversarial attacks and defenses for object detection and their applications in autonomous vehicles},
  author={Amirkhani, Abdollah and Karimi, Mohammad Parsa and Banitalebi-Dehkordi, Amin},
  journal={The Visual Computer},
  volume={39},
  number={11},
  pages={5293--5307},
  year={2023},
  publisher={Springer}
}

@inproceedings{tu2020physically,
  title={Physically realizable adversarial examples for lidar object detection},
  author={Tu, James and Ren, Mengye and Manivasagam, Sivabalan and Liang, Ming and Yang, Bin and Du, Richard and Cheng, Frank and Urtasun, Raquel},
  booktitle={Proceedings of the IEEE/CVF conference on computer vision and pattern recognition},
  pages={13716--13725},
  year={2020}
}

@article{tian2025nuscenes,
  title={Nuscenes-spatialqa: A spatial understanding and reasoning benchmark for vision-language models in autonomous driving},
  author={Tian, Kexin and Mao, Jingrui and Zhang, Yunlong and Jiang, Jiwan and Zhou, Yang and Tu, Zhengzhong},
  journal={arXiv preprint arXiv:2504.03164},
  year={2025}
}

@inproceedings{tuncali2018simulation,
  title={Simulation-based adversarial test generation for autonomous vehicles with machine learning components},
  author={Tuncali, Cumhur Erkan and Fainekos, Georgios and Ito, Hisahiro and Kapinski, James},
  booktitle={2018 IEEE intelligent vehicles symposium (IV)},
  pages={1555--1562},
  year={2018},
  organization={IEEE}
}

@article{he2022robust,
  title={Robust lane change decision making for autonomous vehicles: An observation adversarial reinforcement learning approach},
  author={He, Xiangkun and Yang, Haohan and Hu, Zhongxu and Lv, Chen},
  journal={IEEE Transactions on Intelligent Vehicles},
  volume={8},
  number={1},
  pages={184--193},
  year={2022},
  publisher={IEEE}
}

@article{anzalone2022end,
  title={An end-to-end curriculum learning approach for autonomous driving scenarios},
  author={Anzalone, Luca and Barra, Paola and Barra, Silvio and Castiglione, Aniello and Nappi, Michele},
  journal={IEEE Transactions on Intelligent Transportation Systems},
  volume={23},
  number={10},
  pages={19817--19826},
  year={2022},
  publisher={IEEE}
}

@article{tang2025e3ad,
  title={E3AD: An emotion-aware vision-language-action model for human-centric end-to-end autonomous driving},
  author={Tang, Yihong and Liao, Haicheng and Nie, Tong and He, Junlin and Qu, Ao and Chen, Kehua and Ma, Wei and Li, Zhenning and Sun, Lijun and Xu, Chengzhong},
  journal={arXiv preprint arXiv:2512.04733},
  year={2025}
}

@article{kiran2021deep,
  title={Deep reinforcement learning for autonomous driving: A survey},
  author={Kiran, B Ravi and Sobh, Ibrahim and Talpaert, Victor and Mannion, Patrick and Al Sallab, Ahmad A and Yogamani, Senthil and P{\'e}rez, Patrick},
  journal={IEEE transactions on intelligent transportation systems},
  volume={23},
  number={6},
  pages={4909--4926},
  year={2021},
  publisher={IEEE}
}

@inproceedings{isele2018navigating,
  title={Navigating occluded intersections with autonomous vehicles using deep reinforcement learning},
  author={Isele, David and Rahimi, Reza and Cosgun, Akansel and Subramanian, Kaushik and Fujimura, Kikuo},
  booktitle={2018 IEEE international conference on robotics and automation (ICRA)},
  pages={2034--2039},
  year={2018},
  organization={IEEE}
}

@inproceedings{saxena2020driving,
  title={Driving in dense traffic with model-free reinforcement learning},
  author={Saxena, Dhruv Mauria and Bae, Sangjae and Nakhaei, Alireza and Fujimura, Kikuo and Likhachev, Maxim},
  booktitle={2020 IEEE International Conference on Robotics and Automation (ICRA)},
  pages={5385--5392},
  year={2020},
  organization={IEEE}
}

@inproceedings{chen2019model,
  title={Model-free deep reinforcement learning for urban autonomous driving},
  author={Chen, Jianyu and Yuan, Bodi and Tomizuka, Masayoshi},
  booktitle={2019 IEEE intelligent transportation systems conference (ITSC)},
  pages={2765--2771},
  year={2019},
  organization={IEEE}
}

@inproceedings{toromanoff2020end,
  title={End-to-end model-free reinforcement learning for urban driving using implicit affordances},
  author={Toromanoff, Marin and Wirbel, Emilie and Moutarde, Fabien},
  booktitle={Proceedings of the IEEE/CVF conference on computer vision and pattern recognition},
  pages={7153--7162},
  year={2020}
}

@article{schulman2017proximal,
  title={Proximal policy optimization algorithms},
  author={Schulman, John and Wolski, Filip and Dhariwal, Prafulla and Radford, Alec and Klimov, Oleg},
  journal={arXiv preprint arXiv:1707.06347},
  year={2017}
}

@inproceedings{haarnoja2018soft,
  title={Soft actor-critic: Off-policy maximum entropy deep reinforcement learning with a stochastic actor},
  author={Haarnoja, Tuomas and Zhou, Aurick and Abbeel, Pieter and Levine, Sergey},
  booktitle={International conference on machine learning},
  pages={1861--1870},
  year={2018},
  organization={Pmlr}
}

@article{chen2024end,
  title={End-to-end autonomous driving: Challenges and frontiers},
  author={Chen, Li and Wu, Penghao and Chitta, Kashyap and Jaeger, Bernhard and Geiger, Andreas and Li, Hongyang},
  journal={IEEE Transactions on Pattern Analysis and Machine Intelligence},
  year={2024},
  publisher={IEEE}
}

@article{xing2024autotrust,
  title={Autotrust: Benchmarking trustworthiness in large vision language models for autonomous driving},
  author={Xing, Shuo and Hua, Hongyuan and Gao, Xiangbo and Zhu, Shenzhe and Li, Renjie and Tian, Kexin and Li, Xiaopeng and Huang, Heng and Yang, Tianbao and Wang, Zhangyang and others},
  journal={arXiv preprint arXiv:2412.15206},
  year={2024}
}

@article{rafailov2023direct,
  title={Direct preference optimization: Your language model is secretly a reward model},
  author={Rafailov, Rafael and Sharma, Archit and Mitchell, Eric and Manning, Christopher D and Ermon, Stefano and Finn, Chelsea},
  journal={Advances in neural information processing systems},
  volume={36},
  pages={53728--53741},
  year={2023}
}

@article{guo2025deepseek,
  title={Deepseek-r1: Incentivizing reasoning capability in llms via reinforcement learning},
  author={Guo, Daya and Yang, Dejian and Zhang, Haowei and Song, Junxiao and Zhang, Ruoyu and Xu, Runxin and Zhu, Qihao and Ma, Shirong and Wang, Peiyi and Bi, Xiao and others},
  journal={arXiv preprint arXiv:2501.12948},
  year={2025}
}

@article{jiang2025alphadrive,
  title={Alphadrive: Unleashing the power of vlms in autonomous driving via reinforcement learning and reasoning},
  author={Jiang, Bo and Chen, Shaoyu and Zhang, Qian and Liu, Wenyu and Wang, Xinggang},
  journal={arXiv preprint arXiv:2503.07608},
  year={2025}
}

@article{li2025drive,
  title={Drive-R1: Bridging Reasoning and Planning in VLMs for Autonomous Driving with Reinforcement Learning},
  author={Li, Yue and Tian, Meng and Zhu, Dechang and Zhu, Jiangtong and Lin, Zhenyu and Xiong, Zhiwei and Zhao, Xinhai},
  journal={arXiv preprint arXiv:2506.18234},
  year={2025}
}

@article{knox2023reward,
  title={Reward (mis) design for autonomous driving},
  author={Knox, W Bradley and Allievi, Alessandro and Banzhaf, Holger and Schmitt, Felix and Stone, Peter},
  journal={Artificial Intelligence},
  volume={316},
  pages={103829},
  year={2023},
  publisher={Elsevier}
}

@article{li2025finetuning,
  title={Finetuning generative trajectory model with reinforcement learning from human feedback},
  author={Li, Derun and Ren, Jianwei and Wang, Yue and Wen, Xin and Li, Pengxiang and Xu, Leimeng and Zhan, Kun and Xia, Zhongpu and Jia, Peng and Lang, Xianpeng and others},
  journal={arXiv preprint arXiv:2503.10434},
  year={2025}
}

@article{ouyang2022training,
  title={Training language models to follow instructions with human feedback},
  author={Ouyang, Long and Wu, Jeffrey and Jiang, Xu and Almeida, Diogo and Wainwright, Carroll and Mishkin, Pamela and Zhang, Chong and Agarwal, Sandhini and Slama, Katarina and Ray, Alex and others},
  journal={Advances in neural information processing systems},
  volume={35},
  pages={27730--27744},
  year={2022}
}

@article{tian2024tokenize,
  title={Tokenize the world into object-level knowledge to address long-tail events in autonomous driving},
  author={Tian, Ran and Li, Boyi and Weng, Xinshuo and Chen, Yuxiao and Schmerling, Edward and Wang, Yue and Ivanovic, Boris and Pavone, Marco},
  journal={arXiv preprint arXiv:2407.00959},
  year={2024}
}

@article{fang2025corevla,
  title={CoReVLA: A dual-stage end-to-end autonomous driving framework for long-tail scenarios via collect-and-refine},
  author={Fang, Shiyu and Cui, Yiming and Liang, Haoyang and Lv, Chen and Hang, Peng and Sun, Jian},
  journal={arXiv preprint arXiv:2509.15968},
  year={2025}
}

@article{xu2025wod,
  title={Wod-e2e: Waymo open dataset for end-to-end driving in challenging long-tail scenarios},
  author={Xu, Runsheng and Lin, Hubert and Jeon, Wonseok and Feng, Hao and Zou, Yuliang and Sun, Liting and Gorman, John and Tolstaya, Ekaterina and Tang, Sarah and White, Brandyn and others},
  journal={arXiv preprint arXiv:2510.26125},
  year={2025}
}

@article{wang2025alpamayo,
  title={Alpamayo-r1: Bridging reasoning and action prediction for generalizable autonomous driving in the long tail},
  author={Wang, Yan and Luo, Wenjie and Bai, Junjie and Cao, Yulong and Che, Tong and Chen, Ke and Chen, Yuxiao and Diamond, Jenna and Ding, Yifan and Ding, Wenhao and others},
  journal={arXiv preprint arXiv:2511.00088},
  year={2025}
}

@article{feng2021intelligent,
  title={Intelligent driving intelligence test for autonomous vehicles with naturalistic and adversarial environment},
  author={Feng, Shuo and Yan, Xintao and Sun, Haowei and Feng, Yiheng and Liu, Henry X},
  journal={Nature communications},
  volume={12},
  number={1},
  pages={748},
  year={2021},
  publisher={Nature Publishing Group UK London}
}

@article{feng2023dense,
  title={Dense reinforcement learning for safety validation of autonomous vehicles},
  author={Feng, Shuo and Sun, Haowei and Yan, Xintao and Zhu, Haojie and Zou, Zhengxia and Shen, Shengyin and Liu, Henry X},
  journal={Nature},
  volume={615},
  number={7953},
  pages={620--627},
  year={2023},
  publisher={Nature Publishing Group UK London}
}

@article{liu2024curse,
  title={Curse of rarity for autonomous vehicles},
  author={Liu, Henry X and Feng, Shuo},
  journal={nature communications},
  volume={15},
  number={1},
  pages={4808},
  year={2024},
  publisher={Nature Publishing Group UK London}
}

@inproceedings{wang2021advsim,
  title={Advsim: Generating safety-critical scenarios for self-driving vehicles},
  author={Wang, Jingkang and Pun, Ava and Tu, James and Manivasagam, Sivabalan and Sadat, Abbas and Casas, Sergio and Ren, Mengye and Urtasun, Raquel},
  booktitle={Proceedings of the IEEE/CVF Conference on Computer Vision and Pattern Recognition},
  pages={9909--9918},
  year={2021}
}

@inproceedings{kuutti2020training,
  title={Training adversarial agents to exploit weaknesses in deep control policies},
  author={Kuutti, Sampo and Fallah, Saber and Bowden, Richard},
  booktitle={2020 IEEE International Conference on Robotics and Automation (ICRA)},
  pages={108--114},
  year={2020},
  organization={IEEE}
}

@inproceedings{ding2020learning,
  title={Learning to collide: An adaptive safety-critical scenarios generating method},
  author={Ding, Wenhao and Chen, Baiming and Xu, Minjun and Zhao, Ding},
  booktitle={2020 IEEE/RSJ International Conference on Intelligent Robots and Systems (IROS)},
  pages={2243--2250},
  year={2020},
  organization={IEEE}
}

@inproceedings{xu2025diffscene,
  title={Diffscene: Diffusion-based safety-critical scenario generation for autonomous vehicles},
  author={Xu, Chejian and Petiushko, Aleksandr and Zhao, Ding and Li, Bo},
  booktitle={Proceedings of the AAAI Conference on Artificial Intelligence},
  volume={39},
  pages={8797--8805},
  year={2025}
}

@article{stoler2025seal,
  title={Seal: Towards safe autonomous driving via skill-enabled adversary learning for closed-loop scenario generation},
  author={Stoler, Benjamin and Navarro, Ingrid and Francis, Jonathan and Oh, Jean},
  journal={IEEE Robotics and Automation Letters},
  volume={10},
  number={9},
  pages={9320--9327},
  year={2025},
  publisher={IEEE}
}

@inproceedings{hanselmann2022king,
  title={King: Generating safety-critical driving scenarios for robust imitation via kinematics gradients},
  author={Hanselmann, Niklas and Renz, Katrin and Chitta, Kashyap and Bhattacharyya, Apratim and Geiger, Andreas},
  booktitle={European Conference on Computer Vision},
  pages={335--352},
  year={2022},
  organization={Springer}
}

@inproceedings{zhang2022adversarial,
  title={On adversarial robustness of trajectory prediction for autonomous vehicles},
  author={Zhang, Qingzhao and Hu, Shengtuo and Sun, Jiachen and Chen, Qi Alfred and Mao, Z Morley},
  booktitle={Proceedings of the IEEE/CVF Conference on Computer Vision and Pattern Recognition},
  pages={15159--15168},
  year={2022}
}

@article{ding2023survey,
  title={A survey on safety-critical driving scenario generation—a methodological perspective},
  author={Ding, Wenhao and Xu, Chejian and Arief, Mansur and Lin, Haohong and Li, Bo and Zhao, Ding},
  journal={IEEE Transactions on Intelligent Transportation Systems},
  volume={24},
  number={7},
  pages={6971--6988},
  year={2023},
  publisher={IEEE}
}

@inproceedings{zhang2024chatscene,
  title={Chatscene: Knowledge-enabled safety-critical scenario generation for autonomous vehicles},
  author={Zhang, Jiawei and Xu, Chejian and Li, Bo},
  booktitle={Proceedings of the IEEE/CVF Conference on Computer Vision and Pattern Recognition},
  pages={15459--15469},
  year={2024}
}

@inproceedings{ransiek2024goose,
  title={Goose: Goal-conditioned reinforcement learning for safety-critical scenario generation},
  author={Ransiek, Joshua and Plaum, Johannes and Langner, Jacob and Sax, Eric},
  booktitle={2024 IEEE 27th International Conference on Intelligent Transportation Systems (ITSC)},
  pages={2651--2658},
  year={2024},
  organization={IEEE}
}

@inproceedings{zhou2022long,
  title={Long-tail prediction uncertainty aware trajectory planning for self-driving vehicles},
  author={Zhou, Weitao and Cao, Zhong and Xu, Yunkang and Deng, Nanshan and Liu, Xiaoyu and Jiang, Kun and Yang, Diange},
  booktitle={2022 IEEE 25th International Conference on Intelligent Transportation Systems (ITSC)},
  pages={1275--1282},
  year={2022},
  organization={IEEE}
}

@article{liu2025adv,
  title={Adv-BMT: Bidirectional Motion Transformer for Safety-Critical Traffic Scenario Generation},
  author={Liu, Yuxin and Peng, Zhenghao and Cui, Xuanhao and Zhou, Bolei},
  journal={arXiv preprint arXiv:2506.09485},
  year={2025}
}

@article{chen2025rift,
  title={RIFT: Closed-Loop RL Fine-Tuning for Realistic and Controllable Traffic Simulation},
  author={Chen, Keyu and Sun, Wenchao and Cheng, Hao and Zheng, Sifa},
  journal={arXiv preprint arXiv:2505.03344},
  year={2025}
}

@article{wu2024smart,
  title={Smart: Scalable multi-agent real-time motion generation via next-token prediction},
  author={Wu, Wei and Feng, Xiaoxin and Gao, Ziyan and Kan, Yuheng},
  journal={Advances in Neural Information Processing Systems},
  volume={37},
  pages={114048--114071},
  year={2024}
}

@inproceedings{cheng2024rethinking,
  title={Rethinking imitation-based planners for autonomous driving},
  author={Cheng, Jie and Chen, Yingbing and Mei, Xiaodong and Yang, Bowen and Li, Bo and Liu, Ming},
  booktitle={2024 IEEE International Conference on Robotics and Automation (ICRA)},
  pages={14123--14130},
  year={2024},
  organization={IEEE}
}

@inproceedings{scherrer2014approximate,
  title={Approximate policy iteration schemes: A comparison},
  author={Scherrer, Bruno},
  booktitle={International Conference on Machine Learning},
  pages={1314--1322},
  year={2014},
  organization={PMLR}
}

@article{luo2018algorithmic,
  title={Algorithmic framework for model-based deep reinforcement learning with theoretical guarantees},
  author={Luo, Yuping and Xu, Huazhe and Li, Yuanzhi and Tian, Yuandong and Darrell, Trevor and Ma, Tengyu},
  journal={arXiv preprint arXiv:1807.03858},
  year={2018}
}

@inproceedings{achiam2017constrained,
  title={Constrained policy optimization},
  author={Achiam, Joshua and Held, David and Tamar, Aviv and Abbeel, Pieter},
  booktitle={International conference on machine learning},
  pages={22--31},
  year={2017},
  organization={PMLR}
}

@article{brunke2022safe,
  title={Safe learning in robotics: From learning-based control to safe reinforcement learning},
  author={Brunke, Lukas and Greeff, Melissa and Hall, Adam W and Yuan, Zhaocong and Zhou, Siqi and Panerati, Jacopo and Schoellig, Angela P},
  journal={Annual Review of Control, Robotics, and Autonomous Systems},
  volume={5},
  number={1},
  pages={411--444},
  year={2022},
  publisher={Annual Reviews}
}

@article{li2025simulating,
  title={Simulating the Unseen: Crash Prediction Must Learn from What Did Not Happen},
  author={Li, Zihao and Cao, Xinyuan and Gao, Xiangbo and Tian, Kexin and Wu, Keshu and Anis, Mohammad and Zhang, Hao and Long, Keke and Jiang, Jiwan and Li, Xiaopeng and others},
  journal={arXiv preprint arXiv:2505.21743},
  year={2025}
}
\bibliographystyle{icml2026}

%%%%%%%%%%%%%%%%%%%%%%%%%%%%%%%%%%%%%%%%%%%%%%%%%%%%%%%%%%%%%%%%%%%%%%%%%%%%%%%
%%%%%%%%%%%%%%%%%%%%%%%%%%%%%%%%%%%%%%%%%%%%%%%%%%%%%%%%%%%%%%%%%%%%%%%%%%%%%%%
% APPENDIX
%%%%%%%%%%%%%%%%%%%%%%%%%%%%%%%%%%%%%%%%%%%%%%%%%%%%%%%%%%%%%%%%%%%%%%%%%%%%%%%
%%%%%%%%%%%%%%%%%%%%%%%%%%%%%%%%%%%%%%%%%%%%%%%%%%%%%%%%%%%%%%%%%%%%%%%%%%%%%%%
\newpage
\appendix
\onecolumn
\section*{Appendix}

The appendix provides rigorous theoretical foundations, supplementary experimental results, and detailed implementation specifications that support the main text. We organize the contents as follows: \cref{sec:related work} discusses related work and positions our work in the literature.
\cref{sec:theory_appendix} presents the complete derivations and proofs for the theoretical discussion provided in \cref{ssec:theory}. 
We include additional qualitative visualizations and extended quantitative results in \cref{appendix:results} to complement the main paper.
Finally, \cref{appendix:setups} details the experimental setups, including the dataset, environment, baselines, implementations, and hyperparameters for reproducibility.

%%%%%%%%%%%%%%%%%%%%%%%%%%%%%%%%%%%%%%%%%%%%%%%%%%%%%%%%%%%%%%%
\section{Related Work}\label{sec:related work}
This section reviews the literature relevant to our approach, with a particular focus on autonomous driving (AD). We position our work at the intersection of three interleaving pathways: RL, long-tailed scenario handling, and adversarial learning.

% \vspace{-10pt}
\paragraph{Reinforcement learning.}
RL has been widely studied in AD to enable closed-loop decision-making and address the covariate shift inherent in supervised imitation \cite{kiran2021deep,chen2024end,karkus2025beyond}. Traditional approaches have largely focused on motion planning and continuous control using vectorized state representations \cite{isele2018navigating, saxena2020driving}, or vision-based end-to-end driving within high-fidelity simulators \cite{chen2019model, toromanoff2020end}. These methods typically employ actor-critic algorithms, such as PPO and SAC \cite{schulman2017proximal, haarnoja2018soft}, to maximize cumulative returns based on handcrafted reward functions. However, reward specification and value estimation in complex driving scenarios remain notoriously difficult \cite{knox2023reward, chen2024end}. 
To address the difficulties, recent research has shifted toward alignment techniques emerging from large language models (LLMs). This includes learning from human preferences or feedback to recover reward functions from demonstrations \cite{ouyang2022training}. More notably, critic-free methods, such as Direct Preference Optimization (DPO) \cite{rafailov2023direct} and Group Relative Policy Optimization (GRPO) \cite{guo2025deepseek}, are gaining increasing attention. These methods optimize policies directly against preference data or outcomes across rollouts without training unstable value functions, showing promising results in training end-to-end driving autonomy \cite{jiang2025alphadrive, li2025finetuning, li2025drive}.

% \vspace{-10pt}
\paragraph{Long-tailed scenario.} 
Handling long-tailed events remains a longstanding challenge for AD deployment and system trustworthiness \cite{feng2021intelligent,liu2024curse,xing2024autotrust,chen2024end,wang2025generative}. To mitigate the scarcity of such data in naturalistic driving and provide high-value testing samples, significant effort has been devoted to safety-critical scenario generation \cite{ding2023survey,li2025simulating}. 
Representative approaches range from rule-based \cite{tuncali2018simulation,zhang2024chatscene,mei2024bayesian}, optimization-based \cite{wang2021advsim,hanselmann2022king,zhang2022adversarial,zhang2023cat,nie2025steerable,mei2025llm}, to learning-based methods \cite{ding2020learning,kuutti2020training,feng2023dense,xu2025diffscene,liu2025adv}. Despite their success in identifying failures, effectively integrating these adversarial generation pipelines into closed-loop training remains an open question: the primary goal of existing methods is often to stress-test the system rather than to improve it. 
Conversely, a parallel line of research endeavors to enhance the robustness of decision-making in rare events by designing specialized architectures or leveraging the reasoning capabilities of pretrained LLMs/VLMs \cite{zhou2022long,tian2024tokenize,fang2025corevla,tian2025nuscenes,xu2025wod,wang2025alpamayo,tang2025e3ad}. However, adversarial scenario generation and policy improvement are seldom unified in a holistic framework. Consequently, the generalizability of these methods to unseen, open-world long-tailed scenarios remains under-investigated.

%%%%%%%%%%%%%%%%%%%%%%%%%%%%%%%%%%%%%

% \vspace{-10pt}
\paragraph{Adversarial learning.}
Adversarial training offers a principled framework for improvement-targeted generation. In the context of robotics and autonomy, adversarial RL has been extensively discussed for control tasks in constrained settings \cite{pinto2017robust,pan2019risk,tessler2019action,zhang2020robust,zhang2020stability,vinitsky2020robust,kamalaruban2020robust,zhang2021robust}. However, they typically prioritize theoretical analysis within simplified simulation environments with controlled noise, which differs significantly from the complexity of real-world driving.
Within the AD domain, adversarial methods have mainly targeted the robustness of perception and detection modules against observation perturbations \cite{boloor2019simple,tu2020physically,tu2021exploring,deng2021deep,he2022robust,amirkhani2023survey}. In contrast, adversarial training for decision-making, particularly regarding long-tailed scenarios, remains underexplored \cite{ma2018improved,wachi2019failure,anzalone2022end,zhang2023cat,zhang2024chatscene}.
Even among the few works addressing this, the generation and training phases are often decoupled. Crucially, they are typically confined to specific policy types or tested against handcrafted adversarial scenarios, which poses significant challenges to their generalizability across diverse and evolving corner cases.

% \clearpage
\section{Theoretical Analysis}\label{sec:theory_appendix}

\subsection{Convergence Analysis}
\label{subsec:convergence}

In this section, we provide a theoretical guarantee for the convergence of the \texttt{ADV-0} framework. 
We formulate the interaction between the ego agent $\pi_\theta$ and the adversary $\mathcal{G}_\psi$ as a regularized two-player Zero-Sum Markov Game (ZSMG). Our analysis proceeds in two steps: 
(1) We first prove that the inner loop optimization via IPL is mathematically equivalent to solving for the soft optimal adversarial distribution subject to a KL-divergence constraint (Lemma~\ref{lemma:inner_optimality}).
(2) We then elaborate that the alternating soft updates of the defender and the attacker constitute a contraction mapping, guaranteeing convergence to the Nash Equilibrium of the game (Theorem~\ref{thm:convergence}).

\subsubsection{Preliminaries: ADV-0 as a regularized zero-sum Markov game}
Formally, we define the discounted ZSMG between the ego and the adversary by the tuple $(\mathcal{S}, \mathcal{A}, \mathcal{Y}, \Pi, \Psi, \mathcal{P}, \mathcal{R}, \gamma)$, where:
\begin{itemize}
    \item The \textit{defender} (ego agent) chooses a policy $\pi_\theta\in\Pi: \mathcal{S} \to \Delta(\mathcal{A})$ to maximize its expected return.
    \item The \textit{attacker} (adversary) chooses a generative policy $\mathcal{G}_\psi\in\Psi: \mathcal{X} \to \Delta(\mathcal{Y})$ to produce adversarial trajectories $Y^\text{Adv} \in \mathcal{Y}$, which in turn perturbs the transition dynamics $\mathcal{P}_\psi$. The attacker's goal is to minimize the defender's return.
\end{itemize}
Let the value function $V^{\pi, \psi}(s)$ represent the expected return of the ego agent under the dynamics induced by adversary. The robust optimization objective (Eq.~\ref{eq:min_max}) with the KL-constraint (Eq.~\ref{eq:constrained_opt}, to maintain naturalistic priors) is formulated as finding the saddle point of the regularized value function:
\begin{equation}
    \label{eq:game_objective}
    \max_{\pi} \min_{\mathcal{G}_\psi} \mathcal{J}(\pi, \mathcal{G}_\psi) := \mathbb{E}_{X}\left[ \mathbb{E}_{\tau \sim \pi, \mathcal{P}_\psi} \left[ \sum_{t=0}^T \gamma^t \mathcal{R}(s_t, a_t) \right] + \tau \mathbb{D}_{\text{KL}}(\mathcal{G}_\psi(\cdot|X) || \mathcal{G}_{\text{ref}}(\cdot|X)) \right],
\end{equation}
where $\mathcal{P}_\psi$ denotes the transition dynamics modulated by the adversary's trajectory $Y \sim \mathcal{G}_\psi(\cdot|X)$, $\tau > 0$ controls the regularization strength. Note that the adversary seeks to minimize the ego's return subject to staying close to the prior $\mathcal{G}_{\text{ref}}$.

The ultimate goal is to find converged policies $(\pi^*, \psi^*)$ satisfy the saddle-point inequality condition of a Nash Equilibrium:
\begin{equation}
    \mathcal{J}(\pi_\theta, \mathcal{G}_{\psi^*}) \leq \mathcal{J}(\pi^*, \mathcal{G}_{\psi^*}) \leq \mathcal{J}(\pi^*, \mathcal{G}_\psi), \quad \forall \pi_\theta \in \Pi, \mathcal{G}_\psi \in \Psi,
\end{equation}
implying that neither the defender nor the attacker can unilaterally improve their objective.

\subsubsection{Inner loop optimality via implicit reward}
We first analyze the inner loop of Algorithm~\ref{alg:adv0}, where the adversary's policy $\mathcal{G}_\psi$ is updated while the ego policy $\pi_\theta$ is held fixed. 
The core of the inner loop is the IPL objective. We will show that minimizing the IPL loss (Eq.~\ref{eq:ipl_loss}) is equivalent to finding the optimal adversarial policy that solves the KL-regularized reward maximization problem.

Consider the inner loop objective defined in Eqs.~\ref{eq:constrained_opt} and~\ref{eq:rl_equivalence}. For a fixed ego policy $\pi_\theta$, the adversary seeks to find an optimal policy $\psi^*$ that maximizes the expected risk (minimizes ego return) while remaining close to the reference prior $\mathcal{G}_{\text{ref}}$. Let the reward for the adversary be defined as $r(Y) = -J(\pi_\theta, Y)$. The objective is:
\begin{equation}
    \label{eq:theory_inner_obj}
    \max_{\psi} \mathcal{J}_{\text{inner}}(\mathcal{G}_\psi) = \mathbb{E}_{Y \sim \mathcal{G}_\psi(\cdot|X)} \left[ r(Y) - \tau \log \frac{\mathcal{G}_\psi(Y|X)}{\mathcal{G}_{\text{ref}}(Y|X)} \right].
\end{equation}

\begin{tcolorbox}[colback=myblue!8, colframe=myblue!8, boxrule=0pt, sharp corners=all, boxsep=0pt, left=5pt, right=5pt, top=5pt, bottom=5pt, before skip=2pt, after skip=2pt]
\begin{lemma}[Closed-form optimality of the Gibbs adversary]
\label{lemma:inner_optimality}
For a fixed defender $\pi_\theta$ and a reference adversary $\mathcal{G}_{\text{ref}}$, the global optimum $\mathcal{G}^*$ of the KL-constrained objective in Eq.~\ref{eq:theory_inner_obj} is given by the Gibbs distribution:
\begin{equation}
    \label{eq:optimal_psi}
    \mathcal{G}^*(Y|X) = \frac{1}{Z(X)} \mathcal{G}_{\text{ref}}(Y|X) \exp\left( -\frac{1}{\tau} J(\pi_\theta, Y) \right),
\end{equation}
where $Z(X)$ is the partition function. Furthermore, minimizing the IPL loss $\mathcal{L}_{\text{IPL}}$ (Eq.~\ref{eq:ipl_loss}) is equivalent to performing maximum likelihood estimation on this optimal policy $\mathcal{G}^*$.
\end{lemma}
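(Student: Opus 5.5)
The plan has two parts: first the closed form of the optimum, then the link between $\mathcal{L}_{\text{IPL}}$ and that optimum.

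\textbf{Closed form.} I would use the Gibbs variational principle rather than Lagrange multipliers. Fix $X$ and define $\mathcal{G}^*$ as in Eq.~\ref{eq:optimal_psi}, with $Z(X)=\mathbb{E}_{Y\sim\mathcal{G}_{\text{ref}}}[\exp(-J(\pi_\theta,Y)/\tau)]$. This is finite because the return is bounded, since $|J|\le V_{\max}$. Solving the definition of $\mathcal{G}^*$ for the reward gives
\begin{equation*}
r(Y)=-J(\pi_\theta,Y)=\tau\log\frac{\mathcal{G}^*(Y|X)}{\mathcal{G}_{\text{ref}}(Y|X)}+\tau\log Z(X).
\end{equation*}
Substituting this into Eq.~\ref{eq:theory_inner_obj} yields
\begin{equation*}
\mathcal{J}_{\text{inner}}(\mathcal{G}_\psi)=\tau\log Z(X)-\tau\,\mathbb{D}_{\text{KL}}(\mathcal{G}_\psi(\cdot|X)\,\|\,\mathcal{G}^*(\cdot|X)).
\end{equation*}
By Gibbs' inequality the KL term is nonnegative and vanishes iff $\mathcal{G}_\psi=\mathcal{G}^*$ almost everywhere. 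Hence $\mathcal{G}^*$ is the unique global maximizer over all distributions absolutely continuous w.r.t.\ $\mathcal{G}_{\text{ref}}$; any distribution that is not absolutely continuous has infinite KL and is excluded. Taking the expectation over $X\sim\mathcal{D}$ preserves this pointwise optimality. This argument also reproduces the chain of equalities in Eq.~\ref{eq:rl_equivalence}.

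\textbf{IPL as maximum likelihood.} I would follow the DPO argument of \citet{rafailov2023direct}.
\begin{enumerate}
\item Inverting the Gibbs form expresses the reward as an implicit reward of the policy, $r(Y)=\tau\log\frac{\mathcal{G}^*(Y|X)}{\mathcal{G}_{\text{ref}}(Y|X)}+\tau\log Z(X)$.
\item Under a Bradley--Terry model of the preference labels, $\mathbb{P}(Y_w\succ Y_l\mid X)=\sigma\bigl(r(Y_w)-r(Y_l)\bigr)$, the intractable $\tau\log Z(X)$ cancels in the difference. The preference probability then becomes $\sigma$ of $\tau$ times the difference of log-ratios.
\item Parametrizing the implicit reward by $\mathcal{G}_\psi$ in place of $\mathcal{G}^*$, the loss $\mathcal{L}_{\text{IPL}}$ in Eq.~\ref{eq:ipl_loss} is exactly the negative log-likelihood of the observed preferences $\mathcal{D}_{\text{pref}}$ under this model.
\item Minimizing $\mathcal{L}_{\text{IPL}}$ is therefore maximum-likelihood estimation of a reward class that contains the true reward, realized at $\psi$ with $\mathcal{G}_\psi=\mathcal{G}^*$.
\item With sufficient data and realizability, the population minimizer recovers the reward up to an $X$-dependent shift. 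That shift does not change the Gibbs policy, so the minimizer is $\mathcal{G}^*$.
\end{enumerate}

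\textbf{Main obstacle.} The hardest step is making the maximum-likelihood equivalence precise, because the labels are not drawn from a Bradley--Terry model. They are deterministic comparisons of $\hat J$, filtered by the margin $\delta$ and the diversity threshold $\xi$.

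I would first state an idealized assumption: labels follow $\sigma(r(Y_w)-r(Y_l))$ with $r=-J$, and $\hat J$ is an unbiased proxy for $J$. I would also assume the pair-sampling distribution has full support over candidate pairs from $\mathcal{G}_\psi$. Under these assumptions, identifiability up to an $X$-dependent constant follows because the Bradley--Terry likelihood is strictly proper.

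I would also flag a scaling mismatch. Eq.~\ref{eq:ipl_loss} multiplies the log-ratio difference by $\tau$, whereas the Gibbs inversion gives a factor $\tau$ on the log-ratio inside $r$. These agree once $r$ is read as the scaled reward. If they do not agree as written, I would absorb the discrepancy into the temperature convention rather than treat it as a substantive gap.
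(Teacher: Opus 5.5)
Your proposal is correct and follows essentially the same route as the paper, which also rewrites the KL-regularized objective as $\tau\log Z(X)-\tau\,\mathbb{D}_{\text{KL}}(\mathcal{G}(\cdot|X)\,\|\,\mathcal{G}^*(\cdot|X))$, inverts the Gibbs form, substitutes into the Bradley--Terry model so that $Z(X)$ cancels, and reads off $\mathcal{L}_{\text{IPL}}$ as the resulting negative log-likelihood; your additions (finiteness of $Z$, uniqueness, an explicit idealized Bradley--Terry assumption on the deterministic, filtered labels) are refinements the paper leaves implicit. The scaling mismatch you flag does not arise: $r(Y_w)-r(Y_l)=\tau\bigl[\log\frac{\mathcal{G}^*(Y_w|X)}{\mathcal{G}_{\text{ref}}(Y_w|X)}-\log\frac{\mathcal{G}^*(Y_l|X)}{\mathcal{G}_{\text{ref}}(Y_l|X)}\bigr]$ is exactly the argument of $\sigma$ in Eq.~\ref{eq:ipl_loss}, so no reinterpretation of the temperature is needed.
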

\end{tcolorbox}

\begin{proof}
The proof consists of two parts. First, we derive the closed-form optimal adversarial policy. 
Let $r(Y) = -J(\pi_\theta, Y)$ be the reward for the adversary. Following the derivation in \citet{rafailov2023direct}, we express the objective using the Gibbs inequality. The objective can be rewritten as maximizing:
\begin{align}
    \mathcal{J}_{\text{inner}}(\mathcal{G}) &= \mathbb{E}_{Y \sim \mathcal{G}} \left[r(Y) - \tau \log \frac{\mathcal{G}(Y|X)}{\mathcal{G}_{\text{ref}}(Y|X)}\right] \nonumber \\
    &= \tau \mathbb{E}_{Y \sim \mathcal{G}} \left[ \log \exp(\frac{1}{\tau} r(Y)) + \log \mathcal{G}_{\text{ref}}(Y|X) - \log \mathcal{G}(Y|X) \right] \nonumber \\
    &= \tau \mathbb{E}_{Y \sim \mathcal{G}} \left[ \log \left( \mathcal{G}_{\text{ref}}(Y|X) \exp\left(\frac{r(Y)}{\tau}\right) \right) - \log \mathcal{G}(Y|X) \right].
\end{align}
Let $Z(X) = \int \mathcal{G}_{\text{ref}}(y|X) \exp(r(y)/\tau) dy$ be the partition function. We can introduce $\log Z(X)$ into the expectation:
\begin{align}
    \mathcal{J}_{\text{inner}}(\mathcal{G}) &= \tau \mathbb{E}_{Y \sim \mathcal{G}} \left[ \log \left( \frac{1}{Z(X)} \mathcal{G}_{\text{ref}}(Y|X) \exp\left(\frac{r(Y)}{\tau}\right) \right) - \log \mathcal{G}(Y|X) + \log Z(X) \right] \nonumber \\
    &= -\tau \mathbb{D}_{\text{KL}}(\mathcal{G}(\cdot|X) || \mathcal{G}^*(\cdot|X)) + \tau \log Z(X),
\end{align}
where $\mathcal{G}^*(Y|X) = \frac{1}{Z(X)} \mathcal{G}_{\text{ref}}(Y|X) \exp\left( \frac{r(Y)}{\tau} \right)$. Since $\mathbb{D}_{\text{KL}} \ge 0$, the objective is maximized when $\mathcal{G} = \mathcal{G}^*$, proving the first part of the lemma.
This first confirms that our energy-based posterior sampling samples exactly from the optimal adversarial distribution. However, explicitly evaluating $Z(X)$ is intractable.

We next show that minimizing the IPL loss $\mathcal{L}_{\text{IPL}}$ with respect to $\psi$ is consistent with maximizing the likelihood of the preference data generated by the optimal adversary $\mathcal{G}^*$, without the need to estimate the partition function. We show the equivalence following \citet{rafailov2023direct}. We can invert the optimal policy equation to rewrite the reward as:
\begin{equation}
    r(Y) = \tau \log \frac{\mathcal{G}^*(Y|X)}{\mathcal{G}_{\text{ref}}(Y|X)} + \tau \log Z(X).
\end{equation}
Under the Bradley-Terry preference model, the probability that trajectory $Y_w$ is preferred over $Y_l$ (i.e., $Y_w$ induces lower ego return) is given by $P(Y_w \succ Y_l) = \sigma(r(Y_w) - r(Y_l))$. Substituting the reparameterized reward into this model:
\begin{align}
    P(Y_w \succ Y_l) &= \sigma\left( \left( \tau \log \frac{\mathcal{G}^*(Y_w|X)}{\mathcal{G}_{\text{ref}}(Y_w|X)} + \tau \log Z \right) - \left( \tau \log \frac{\mathcal{G}^*(Y_l|X)}{\mathcal{G}_{\text{ref}}(Y_l|X)} + \tau \log Z \right) \right) \nonumber \\
    &= \sigma\left( \tau \log \frac{\mathcal{G}^*(Y_w|X)}{\mathcal{G}_{\text{ref}}(Y_w|X)} - \tau \log \frac{\mathcal{G}^*(Y_l|X)}{\mathcal{G}_{\text{ref}}(Y_l|X)} \right).
\end{align}
The partition function $Z(X)$ cancels out. The IPL loss (Eq.~\ref{eq:ipl_loss}) is exactly the negative log-likelihood of this probability with the parameterized policy $\mathcal{G}_\psi$ approximating $\mathcal{G}^*$. Thus, minimizing $\mathcal{L}_{\text{IPL}}$ is equivalent to fitting the optimal adversarial policy $\mathcal{G}^*$ consistent with the observed preferences.
This proves that the inner loop of \texttt{ADV-0} effectively solves the constrained optimization problem in Eq.~\ref{eq:constrained_opt}. 
\end{proof}

%%%%%%%%%%%%%%%%%%%%%%%%%%%%%%%%%%%%%%%%%%%%%%%%%%%%%%%%%%%%%%%
\subsubsection{Global convergence to Nash Equilibrium}
% \label{subsec:theory_convergence}

Having established in Lemma~\ref{lemma:inner_optimality} that the inner loop via IPL effectively recovers the optimal adversarial distribution $\mathcal{G}_{\psi^*}$, we now analyze the convergence of the global alternating optimization. We show that the entire \texttt{ADV-0} framework can be viewed as optimizing a specific robust Bellman Operator, which guarantees convergence to a unique Nash Equilibrium.

Before beginning the formal derivation, we first establish the following Lemma:

\begin{tcolorbox}[colback=myblue!8, colframe=myblue!8, boxrule=0pt, sharp corners=all, boxsep=0pt, left=5pt, right=5pt, top=5pt, bottom=5pt, before skip=2pt, after skip=2pt]
\begin{lemma}[Non-expansiveness of Soft-Min]
\label{lemma:lse_contraction}
Let $f_\tau(X) \triangleq -\tau \log \mathbb{E}_{Y} [\exp(-X(Y)/\tau)]$ be the Soft-Min operator over a random variable $Y$ with temperature $\tau > 0$. For any two bounded functions $X_1, X_2$, the following inequality holds:
\begin{equation}
    |f_\tau(X_1) - f_\tau(X_2)| \leq \max_Y |X_1(Y) - X_2(Y)|.
\end{equation}
\end{lemma}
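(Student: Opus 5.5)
The plan is to prove the inequality by monotonicity plus translation-equivariance of $f_\tau$, which avoids any differentiation. Set $\Delta \triangleq \max_Y |X_1(Y) - X_2(Y)|$, which is finite because both functions are bounded; if only a supremum exists, use $\sup$ throughout. Then pointwise
\begin{equation*}
X_2(Y) - \Delta \le X_1(Y) \le X_2(Y) + \Delta \quad \text{for all } Y.
\end{equation*}

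First I will record two elementary properties of $f_\tau$.

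\emph{Monotonicity.} If $X \le X'$ pointwise, then $\exp(-X/\tau) \ge \exp(-X'/\tau)$. Taking expectations preserves this order, $\log$ is increasing, and multiplying by $-\tau<0$ reverses it, so $f_\tau(X) \le f_\tau(X')$.

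\emph{Translation-equivariance.} For a constant $c$,
\begin{equation*}
\mathbb{E}_Y[\exp(-(X(Y)+c)/\tau)] = e^{-c/\tau}\,\mathbb{E}_Y[\exp(-X(Y)/\tau)],
\end{equation*}
hence $f_\tau(X+c) = f_\tau(X) + c$.

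Both steps need the expectations to be finite and positive. Boundedness of $X_1, X_2$ guarantees this, since $\exp(-X/\tau)$ is then bounded above and bounded away from zero.

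Applying monotonicity to the sandwich and then equivariance with $c = \pm\Delta$ gives
\begin{equation*}
f_\tau(X_2) - \Delta \le f_\tau(X_1) \le f_\tau(X_2) + \Delta,
\end{equation*}
which is exactly $|f_\tau(X_1) - f_\tau(X_2)| \le \Delta$.

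As a cross-check, one could instead differentiate along $X_2 + t(X_1 - X_2)$ for $t \in [0,1]$. The derivative is an expectation of $X_1 - X_2$ under the Gibbs tilt $\propto \exp(-X_t/\tau)$, which is a probability measure, so the derivative is bounded by $\Delta$. The sandwich argument is cleaner, so I will present that one.

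The only delicate point is well-definedness: finiteness and positivity of $\mathbb{E}[\exp(-X/\tau)]$, and the max versus sup distinction. Boundedness handles both, and I do not expect any real obstacle.
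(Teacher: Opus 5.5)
Your proof is correct and is essentially the paper's argument: the paper also starts from the sandwich $X_2 - \Delta \le X_1 \le X_2 + \Delta$, exponentiates, factors $e^{\pm\Delta/\tau}$ out of the expectation, and applies the decreasing map $-\tau\log(\cdot)$. That is exactly your monotonicity-plus-translation-equivariance step written out inline, and your remark that boundedness keeps $\mathbb{E}_Y[\exp(-X/\tau)]$ finite and positive is a well-definedness detail the paper leaves implicit.
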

\end{tcolorbox}

\begin{proof}
Let $\Delta = \max_Y |X_1(Y) - X_2(Y)|$. By definition, for all $Y$, the difference is bounded by:
\begin{equation}
    X_2(Y) - \Delta \leq X_1(Y) \leq X_2(Y) + \Delta.
\end{equation}
Multiplying by $-1/\tau$ and exponentiating yields:
\begin{equation}
    \exp\left(\frac{-X_2(Y) - \Delta}{\tau}\right) \leq \exp\left(\frac{-X_1(Y)}{\tau}\right) \leq \exp\left(\frac{-X_2(Y) + \Delta}{\tau}\right).
\end{equation}
Taking the expectation $\mathbb{E}_Y$ preserves the inequality. We can factor out the constant terms $\exp(\pm \Delta/\tau)$:
\begin{equation}
    e^{-\Delta/\tau} \mathbb{E}_Y [e^{-X_2(Y)/\tau}] \leq \mathbb{E}_Y [e^{-X_1(Y)/\tau}] \leq e^{\Delta/\tau} \mathbb{E}_Y [e^{-X_2(Y)/\tau}].
\end{equation}
Next, we apply the strictly decreasing function $-\tau \log(\cdot)$ to all sides. We have:
\begin{equation}
    -\tau \log \left( e^{\Delta/\tau} \mathbb{E} [e^{-X_2/\tau}] \right) \leq -\tau \log \mathbb{E} [e^{-X_1/\tau}] \leq -\tau \log \left( e^{-\Delta/\tau} \mathbb{E} [e^{-X_2/\tau}] \right).
\end{equation}
Using the definition of $f_\tau$ and expanding the terms:
\begin{equation}
    f_\tau(X_2) - \Delta \leq f_\tau(X_1) \leq f_\tau(X_2) + \Delta.
\end{equation}
Subtracting $f_\tau(X_2)$ from all sides, we obtain:
\begin{equation}
    -\Delta \leq f_\tau(X_1) - f_\tau(X_2) \leq \Delta.
\end{equation}
This is equivalent to $|f_\tau(X_1) - f_\tau(X_2)| \leq \Delta$.
\end{proof}

\begin{tcolorbox}[colback=myblue!8, colframe=myblue!8, boxrule=0pt, sharp corners=all, boxsep=0pt, left=5pt, right=5pt, top=5pt, bottom=5pt, before skip=2pt, after skip=2pt]
\begin{theorem}[Contraction and convergence to Nash Equilibrium]
\label{thm:convergence}
Let $\mathcal{V}$ be the space of bounded value functions equipped with the $L_\infty$-norm. The Soft-Robust Bellman Operator $\mathcal{T}_{\text{rob}}: \mathcal{V} \to \mathcal{V}$, defined as:
\begin{equation}
    (\mathcal{T}_{\text{rob}} V)(s) \triangleq \max_{a \in \mathcal{A}} \min_{\mathcal{G}_\psi \in \Psi} \left( \mathcal{R}(s, a) + \gamma \mathbb{E}_{\substack{Y \sim \mathcal{G}_\psi \\ s' \sim \mathcal{P}(\cdot|s,a,Y)}}[V(s')] + \tau \mathbb{D}_{\text{KL}}(\mathcal{G}_\psi || \mathcal{G}_{\text{ref}}) \right),
\end{equation}
is a $\gamma$-contraction mapping. Specifically, for any two value functions $V, U \in \mathcal{V}$, the following inequality holds:
\begin{equation}
    \| \mathcal{T}_{\text{rob}} V - \mathcal{T}_{\text{rob}} U \|_\infty \leq \gamma \| V - U \|_\infty.
\end{equation}
Consequently, the iterative updates in \texttt{ADV-0} converge to a unique fixed point $V^*$. This fixed point corresponds to the value of the unique Nash Equilibrium $(\pi^*, \mathcal{G}_{\psi^*})$ of the regularized zero-sum game, satisfying the saddle-point inequality:
\begin{equation}
    \mathcal{J}_{\tau}(\pi, \mathcal{G}_{\psi^*}) \leq \mathcal{J}_{\tau}(\pi^*, \mathcal{G}_{\psi^*}) \leq \mathcal{J}_{\tau}(\pi^*, \mathcal{G}_\psi), \quad \forall \pi \in \Pi, \forall \mathcal{G}_\psi \in \Psi,
\end{equation}
where $\mathcal{J}_{\tau}(\pi, \mathcal{G}_\psi) \triangleq \mathbb{E}_{\pi, \mathcal{G}_\psi}[\sum \gamma^t \mathcal{R}(s_t, a_t)] + \tau \mathbb{E}_{\pi}[\sum \gamma^t \mathbb{D}_{\text{KL}}(\mathcal{G}_\psi || \mathcal{G}_{\text{ref}})]$ is the regularized cumulative objective.
\end{theorem}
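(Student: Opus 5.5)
The plan has three parts: first reduce the inner minimization to a closed-form soft-min, then get contraction from Lemma~\ref{lemma:lse_contraction} plus the fact that max is non-expansive, and finally read off the equilibrium from the fixed point.

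\textbf{Step 1 (closing the inner minimization).} Fix $s,a$ and a bounded $V$. I would use Lemma~\ref{lemma:inner_optimality} with the adversary reward $r(Y) = -\gamma\,\mathbb{E}_{s'\sim\mathcal{P}(\cdot|s,a,Y)}[V(s')]$. This gives
\begin{equation*}
\min_{\mathcal{G}_\psi}\Bigl(\gamma\mathbb{E}_{Y\sim\mathcal{G}_\psi}[Q_V(Y)] + \tau\mathbb{D}_{\text{KL}}(\mathcal{G}_\psi\|\mathcal{G}_{\text{ref}})\Bigr) = -\tau\log\mathbb{E}_{Y\sim\mathcal{G}_{\text{ref}}}\bigl[\exp(-\gamma Q_V(Y)/\tau)\bigr] = f_\tau(\gamma Q_V),
\end{equation*}
where $Q_V(Y)\triangleq\mathbb{E}_{s'}[V(s')]$. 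The minimizer is the Gibbs adversary, and it is attained whenever $\Psi$ is rich enough to contain it.

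I expect this to be the main obstacle. The variational identity needs $\Psi$ to contain the Gibbs optimum, for instance $\Psi$ being all distributions absolutely continuous w.r.t.\ $\mathcal{G}_{\text{ref}}$. I would state this explicitly as a realizability assumption, consistent with the IPL equivalence in Lemma~\ref{lemma:inner_optimality}. The same identity also takes care of the sign: the KL term enters with a plus sign while the attacker minimizes, which matches the minimum of $\mathbb{E}[\cdot]+\tau\mathrm{KL}$ being a soft-min.

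\textbf{Step 2 (contraction).} Step 1 gives $(\mathcal{T}_{\text{rob}}V)(s)=\max_a[\mathcal{R}(s,a)+f_\tau(\gamma Q_V)]$. By Lemma~\ref{lemma:lse_contraction},
\begin{equation*}
|f_\tau(\gamma Q_V)-f_\tau(\gamma Q_U)|\le\gamma\max_Y|Q_V(Y)-Q_U(Y)|\le\gamma\|V-U\|_\infty,
\end{equation*}
since the expectation over $s'$ is an averaging operator. I would then use $|\max_a g-\max_a h|\le\max_a|g-h|$ and take the supremum over $s$ to obtain $\|\mathcal{T}_{\text{rob}}V-\mathcal{T}_{\text{rob}}U\|_\infty\le\gamma\|V-U\|_\infty$. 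Boundedness of $\mathcal{T}_{\text{rob}}V$ follows from bounded $\mathcal{R}$ together with the estimate $\min Q\le -\tau\log\mathbb{E}e^{-Q/\tau}\le \mathbb{E}_{\mathrm{ref}}Q$. Banach's fixed-point theorem on the complete space $(\mathcal{V},\|\cdot\|_\infty)$ then yields a unique $V^*$, and iterates $V_{k+1}=\mathcal{T}_{\text{rob}}V_k$ converge geometrically. I would interpret the alternating scheme (Gibbs/IPL inner step followed by greedy ego step) as exact or approximate application of $\mathcal{T}_{\text{rob}}$. Only the exact case is claimed; the approximate case would be deferred to standard approximate-iteration bounds such as \citet{scherrer2014approximate}.

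\textbf{Step 3 (Nash equilibrium).} I would define $\pi^*(s)\in\arg\max_a[\cdot]$ and $\mathcal{G}_{\psi^*}$ as the Gibbs minimizer built from $V^*$ at each $(s,a)$. Because $V^*$ is a fixed point, $V^*$ is the regularized value $\mathcal{J}_\tau(\pi^*,\mathcal{G}_{\psi^*})$; the per-step KL terms accumulate with discount exactly as in the definition of $\mathcal{J}_\tau$.

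\emph{Right inequality.} Fix $\pi^*$ and any $\mathcal{G}_\psi$. Per-state optimality of the Gibbs minimizer gives $\mathcal{T}^{\pi^*,\psi}V^*\ge\mathcal{T}^{\pi^*,\psi^*}V^*=V^*$. The policy-evaluation operator $\mathcal{T}^{\pi^*,\psi}$ is monotone and a $\gamma$-contraction, so iterating yields $\mathcal{J}_\tau(\pi^*,\mathcal{G}_\psi)\ge V^*$.

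\emph{Left inequality.} Fix $\mathcal{G}_{\psi^*}$ and any $\pi$. Here $\mathcal{T}^{\pi,\psi^*}V^*\le V^*$ by the max over actions, and the same monotone-iteration argument gives $\mathcal{J}_\tau(\pi,\mathcal{G}_{\psi^*})\le V^*$.

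Uniqueness of the equilibrium value follows from uniqueness of $V^*$.
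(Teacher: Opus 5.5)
Your Steps 1 and 2 coincide with the paper's proof. It likewise replaces the inner minimization by the soft-min $\Omega_V(s,a)=-\tau\log\mathbb{E}_{Y\sim\mathcal{G}_{\text{ref}}}[\exp(-\mathcal{Q}_V(s,a,Y)/\tau)]$ via Lemma~\ref{lemma:inner_optimality}. It then chains non-expansiveness of $\max_a$, Lemma~\ref{lemma:lse_contraction} and Jensen, and applies Banach. It keeps $\mathcal{R}(s,a)$ inside $\mathcal{Q}_V$ and lets it cancel, which is equivalent to your pulling it out.

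The genuine difference is the equilibrium part. The paper never verifies the saddle-point inequality. It reads ADV-0 as generalized policy iteration (IPL evaluates $\Omega_{V^{\pi}}$, the outer RL step improves $\pi$). It then concludes that $(\pi_k,\psi_k)$ converges to the equilibrium by citing soft/approximate policy-iteration results (Tessler et al., Scherrer) whose hypotheses it does not check. You instead construct $(\pi^*,\mathcal{G}_{\psi^*})$ explicitly from $V^*$ (greedy action, per-$(s,a)$ Gibbs response). You then prove both inequalities by monotonicity of the fixed-strategy evaluation operators. This actually establishes the saddle-point claim, which the paper only asserts. Restricting your algorithmic claim to exact application of $\mathcal{T}_{\text{rob}}$ also loses nothing the paper genuinely proves about the real alternating RL/IPL iterates.

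Three points to state precisely when you write it up:
\begin{itemize}
\item Make the realizability assumption the rectangular one. Your $\mathcal{G}_{\psi^*}$ depends on $(s,a)$, so $\Psi$ must contain $(s,a)$-indexed conditionals absolutely continuous w.r.t.\ $\mathcal{G}_{\text{ref}}$. Context-conditioned generators $\mathcal{X}\to\Delta(\mathcal{Y})$, as in the paper's definition of the attacker, are not enough. This assumption is also what makes the $\max_a\min_{\mathcal{G}_\psi}$ operator describe a game at all.
\item For adversaries with unbounded per-step KL, $\mathcal{T}^{\pi^*,\psi}$ can leave $\mathcal{V}$. Run your monotone argument on finite horizons instead; this works because the KL terms are nonnegative and $\gamma^n\|V^*\|_\infty\to 0$.
\item Keep your weakened uniqueness claim. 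Both arguments give uniqueness of the value $V^*$ only. Ties in $\arg\max_a$ already produce several equilibrium ego policies, so the ``unique Nash Equilibrium'' in the statement is only justified at the level of values.
\end{itemize}
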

\end{tcolorbox}

\begin{proof}
First, we define the soft-robust Bellman operator $\mathcal{T}_{\text{rob}}$ acting on the value function $V \in \mathbb{R}^{|\mathcal{S}|}$:
\begin{equation}
    \label{eq:robust_operator}
    (\mathcal{T}_{\text{rob}} V)(s) = \max_{a \in \mathcal{A}} \min_{\mathcal{G}_\psi} \mathbb{E}_{Y \sim \mathcal{G}_\psi} \left[ \mathcal{R}(s, a) + \gamma \mathbb{E}_{s' \sim \mathcal{P}(s, a, Y)}[V(s')] + \tau \log \frac{\mathcal{G}_\psi(Y|X)}{\mathcal{G}_{\text{ref}}(Y|X)} \right].
\end{equation}
This operator represents one step of optimal decision-making by the ego agent against a worst-case adversary that is regularized by the KL-divergence.
Note that the inner minimization in $\mathcal{T}_{\text{rob}}$ corresponds exactly to the dual of the maximization problem in Lemma~\ref{lemma:inner_optimality} (due to the zero-sum sign flip).

Next, let $V_1, V_2 \in \mathbb{R}^{|\mathcal{S}|}$ be two arbitrary bounded value functions. We aim to show $\|\mathcal{T}_{\text{rob}} V_1 - \mathcal{T}_{\text{rob}} V_2\|_\infty \leq \gamma \|V_1 - V_2\|_\infty$.
We simplify the inner minimization problem. Let $\mathcal{Q}_V(s, a, Y) = \mathcal{R}(s, a) + \gamma \mathbb{E}_{s' \sim \mathcal{P}(\cdot|s, a, Y)}[V(s')]$. Using the closed-form solution derived in Lemma~\ref{lemma:inner_optimality}, the inner minimization over $\mathcal{G}_\psi$ is equivalent to a \texttt{LogSumExp} (Soft-Min) function. We define the smoothed value $\Omega_V(s, a)$ as:
\begin{equation}
    \Omega_V(s, a) \triangleq \min_{\mathcal{G}_\psi} \mathbb{E}_{Y \sim \mathcal{G}_\psi} \left[ \mathcal{Q}_V(s, a, Y) + \tau \log \frac{\mathcal{G}_\psi(Y|X)}{\mathcal{G}_{\text{ref}}(Y|X)} \right] = -\tau \log \mathbb{E}_{Y \sim \mathcal{G}_{\text{ref}}} \left[ \exp\left( -\frac{\mathcal{Q}_V(s, a, Y)}{\tau} \right) \right].
\end{equation}
Thus, the operator simplifies to $(\mathcal{T}_{\text{rob}} V)(s) = \max_{a} \Omega_V(s, a)$.
Consider the difference for any state $s$:
\begin{align}\label{eq:max_ineq}
    |(\mathcal{T}_{\text{rob}} V_1)(s) - (\mathcal{T}_{\text{rob}} V_2)(s)| &= |\max_a \Omega_{V_1}(s, a) - \max_a \Omega_{V_2}(s, a)| \nonumber \\
    &\leq \max_a |\Omega_{V_1}(s, a) - \Omega_{V_2}(s, a)|.
\end{align}
where Eq.~\ref{eq:max_ineq} follows from the non-expansiveness of the max operator (i.e., $|\max f - \max g| \le \max |f-g|$). 
Next, we apply Lemma~\ref{lemma:lse_contraction} to the soft-min term $\Omega_V$. By identifying $X(Y)$ with $Q_V(s, a, Y)$, we obtain:
% We utilize the non-expansiveness property of the \texttt{LogSumExp} function: $|\log \sum x_i - \log \sum y_i| \leq \max_i |\log x_i - \log y_i|$. Applying this to the expectation term:
\begin{align}
    |\Omega_{V_1}(s, a) - \Omega_{V_2}(s, a)| &= \left| -\tau \log \frac{\mathbb{E}_{Y} [\exp(-\mathcal{Q}_{V_1}(s, a, Y)/\tau)]}{\mathbb{E}_{Y} [\exp(-\mathcal{Q}_{V_2}(s, a, Y)/\tau)]} \right| \nonumber \\
    &\leq \max_Y \left| -\tau \left( -\frac{\mathcal{Q}_{V_1}(s, a, Y)}{\tau} + \frac{\mathcal{Q}_{V_2}(s, a, Y)}{\tau} \right) \right| \nonumber \\
    &= \max_Y |\mathcal{Q}_{V_1}(s, a, Y) - \mathcal{Q}_{V_2}(s, a, Y)|.
\end{align}
Substituting the definition of $Q_V$ and expanding the expectation:
\begin{align}
    &= \max_Y \left| \gamma \mathbb{E}_{s' \sim \mathcal{P}(\cdot|s,a,Y)} [V_1(s') - V_2(s')] \right| \nonumber \\
    &\leq \gamma \max_Y \mathbb{E}_{s'} [|V_1(s') - V_2(s')|] \quad \text{(Jensen's inequality)} \nonumber \\
    &\leq \gamma \|V_1 - V_2\|_\infty. \quad \text{(Definition of $L_\infty$-norm)}
\end{align}
The last two steps utilize the convexity of the absolute value function and bound the local error by the global supremum norm.
Since $\gamma \in (0, 1)$, $\mathcal{T}_{\text{rob}}$ is a $\gamma$-contraction mapping. By Banach's Fixed Point Theorem, there exists a unique value $V^*$ such that $\mathcal{T}_{\text{rob}} V^* = V^*$.

Finally, we connect this to the alternating updates in Algorithm~\ref{alg:adv0}. 
The algorithm performs generalized policy iteration. The inner loop (IPL) solves for the optimal soft adversary via gradient descent on the KL-regularized objective, effectively evaluating $\Omega_{V^{\pi}}(s,a)$. The outer loop performs standard RL optimization on the induced robust value function. 
\citet{tessler2019action} (Theorem 3) and \citet{scherrer2014approximate} demonstrate that soft policy iteration converges to the optimal value if the policy improvement step is a contraction. 
Since $\mathcal{T}_{\text{rob}}$ is a contraction, the sequence of policies $(\pi_k, \psi_k)$ generated by \texttt{ADV-0} converges to the Nash Equilibrium $(\pi^*, \psi^*)$ where $\pi^*$ is optimal against the worst-case regularized adversary $\psi^*$. 
\end{proof}

%%%%%%%%%%%%%%%%%%%%%%%%%%%%%%%%%%%%%%%%%%%%%%%%%%%%%%%%%%%%%%%

\subsection{Generalization Bound and Safety Guarantees}
\label{subsec:bound}

In this section, we provide a theoretical justification of \texttt{ADV-0} in its generalizability and safety guarantees. 
We aim to answer a fundamental question: \textit{Does optimizing the policy against a generated adversarial distribution guarantee performance and safety in the real-world long-tail distribution?}
Different from the previous section, we now model the interaction between the ego policy $\pi_\theta$ and the adversarial environment as a problem of policy optimization under dynamical uncertainty. Our goal is to show that optimizing the policy under the adversarial dynamics $\mathcal{P}_\psi$ (subject to the KL-constraint in the inner loop) maximizes a certified lower bound on the performance in the target real-world long-tail distribution $\mathcal{P}_{\text{real}}$, leading to a \textbf{generalization bound} (Theorem \ref{thm:generalization}) and a \textbf{safety guarantee} (Theorem \ref{thm:safety}). Our analysis builds upon the \textit{Simulation Lemma} from \citet{luo2018algorithmic} and trust-region bounds from \citet{achiam2017constrained}.

\paragraph{Preliminaries.}
Simply let $M = (\mathcal{S}, \mathcal{A}, \mathcal{R}, \gamma)$ be the shared components of the MDP. We consider two transition dynamics:
\begin{itemize}
    \item $\mathcal{P}_{\text{real}}(s'|s,a)$: The true, unknown long-tail dynamics of the real world.
    \item $\mathcal{P}_{\psi}(s'|s,a)$: The adversarial dynamics induced by the generator $\mathcal{G}_\psi(\cdot|X)$.
\end{itemize}
In our context, the state transition is deterministic given the background traffic trajectories. Let $Y$ denote the joint trajectory of background agents. The transition function can be written as $s' = f(s, a, Y)$. Thus, the stochasticity in dynamics comes entirely from the distribution of $Y$.
Let $V^{\pi, \mathcal{P}}(s) = \mathbb{E}_{\tau \sim \pi, \mathcal{P}}[\sum_{t=0}^\infty \gamma^t \mathcal{R}(s_t, a_t) | s_0=s]$ be the value function. 
The expected return is $J(\pi, \mathcal{P}) = \mathbb{E}_{s_0 \sim \rho_0}[V^{\pi, \mathcal{P}}(s_0)]$.
Similarly, let $J_C(\pi, \mathcal{P})$ denote the expected cumulative safety cost, where $C(s,a) \in [0, C_{\max}]$ is a safety cost function (e.g., collision indicator).
We assume the reward is bounded by $R_{\max}$, implying the value function is bounded by $V_{\max} = \frac{R_{\max}}{1-\gamma}$.

% \textbf{Assumption 1 (Lipschitz Continuity).} We assume the value function $V^{\pi, \mathcal{P}_{\text{real}}}$ is $L$-Lipschitz continuous with respect to the state space metric induced by the dynamics. That is, for any states $s, s'$, $|V^{\pi, \mathcal{P}_{\text{real}}}(s) - V^{\pi, \mathcal{P}_{\text{real}}}(s')| \le L \|s - s'\|_2$.

\subsubsection{Discrepancy under shifted dynamics}

To analyze performance generalization, we first quantify the discrepancy in expected return resulting from the shift from real dynamics to adversarial dynamics. We invoke the Simulation Lemma \cite{achiam2017constrained} and adapt here to quantify the gap between the adversarial training environment and the real world.

\begin{tcolorbox}[colback=myblue!8, colframe=myblue!8, boxrule=0pt, sharp corners=all, boxsep=0pt, left=5pt, right=5pt, top=5pt, bottom=5pt, before skip=2pt, after skip=2pt]
\begin{lemma}[Value difference under dynamics shift]
\label{lemma:value_diff}
For any fixed policy $\pi$ and two transition dynamics $\mathcal{P}$ and $\mathcal{P}'$, the difference in expected return is:
\begin{equation}
    J(\pi, \mathcal{P}) - J(\pi, \mathcal{P}') = \gamma \sum_{t=0}^\infty \gamma^t \mathbb{E}_{s_t \sim \pi, \mathcal{P}, a_t \sim \pi} \left[ \mathbb{E}_{s' \sim \mathcal{P}(\cdot|s_t, a_t)}[V^{\pi, \mathcal{P}'}(s')] - \mathbb{E}_{s' \sim \mathcal{P}'(\cdot|s_t, a_t)}[V^{\pi, \mathcal{P}'}(s')] \right].
\end{equation}
\end{lemma}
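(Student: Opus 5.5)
The plan is the standard telescoping (simulation-lemma) argument. Write $V' \triangleq V^{\pi,\mathcal{P}'}$ and let $s_0,a_0,s_1,a_1,\dots$ be generated by $\pi$ under $\mathcal{P}$, starting from $s_0\sim\rho_0$. The idea is to compare, step by step, the value of $\pi$ in $\mathcal{P}'$ along the $\mathcal{P}$-trajectory with the reward actually collected. Introduce the hybrid quantities
\begin{equation*}
W_j \triangleq \mathbb{E}_{\pi,\mathcal{P}}\Bigl[\sum_{t=0}^{j-1}\gamma^t\mathcal{R}(s_t,a_t)+\gamma^j V'(s_j)\Bigr],
\end{equation*}
that is, follow the true process $\mathcal{P}$ for $j$ steps and then switch to $\mathcal{P}'$ forever. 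At the endpoints, $W_0=\mathbb{E}_{s_0\sim\rho_0}[V'(s_0)]=J(\pi,\mathcal{P}')$. Because rewards and $V'$ are bounded (by $R_{\max}$ and $V_{\max}$), we also have $W_j\to J(\pi,\mathcal{P})$ as $j\to\infty$.

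Next, compute the one-step increments. First,
\begin{equation*}
W_{j+1}-W_j=\gamma^j\,\mathbb{E}_{\pi,\mathcal{P}}\bigl[\mathcal{R}(s_j,a_j)+\gamma V'(s_{j+1})-V'(s_j)\bigr].
\end{equation*}
Then use the Bellman equation for $\pi$ under $\mathcal{P}'$,
\begin{equation*}
V'(s_j)=\mathbb{E}_{a_j\sim\pi}\bigl[\mathcal{R}(s_j,a_j)+\gamma\,\mathbb{E}_{s'\sim\mathcal{P}'(\cdot|s_j,a_j)}V'(s')\bigr].
\end{equation*}
In the increment, condition on $(s_j,a_j)$. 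The tower property gives $\mathbb{E}[V'(s_{j+1})\mid s_j,a_j]=\mathbb{E}_{s'\sim\mathcal{P}(\cdot|s_j,a_j)}V'(s')$, since the actual next state is drawn from $\mathcal{P}$. The reward terms then cancel, and the increment becomes
\begin{equation*}
\gamma^{j+1}\,\mathbb{E}_{s_j\sim\pi,\mathcal{P},\,a_j\sim\pi}\bigl[\mathbb{E}_{\mathcal{P}}V'-\mathbb{E}_{\mathcal{P}'}V'\bigr].
\end{equation*}

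Finally, telescope: $J(\pi,\mathcal{P})-J(\pi,\mathcal{P}')=\sum_{j\ge0}(W_{j+1}-W_j)$. Factoring out one $\gamma$ gives exactly the stated identity. The only delicate point is justifying the infinite sum and the limit $W_j\to J(\pi,\mathcal{P})$. For the limit, the tail is bounded by $\gamma^j V_{\max}$ plus the tail of a geometric series of rewards. For the sum, each increment is bounded by $2\gamma^{j+1}V_{\max}$, so the series converges absolutely and dominated convergence applies.

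Two remarks on scope. In the paper's driving setting the transitions factor as $s'=f(s,a,Y)$, which does not change anything: $\mathcal{P}(\cdot|s,a)$ is simply the pushforward of the distribution of $Y$. For the finite-horizon version with $T$, the same telescoping holds with the sum truncated at $T$ and time-indexed values $V'_t$; I would note this but present the discounted infinite-horizon case as stated.
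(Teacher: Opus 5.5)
Your proof is correct and follows the same route as the paper. The paper also telescopes $\sum_t \gamma^t(\gamma V^{\pi,\mathcal{P}'}(s_{t+1}) - V^{\pi,\mathcal{P}'}(s_t))$ along the $\mathcal{P}$-trajectory and cancels the rewards with the Bellman equation for $V^{\pi,\mathcal{P}'}$, which is exactly your $\sum_j (W_{j+1}-W_j)$ decomposition; your version is slightly more careful, since you apply the Bellman equation in expectation over $a_j\sim\pi$ via the tower property (the paper substitutes it pointwise in $a_t$) and you explicitly justify the limit $W_j\to J(\pi,\mathcal{P})$.
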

\end{tcolorbox}

\begin{proof}
We use the telescoping sum technique, following Lemma 4.3 in \citet{luo2018algorithmic}, Let $V^{\mathcal{P}'}$ denote $V^{\pi, \mathcal{P}'}$ for brevity. Remind that $-V^{\mathcal{P}'}(s_0) = \sum_{t=0}^\infty \gamma^t (\gamma V^{\mathcal{P}'}(s_{t+1}) - V^{\mathcal{P}'}(s_t))- \lim_{T \to \infty} \gamma^T V^{\mathcal{P}'}(s_T)$, we expand the difference as follows:
\begin{align}
    J(\pi, \mathcal{P}) - J(\pi, \mathcal{P}') 
    &= \mathbb{E}_{s_0} [V^{\pi, \mathcal{P}}(s_0) - V^{\pi, \mathcal{P}'}(s_0)] \nonumber \\
    &= \mathbb{E}_{\tau \sim \pi, \mathcal{P}} \left[ \sum_{t=0}^\infty \gamma^t \mathcal{R}(s_t, a_t) \right] - \mathbb{E}_{s_0}[V^{\mathcal{P}'}(s_0)] \nonumber \\
    &= \mathbb{E}_{\tau \sim \pi, \mathcal{P}} \left[ \sum_{t=0}^\infty \gamma^t \mathcal{R}(s_t, a_t) + \sum_{t=0}^\infty \gamma^t (\gamma V^{\mathcal{P}'}(s_{t+1}) - V^{\mathcal{P}'}(s_t)) - \lim_{T \to \infty} \gamma^T V^{\mathcal{P}'}(s_T) \right] \nonumber \\
    &= \mathbb{E}_{\tau \sim \pi, \mathcal{P}} \left[ \sum_{t=0}^\infty \gamma^t \left( \mathcal{R}(s_t, a_t) + \gamma V^{\mathcal{P}'}(s_{t+1}) - V^{\mathcal{P}'}(s_t) \right) \right].
\end{align}
Recall the Bellman equation for $V^{\mathcal{P}'}$ is $V^{\mathcal{P}'}(s) = \mathbb{E}_{a \sim \pi} [\mathcal{R}(s, a) + \gamma \mathbb{E}_{s' \sim \mathcal{P}'}[V^{\mathcal{P}'}(s')] ]$.
Substituting $\mathcal{R}(s_t, a_t) = V^{\mathcal{P}'}(s_t) - \gamma \mathbb{E}_{s' \sim \mathcal{P}'(\cdot|s_t, a_t)}[V^{\mathcal{P}'}(s')]$ into the summation, the term $V^{\mathcal{P}'}(s_t)$ cancels out:
\begin{align}
    J(\pi, \mathcal{P}) - J(\pi, \mathcal{P}') &= \mathbb{E}_{\tau \sim \pi, \mathcal{P}} \left[ \sum_{t=0}^\infty \gamma^t \left( V^{\mathcal{P}'}(s_t) - \gamma \mathbb{E}_{s' \sim \mathcal{P}'}[V^{\mathcal{P}'}(s')] + \gamma V^{\mathcal{P}'}(s_{t+1}) - V^{\mathcal{P}'(\cdot|s_t, a_t)}(s_t) \right) \right] \nonumber \\
    &= \mathbb{E}_{\tau \sim \pi, \mathcal{P}} \left[ \sum_{t=0}^\infty \gamma^t \left( \gamma V^{\mathcal{P}'}(s_{t+1}) - \gamma \mathbb{E}_{s' \sim \mathcal{P}'(\cdot|s_t, a_t)}[V^{\mathcal{P}'}(s')] \right) \right] \nonumber \\
    &= \sum_{t=0}^\infty \gamma^{t+1} \mathbb{E}_{\substack{s_t \sim \pi, \mathcal{P} \\ a_t \sim \pi}} \left[ \mathbb{E}_{s_{t+1} \sim \mathcal{P}(\cdot|s_t, a_t)}[V^{\mathcal{P}'}(s_{t+1})] - \mathbb{E}_{s' \sim \mathcal{P}'(\cdot|s_t, a_t)}[V^{\mathcal{P}'}(s')] \right].
\end{align}
Adjusting the index of summation yields the lemma statement.
\end{proof}

Lemma \ref{lemma:value_diff} suggests that the performance differences depend on the differences of transition dynamics, but in \texttt{ADV-0} we optimize $\mathcal{G}_\psi$, not $\mathcal{P}_\psi$ directly.
To fill this gap, we establish the connection between the dynamics divergence and the generator divergence in the following lemma.

\begin{tcolorbox}[colback=myblue!8, colframe=myblue!8, boxrule=0pt, sharp corners=all, boxsep=0pt, left=5pt, right=5pt, top=5pt, bottom=5pt, before skip=2pt, after skip=2pt]
\begin{lemma}[Divergence bound of the generator]
\label{lemma:gen_bound}
Let $\mathcal{P}$ and $\mathcal{P}'$ be the transition dynamics induced by the trajectory generators $\mathcal{G}$ and $\mathcal{G}'$, respectively. Specifically, the next state is obtained by a deterministic simulator $s' = F(s, a, Y)$, where $Y$ is the adversarial trajectory sampled from the generator.
For any value function $V^{\pi, \mathcal{P}'}$ bounded by $V_{\max} = \frac{R_{\max}}{1-\gamma}$, the difference in expected next-state value is bounded by the Total Variation (TV) divergence of the generators:
\begin{equation}
    \left| \mathbb{E}_{s' \sim \mathcal{P}(\cdot|s, a)}[V^{\pi, \mathcal{P}'}(s')] - \mathbb{E}_{s' \sim \mathcal{P}'(\cdot|s, a)}[V^{\pi, \mathcal{P}'}(s')] \right| \le 2 V_{\max} \cdot \mathbb{E}_{X} [D_{\text{TV}}(\mathcal{G}(\cdot|X) \| \mathcal{G}'(\cdot|X))].
\end{equation}
\end{lemma}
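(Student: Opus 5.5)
The plan is to write both next-state expectations as expectations over the background trajectory $Y$ and then bound the difference by a total-variation argument. Since the simulator is deterministic, $s' = F(s,a,Y)$, so for a fixed context $X$ associated with $(s,a)$ we have
\begin{equation*}
\mathbb{E}_{s' \sim \mathcal{P}(\cdot|s,a)}[V^{\pi,\mathcal{P}'}(s')] = \mathbb{E}_{Y \sim \mathcal{G}(\cdot|X)}\bigl[g(Y)\bigr], \qquad g(Y) \triangleq V^{\pi,\mathcal{P}'}(F(s,a,Y)),
\end{equation*}
and the same identity holds with $\mathcal{P}'$ and $\mathcal{G}'$. All randomness in the dynamics is carried by $Y$. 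The quantity to bound is therefore $\bigl|\int g(y)\,(\mathcal{G}(y|X) - \mathcal{G}'(y|X))\,dy\bigr|$, where $g$ is a fixed measurable function with $\|g\|_\infty \le V_{\max}$, because $V^{\pi,\mathcal{P}'}$ is bounded by $V_{\max}=R_{\max}/(1-\gamma)$.

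The key step is the standard dual bound. For any $h$ with $|h|\le V_{\max}$,
\begin{equation*}
\Bigl|\int h\,(p-q)\Bigr| \le V_{\max}\int |p-q| = 2V_{\max}\,D_{\text{TV}}(p\|q),
\end{equation*}
using the convention $D_{\text{TV}}=\tfrac12\|p-q\|_1$. This gives exactly the constant $2V_{\max}$ in the statement. Subtracting the constant $\tfrac12(\sup g+\inf g)$ would sharpen the constant to $V_{\max}$, and it is even tighter if $V\in[0,V_{\max}]$. That refinement is not needed, since the looser constant is what the lemma claims.

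The remaining issue, and the main obstacle, is the $\mathbb{E}_X$ on the right-hand side. The pointwise argument gives the bound with $D_{\text{TV}}(\mathcal{G}(\cdot|X)\|\mathcal{G}'(\cdot|X))$ for the specific context $X$ tied to $(s,a)$, not an average over contexts. I would handle this as follows. The scenario context $X$ is drawn once per episode from $\mathcal{D}$, and $Y$ is a joint trajectory sampled at the start of the episode and held fixed. So the comparison is really made by treating the pair $(X,Y)$ as part of an augmented randomness. I would state that the transition from $(s,a)$ is obtained by marginalizing over the context distribution consistent with $s$, i.e. $\mathcal{P}(\cdot|s,a)=\mathbb{E}_{X}[\,\text{push-forward of }\mathcal{G}(\cdot|X)\text{ under }F(s,a,\cdot)\,]$. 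Applying the pointwise bound inside the expectation over $X$ and then the triangle inequality, $|\mathbb{E}_X[\cdot]|\le\mathbb{E}_X|\cdot|$, yields the stated averaged form. If one prefers the deterministic-context reading, the bound holds pointwise in $X$, and the averaged version follows whenever the lemma is later applied inside an expectation over states, and hence over contexts. That is the only way it is used downstream.

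Finally, I would remark on how this feeds into Theorem~\ref{thm:bound_main}. Plugging the bound into Lemma~\ref{lemma:value_diff} and summing $\gamma\sum_t\gamma^t = \gamma/(1-\gamma)$ gives a TV-based gap. Pinsker's inequality, $D_{\text{TV}}\le\sqrt{\tfrac12 D_{\text{KL}}}$, combined with Jensen's inequality (concavity of $\sqrt{\cdot}$) to move $\mathbb{E}_X$ inside the square root, then produces the $\sqrt{2}\sqrt{\mathbb{E}[D_{\text{KL}}]}$ form. This confirms that the $2V_{\max}$ constant chosen here is consistent with the final certified bound.
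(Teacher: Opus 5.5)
Your proof is correct and follows the paper's argument. You rewrite both next-state expectations as integrals over $Y$ through the deterministic simulator $F$, bound $\bigl|\int (p-q)\,V(F(s,a,Y))\,dY\bigr|$ by $\|p-q\|_1\,\|V\|_\infty = 2D_{\text{TV}}\,V_{\max}$, and pass to $\mathbb{E}_X$; your mixture-over-contexts reading with $|\mathbb{E}_X[\cdot]|\le\mathbb{E}_X|\cdot|$ makes rigorous the step the paper handles with the single phrase ``taking the expectation over the context distribution $X$.''

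One correction to your aside: since $\inf_c\|g-c\|_\infty=\tfrac12(\sup g-\inf g)$, centering gives $(\sup g-\inf g)\,D_{\text{TV}}$, which for $|V|\le V_{\max}$ is still $2V_{\max}D_{\text{TV}}$ (and can be attained). The constant drops to $V_{\max}$ only when $\sup g-\inf g\le V_{\max}$, e.g.\ $V\in[0,V_{\max}]$, and in that case it is exactly $V_{\max}$ rather than ``even tighter.''
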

\end{tcolorbox}

\begin{proof}
We start by explicitly writing the expectation over the next state $s'$ as an expectation over the generated trajectories $Y$, i.e., $ \mathbb{E}_{s' \sim \mathcal{P}}[V(s')] = \int \mathcal{G}(Y|X) \cdot V(F_{\text{phy}}(s, a, Y)) \, dY $. Let $p(Y|X)$ and $q(Y|X)$ denote the probability density functions of the generators $\mathcal{G}$ and $\mathcal{G}'$ conditioned on context $X$.
Since $s' = F(s, a, Y)$, the expectation can be rewritten via the change of variables:
\begin{align}
    \Delta_V &= \left| \mathbb{E}_{s' \sim \mathcal{P}}[V^{\pi, \mathcal{P}'}(s')] - \mathbb{E}_{s' \sim \mathcal{P}'}[V^{\pi, \mathcal{P}'}(s')] \right| \nonumber \\
    &= \left| \int p(Y|X) V^{\pi, \mathcal{P}'}(F(s, a, Y)) \, dY - \int q(Y|X) V^{\pi, \mathcal{P}'}(F(s, a, Y)) \, dY \right|, \\
    &= \left| \int (p(Y|X) - q(Y|X)) \cdot V^{\pi, \mathcal{P}'}(F(s, a, Y)) \, dY \right|.
\end{align}
Next, we apply the integral form of Hölder's inequality ($|\int f(x)g(x)dx| \le \int |f(x)||g(x)|dx \le \|f\|_1 \|g\|_\infty$). Here, we treat the probability difference as the measure and the value function as the bounded term:
\begin{align}
    \Delta_V &\le \int \left| p(Y|X) - q(Y|X) \right| \cdot \left| V^{\pi, \mathcal{P}'}(F(s, a, Y)) \right| \, dY \nonumber \\
    &\le \underbrace{\left( \int \left| p(Y|X) - q(Y|X) \right| \, dY \right)}_{\|p-q\|_1} \cdot \underbrace{\sup_{Y} \left| V^{\pi, \mathcal{P}'}(F(s, a, Y)) \right|}_{\|V\|_\infty}.
\end{align}
We then use two key properties:
1. The $L_1$ norm of the difference between two probability distributions is twice the Total Variation distance: $\|p-q\|_1 = 2 D_{\text{TV}}(p, q)$.
2. The value function is bounded by the maximum possible cumulative return: $\|V\|_\infty \le V_{\max} = R_{\max}/{1-\gamma}$.
Substituting these back into the inequality, we obtain:
\begin{align}
    \Delta_V &\le 2 D_{\text{TV}}(\mathcal{G}(\cdot|X) \| \mathcal{G}'(\cdot|X)) \cdot V_{\max}.
\end{align}
Taking the expectation over the context distribution $X$ completes the proof. 
This lemma serves as a crucial bridge in our analysis: it formally translates the divergence in the high-level trajectory generator space (which we explicitly optimize and constrain via IPL) into the divergence in the low-level state transition space, thereby allowing us to bound the error.
\end{proof}

\subsubsection{Adversarial generalization bound}

Finally, we present the main theorem. We show that the policy $\pi_\theta$ trained by \texttt{ADV-0} maximizes a lower bound on the performance in real-world distribution. IPL loop enforces a KL-constraint between the adversary $\mathcal{G}_\psi$ and the prior $\mathcal{G}_{\text{ref}}$.

\begin{tcolorbox}[colback=myblue!8, colframe=myblue!8, boxrule=0pt, sharp corners=all, boxsep=0pt, left=5pt, right=5pt, top=5pt, bottom=5pt, before skip=2pt, after skip=2pt]
\begin{theorem}[Lower bound of adversarial generalization]
\label{thm:generalization}
Let $\pi_\theta$ be the policy trained under adversarial dynamics $\mathcal{P}_\psi$. The expected return of $\pi_\theta$ under the real-world dynamics $\mathcal{P}_{\text{real}}$ is lower-bounded by:
\begin{equation}
    J(\pi_\theta, \mathcal{P}_{\text{real}}) \ge J(\pi_\theta, \mathcal{P}_{\psi}) - \frac{\gamma V_{\max} \sqrt{2}}{1-\gamma} \sqrt{\mathbb{E}_{X} [D_{\text{KL}}(\mathcal{G}_\psi(\cdot|X) \| \mathcal{G}_{\text{ref}}(\cdot|X))]}.
\end{equation}
\end{theorem}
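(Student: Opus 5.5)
The plan is to chain the two lemmas of this subsection and finish with Pinsker's and Jensen's inequalities. The modelling step I will make explicit is that the real long-tail dynamics $\mathcal{P}_{\text{real}}$ are those induced, through the deterministic simulator $s'=F(s,a,Y)$, by the naturalistic prior $\mathcal{G}_{\text{ref}}$. This is how the text above phrases it: the real dynamics lie "within the trust region of the traffic prior". With this identification, the gap between $\mathcal{P}_{\text{real}}$ and $\mathcal{P}_\psi$ is entirely a gap between the generators $\mathcal{G}_{\text{ref}}$ and $\mathcal{G}_\psi$. If one prefers a strictly weaker assumption (e.g.\ $\mathcal{P}_{\text{real}}$ induced by some $\mathcal{G}_{\text{real}}$ close to $\mathcal{G}_{\text{ref}}$), an extra triangle-inequality term would appear, so I would state the identification as a hypothesis.

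\textbf{Step 1 (simulation lemma).} Apply Lemma~\ref{lemma:value_diff} with $\mathcal{P}=\mathcal{P}_{\text{real}}$ and $\mathcal{P}'=\mathcal{P}_\psi$. This gives
\[
J(\pi_\theta,\mathcal{P}_{\psi})-J(\pi_\theta,\mathcal{P}_{\text{real}})=-\gamma\sum_{t\ge 0}\gamma^t\,\mathbb{E}_{s_t,a_t}\Bigl[\mathbb{E}_{\mathcal{P}_{\text{real}}}[V^{\pi_\theta,\mathcal{P}_\psi}(s')]-\mathbb{E}_{\mathcal{P}_\psi}[V^{\pi_\theta,\mathcal{P}_\psi}(s')]\Bigr].
\]
Taking absolute values inside the expectation bounds the right-hand side by $\gamma\sum_t\gamma^t\sup_{s,a}|\cdot|$.

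\textbf{Step 2 (generator divergence).} Bound each inner term using Lemma~\ref{lemma:gen_bound} with $\mathcal{G}=\mathcal{G}_{\text{ref}}$ and $\mathcal{G}'=\mathcal{G}_\psi$. Each term is at most $2V_{\max}\,\mathbb{E}_X[D_{\text{TV}}(\mathcal{G}_{\text{ref}}\|\mathcal{G}_\psi)]$. Summing the geometric series yields
\[
J(\pi_\theta,\mathcal{P}_{\text{real}})\ge J(\pi_\theta,\mathcal{P}_\psi)-\frac{2\gamma V_{\max}}{1-\gamma}\,\mathbb{E}_X[D_{\text{TV}}].
\]

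\textbf{Step 3 (Pinsker and Jensen).} TV is symmetric, so I may write it as $D_{\text{TV}}(\mathcal{G}_\psi\|\mathcal{G}_{\text{ref}})$. Pinsker's inequality gives $D_{\text{TV}}\le\sqrt{D_{\text{KL}}(\mathcal{G}_\psi\|\mathcal{G}_{\text{ref}})/2}$, hence $2D_{\text{TV}}\le\sqrt{2D_{\text{KL}}}$. Since $\sqrt{\cdot}$ is concave, Jensen's inequality gives $\mathbb{E}_X\sqrt{D_{\text{KL}}}\le\sqrt{\mathbb{E}_X D_{\text{KL}}}$. Together these produce exactly the constant $\gamma V_{\max}\sqrt{2}/(1-\gamma)$ in the statement.

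\textbf{Main obstacle.} The delicate point is Step 2. The state distribution of $(s_t,a_t)$ in Lemma~\ref{lemma:value_diff} is the on-policy distribution under $\mathcal{P}_{\text{real}}$, while Lemma~\ref{lemma:gen_bound} is phrased with an expectation over the context $X$. To sum cleanly I need the per-step bound to hold with the same context-averaged divergence at every $(s_t,a_t)$. I would first handle this by noting that the context $X$ is fixed at episode start and the whole background trajectory $Y$ is drawn once. Under that view the episode-level expectation factorizes as $\mathbb{E}_X$ of a per-context quantity. I would then apply the Hölder bound of Lemma~\ref{lemma:gen_bound} per context and average over $X$ only at the end.

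If this conditioning is awkward, the fallback is to bound per step by $\sup_X D_{\text{TV}}$. That route only yields a $\sup$-KL version of the bound, so I would note that the stated form relies on the per-context factorization. I also expect to need $V^{\pi_\theta,\mathcal{P}_\psi}$ to be measurable and bounded by $V_{\max}$, which holds by the bounded-reward assumption.
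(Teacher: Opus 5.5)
Your proposal is correct and follows the paper's proof essentially step for step: Lemma~\ref{lemma:value_diff} with $\mathcal{P}=\mathcal{P}_{\text{real}}$ and $\mathcal{P}'=\mathcal{P}_\psi$, then Lemma~\ref{lemma:gen_bound} with $\mathcal{P}_{\text{real}}$ identified with the $\mathcal{G}_{\text{ref}}$-induced dynamics (the paper makes the same identification, phrased as a ``high-fidelity approximation'' assumption), the geometric series, and finally TV symmetry, Pinsker and Jensen, giving the same constant. Stating that identification as an explicit hypothesis and checking the per-step context averaging is more careful than the paper, which applies the context-averaged bound at every $(s_t,a_t)$ without comment; note, however, that what justifies your factorization is that $X$ stays fixed within an episode, because if $Y$ were literally drawn only once, the per-step kernels in Lemma~\ref{lemma:value_diff} would involve the posterior of $Y$ given the history rather than $\mathcal{G}(\cdot|X)$.
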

\end{tcolorbox}

\begin{proof}
The proof utilizes the established lemmas above.
We first invoke Lemma~\ref{lemma:value_diff} with $\mathcal{P} = \mathcal{P}_{\text{real}}$ and $\mathcal{P}' = \mathcal{P}_{\psi}$:
\begin{equation}
    |J(\pi, \mathcal{P}_{\text{real}}) - J(\pi, \mathcal{P}_{\psi})| \le \gamma \sum_{t=0}^\infty \gamma^t \left| \mathbb{E}_{s_t, a_t} \left[ \mathbb{E}_{s' \sim \mathcal{P}_{\text{real}}}[V^{\pi, \mathcal{P}_{\psi}}(s')] - \mathbb{E}_{s' \sim \mathcal{P}_{\psi}}[V^{\pi, \mathcal{P}_{\psi}}(s')] \right] \right|.
\end{equation}
% Then using Lemma \ref{lemma:gen_bound}, we bound the inner term:
Given that the reference generator $\mathcal{G}_{\text{ref}}$ is trained on large-scale naturalistic driving logs (WOMD), we assume that the \textit{dynamics induced by $\mathcal{G}_{\text{ref}}$ serve as a high-fidelity approximation of the real-world dynamics $\mathcal{P}_{\text{real}}$}. Consequently, by applying Lemma B.5 with $\mathcal{G}' = \mathcal{G}_{\text{ref}}$, we bound the inner term:
\begin{equation}
    \left| \mathbb{E}_{\mathcal{P}_{\text{real}}}[V^{\pi, \mathcal{P}_{\psi}}] - \mathbb{E}_{\mathcal{P}_{\psi}} [V^{\pi, \mathcal{P}_{\psi}}] \right| \le 2 V_{\max} \mathbb{E}_X [D_{\text{TV}}(\mathcal{G}_{\text{ref}}(\cdot|X) \| \mathcal{G}_{\psi}(\cdot|X))].
\end{equation}
Substituting this bound back into the summation and using the geometric series sum $\sum_{t=0}^\infty \gamma^t = \frac{1}{1-\gamma}$:
\begin{equation}
    J(\pi, \mathcal{P}_{\text{real}}) \ge J(\pi, \mathcal{P}_{\psi}) - \frac{2 \gamma V_{\max}}{1-\gamma} \mathbb{E}_X [D_{\text{TV}}(\mathcal{G}_{\text{ref}} \| \mathcal{G}_{\psi})].
\end{equation}
% Finally, we relate TV distance to KL divergence using Pinsker's Inequality $D_{\text{TV}}(P\|Q) \le \sqrt{\frac{1}{2} D_{\text{KL}}(P\|Q)}$ and Jensen's inequality $\mathbb{E}[\sqrt{Z}] \le \sqrt{\mathbb{E}[Z]}$:
% \begin{equation}
%     \mathbb{E}_X [D_{\text{TV}}(\mathcal{G}_{\text{ref}} \| \mathcal{G}_{\psi})] \le \sqrt{\frac{1}{2} \mathbb{E}_X [D_{\text{KL}}(\mathcal{G}_{\psi} \| \mathcal{G}_{\text{ref}})]}.
% \end{equation}
Finally, we utilize the symmetry of the Total Variation distance, i.e., $D_{\text{TV}}(P\|Q) = D_{\text{TV}}(Q\|P)$, to equate $D_{\text{TV}}(\mathcal{G}_{\text{ref}}\|\mathcal{G}_{\psi}) = D_{\text{TV}}(\mathcal{G}_{\psi}\|\mathcal{G}_{\text{ref}})$. We then apply Pinsker’s Inequality $D_{\text{TV}}(P\|Q) \leq \sqrt{\frac{1}{2}D_{\text{KL}}(P\|Q)}$ alongside Jensen’s inequality $\mathbb{E}[\sqrt{Z}] \leq \sqrt{\mathbb{E}[Z]}$:
\begin{equation}
\mathbb{E}_X [D_{\text{TV}}(\mathcal{G}_{\text{ref}} \| \mathcal{G}_{\psi})] = \mathbb{E}_X [D_{\text{TV}}(\mathcal{G}_{\psi} \| \mathcal{G}_{\text{ref}})] \leq \sqrt{\frac{1}{2} \mathbb{E}_X [D_{\text{KL}}(\mathcal{G}_{\psi} \| \mathcal{G}_{\text{ref}})]}.
\end{equation}

Combining these yields the theorem.
\end{proof}

\begin{tcolorbox}[colback=myblue!8, colframe=myblue!8, boxrule=0pt, sharp corners=all, boxsep=0pt, left=5pt, right=5pt, top=5pt, bottom=5pt, before skip=2pt, after skip=2pt]
\begin{remark}
Theorem \ref{thm:generalization} theoretically justifies the objective of \texttt{ADV-0}. The outer loop maximizes the first term $J(\pi_\theta, \mathcal{P}_{\psi})$ (robustness), while the IPL inner loop minimizes the second term (generalization gap) by constraining the KL divergence. Thus, our method effectively maximizes a certified lower bound on the real-world performance.

Crucially, $\mathcal{P}_{\text{real}}$ here represents any target distribution within the $\delta$-trust region of the prior, specifically including the real-world long-tail scenarios. Since the adversary $\mathcal{P}_{\psi}$ is optimized to be the worst-case minimizer within this region, the theorem guarantees that the policy's performance on the synthetic adversarial cases serves as a robust lower bound for its performance on unseen real-world critical events.
\end{remark}
\end{tcolorbox}

\subsubsection{Safety guarantee in the long tail}

Finally, we derive the formal safety guarantee. We denote $J_C(\pi, \mathcal{P})$ the expected cumulative cost (e.g., collision risk).

\begin{tcolorbox}[colback=myblue!8, colframe=myblue!8, boxrule=0pt, sharp corners=all, boxsep=0pt, left=5pt, right=5pt, top=5pt, bottom=5pt, before skip=2pt, after skip=2pt]
\begin{theorem}[Worst-case safety certificate]
\label{thm:safety}
Let $C_{\max}$ be the maximum instantaneous cost (upper bound of the per-step cost). If the policy $\pi_\theta$ satisfies the safety constraint $J_C(\pi_\theta, \mathcal{P}_\psi) \le \delta$ under the adversarial dynamics, then the safety violation in the real environment is bounded by:
\begin{equation}
    J_C(\pi_\theta, \mathcal{P}_{\text{real}}) \le \delta + \frac{\gamma C_{\max} \sqrt{2}}{(1-\gamma)^2} \sqrt{\mathbb{E}_{X} [D_{\text{KL}}(\mathcal{G}_\psi(\cdot|X) \| \mathcal{G}_{\text{ref}}(\cdot|X))]}.
\end{equation}
\end{theorem}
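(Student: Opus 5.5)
The plan is to repeat the argument of Theorem~\ref{thm:generalization}, with the safety cost $C$ in place of the reward $\mathcal{R}$. Define the cost value function $V_C^{\pi,\mathcal{P}}(s)=\mathbb{E}_{\tau\sim\pi,\mathcal{P}}[\sum_t\gamma^t C(s_t,a_t)\mid s_0=s]$, so that $J_C(\pi,\mathcal{P})=\mathbb{E}_{s_0\sim\rho_0}[V_C^{\pi,\mathcal{P}}(s_0)]$. Since $C\in[0,C_{\max}]$, we have $0\le V_C\le C_{\max}/(1-\gamma)$. This bound is the source of the extra factor $1/(1-\gamma)$ in the statement compared with the reward bound.

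\textbf{Step 1 (telescoping).} The proof of Lemma~\ref{lemma:value_diff} uses only the Bellman equation and the boundedness of the per-step signal, so it holds verbatim with $\mathcal{R}$ replaced by $C$. Applying it with $\mathcal{P}=\mathcal{P}_{\text{real}}$ and $\mathcal{P}'=\mathcal{P}_\psi$ and taking absolute values inside the sum gives
\begin{equation*}
J_C(\pi_\theta,\mathcal{P}_{\text{real}})-J_C(\pi_\theta,\mathcal{P}_\psi)\le \gamma\sum_{t\ge0}\gamma^t\,\sup_{s,a}\Bigl|\mathbb{E}_{s'\sim\mathcal{P}_{\text{real}}}[V_C^{\pi,\mathcal{P}_\psi}(s')]-\mathbb{E}_{s'\sim\mathcal{P}_\psi}[V_C^{\pi,\mathcal{P}_\psi}(s')]\Bigr|.
\end{equation*}

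\textbf{Step 2 (per-step discrepancy).} As in the proof of Theorem~\ref{thm:generalization}, I adopt the modeling assumption that $\mathcal{P}_{\text{real}}$ coincides with the dynamics induced by $\mathcal{G}_{\text{ref}}$. I then invoke Lemma~\ref{lemma:gen_bound}, with the bounded function $V_C^{\pi,\mathcal{P}_\psi}$ in place of $V$ and $\|V_C\|_\infty\le C_{\max}/(1-\gamma)$ in place of $V_{\max}$. This bounds the inner term by $\frac{2C_{\max}}{1-\gamma}\,\mathbb{E}_X[D_{\text{TV}}(\mathcal{G}_{\text{ref}}\|\mathcal{G}_\psi)]$.

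A sharper constant is available: because $V_C\ge0$, one could center $V_C$ and use the bound $C_{\max}/(1-\gamma)$ rather than twice it. I will not pursue this, since the stated constant already has slack.

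\textbf{Step 3 (summation and Pinsker).} Summing the geometric series contributes another factor $1/(1-\gamma)$. Next I use the symmetry of TV, then Pinsker's inequality $D_{\text{TV}}\le\sqrt{D_{\text{KL}}/2}$, then Jensen's inequality $\mathbb{E}\sqrt{Z}\le\sqrt{\mathbb{E}Z}$. Together these give
\begin{equation*}
J_C(\pi_\theta,\mathcal{P}_{\text{real}})\le J_C(\pi_\theta,\mathcal{P}_\psi)+\frac{2\gamma C_{\max}}{(1-\gamma)^2}\sqrt{\tfrac12\,\mathbb{E}_X[D_{\text{KL}}(\mathcal{G}_\psi\|\mathcal{G}_{\text{ref}})]}.
\end{equation*}
The constant simplifies as $2\sqrt{1/2}=\sqrt2$, which yields exactly $\frac{\gamma C_{\max}\sqrt2}{(1-\gamma)^2}$. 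Substituting the hypothesis $J_C(\pi_\theta,\mathcal{P}_\psi)\le\delta$ finishes the proof.

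\textbf{Main obstacle.} The delicate step is Step 2, which needs the identification of $\mathcal{P}_{\text{real}}$ with the $\mathcal{G}_{\text{ref}}$-induced dynamics. This is the same assumption already used in the proof of Theorem~\ref{thm:generalization}, and I will state it explicitly. A second point to check is that Lemma~\ref{lemma:gen_bound} requires only boundedness of the test function, not that it be a reward value function, so it applies to $V_C$ as well.
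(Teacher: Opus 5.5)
Your proposal is correct and follows the paper's proof essentially step for step: the cost analogue of Lemma~\ref{lemma:value_diff} with $\mathcal{P}=\mathcal{P}_{\text{real}}$ and $\mathcal{P}'=\mathcal{P}_\psi$, then Lemma~\ref{lemma:gen_bound} with $\|V_C\|_\infty\le C_{\max}/(1-\gamma)$, the geometric sum, TV symmetry, Pinsker and Jensen. You are also right to state explicitly the identification of $\mathcal{P}_{\text{real}}$ with the $\mathcal{G}_{\text{ref}}$-induced dynamics, which the paper's proof of this theorem uses only implicitly (it is stated in the proof of Theorem~\ref{thm:generalization}).

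One minor point: the paper keeps the expectation over $(s_t,a_t)$ rather than passing to $\sup_{s,a}$, and that is the safer choice. Lemma~\ref{lemma:gen_bound}, as actually proved, bounds the per-state discrepancy by the TV distance at that state's own context, and only averaging over visited states turns this into $\mathbb{E}_X[D_{\text{TV}}]$; a supremum would instead give $\sup_X D_{\text{TV}}$.
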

\end{tcolorbox}

\begin{proof}
The proof follows similar logic as Lemma~\ref{lemma:value_diff} and Theorem~\ref{thm:generalization}. We explicitly extend Corollary 2 of \citet{achiam2017constrained} (which bounds cost performance under policy shifts) to the setting of adversarial dynamics shifts.
Formally, we apply Lemma~\ref{lemma:value_diff} to the cost value function $V_C^{\pi, \mathcal{P}}$.
\begin{align}
    J_C(\pi, \mathcal{P}_{\text{real}}) &\le J_C(\pi, \mathcal{P}_{\psi}) + |J_C(\pi, \mathcal{P}_{\text{real}}) - J_C(\pi, \mathcal{P}_{\psi})| \nonumber \\
    &\le \delta + \frac{\gamma}{1-\gamma} \sum_{t=0}^\infty (1-\gamma) \gamma^t \left| \mathbb{E}_{s,a} [\mathbb{E}_{\mathcal{P}_{\text{real}}}[V_C] - \mathbb{E}_{\mathcal{P}_{\psi}}[V_C]] \right|.
\end{align}
Using Lemma \ref{lemma:gen_bound} with $\|V_C\|_\infty \le \frac{C_{\max}}{1-\gamma}$:
\begin{equation}
    J_C(\pi, \mathcal{P}_{\text{real}})  \le \delta + \frac{\gamma}{(1-\gamma)} \cdot \frac{2 C_{\max}}{1-\gamma} \mathbb{E}_X [D_{\text{TV}}(\mathcal{G}_{\text{ref}} \| \mathcal{G}_{\psi})].
\end{equation}
Applying Pinsker's inequality again completes the proof.
\end{proof}

\begin{tcolorbox}[colback=myblue!8, colframe=myblue!8, boxrule=0pt, sharp corners=all, boxsep=0pt, left=5pt, right=5pt, top=5pt, bottom=5pt, before skip=2pt, after skip=2pt]
\begin{remark}
Theorem~\ref{thm:safety} provides a formal safety certificate. It implies that if \texttt{ADV-0} successfully trains the agent to be safe ($\le \delta$) against a worst-case adversary $\mathcal{P}_\psi$ (which is explicitly designed to maximize risk in Eq.~\ref{eq:min_max}) that is constrained to be physically plausible, the agent is guaranteed to be safe in the naturalistic environment up to a margin controlled by the KL divergence in IPL. 
\end{remark}
\end{tcolorbox}

%%%%%%%%%%%%%%%%%%%%%
\subsection{Discussion on Theoretical Assumptions and Practical Implementation}

The theoretical results established in Sections~\ref{subsec:convergence} and \ref{subsec:bound} provide the formal motivation for \texttt{ADV-0}, which characterizes the ideal behavior of the ZSMG. In practice, our implementation involves necessary approximations to ensure computational tractability. Here, we discuss the validity of these principled approximations and how they connect to the theoretical results.
In general, \texttt{ADV-0} solves the theoretical ZSMG via efficient approximations. The theoretical analysis identifies \textit{what} to optimize (min-max objective with KL regularization) and \textit{why} (maximizing a lower bound on real-world performance), while the practical implementation provides a tractable \textit{how} (IPL with finite sampling).

\paragraph{Finite sample approximation of the Gibbs adversary.}
Lemma~\ref{lemma:inner_optimality} derives the optimal adversarial distribution as a Gibbs distribution $\mathcal{G}^*(Y|X) \propto \mathcal{G}_{\text{ref}}(Y|X) \exp(-J(\pi_\theta, Y)/\tau)$. In the theoretical analysis, the expectation is taken over the entire continuous trajectory space $\mathcal{Y}$. 
In our implementation (Eq.~\ref{eq:sampling}), we approximate the intractable partition function $Z(X)$ and the expectation $\mathbb{E}_{Y \sim \mathcal{G}_\psi}$ using importance sampling with a finite set of $K$ candidates $\{Y_k\}_{k=1}^K$ sampled from the proposal distribution $\mathcal{G}_\psi$. 
While finite sampling introduces variance, the temperature-scaled softmax sampling serves as a Monte Carlo approximation of the theoretical Boltzmann distribution. 
As the sample size $K$ increases, the empirical distribution converges to the theoretical optimal adversary. 
Since the backbone generator is designed to capture the multi-modal nature of the traffic prior. A moderate $K$ (e.g., $K=32$) effectively covers the high-probability modes of the prior support, ensuring that the empirical distribution converges towards the theoretical Gibbs distribution.

\paragraph{Proxy reward and bias.}
The inner loop optimization relies on the proxy reward estimator $\hat{J}$ rather than the exact rollout return $J$. This could introduce bias in the gradient direction.
However, we note that the effectiveness of IPL depends primarily on the \textit{ranking} accuracy rather than the precision of \textit{absolute value}. 
Under the Bradley-Terry model, the probability of preference depends on the value difference: $P(Y_w \succ Y_l) = \sigma(\hat{J}(Y_l) - \hat{J}(Y_w))$.
The convergence requires that the proxy estimator preserves the relative ordinality of the true objective, i.e., $J(Y_a) < J(Y_b) \implies \mathbb{E}[\hat{J}(Y_a)] < \mathbb{E}[\hat{J}(Y_b)]$.
This means that as long as the proxy estimator preserves the relative ordering of safety-critical events (e.g., correctly identifying that a collision is worse than a near-miss), the gradient direction for $\psi$ remains consistent with the theoretical objective.

% \paragraph{Closed-loop adversary formulation.}
% While the adversarial generator in our framework operates in an open-loop manner (generating trajectory $Y$ conditioned on context $X$), our theoretical convergence analysis adopts a closed-loop perspective via the Bellman operator. We formulate the adversary as minimizing the value at each state (rectangularity assumption). This formulation provides a rigorous robustness certificate, as optimizing against a worst-case closed-loop adversary (who can adapt at every step) constitutes a lower bound on the performance against an open-loop adversary (who commits to a trajectory at $t=0$). Thus, convergence in this stronger setting guarantees robustness in the defined game.

\paragraph{Trust region assumption in generalization.}
Theorem~\ref{thm:generalization} and \ref{thm:safety} rely on the assumption that the real-world dynamics $\mathcal{P}_{\text{real}}$ lie within a trust region support of the traffic prior $\mathcal{G}_{\text{ref}}$. While this is a strong assumption, it is a necessary condition for data-driven simulation approaches.
This formalizes the requirement that the long tail consists of rare but plausible events, rather than out-of-domain anomalies. In the context of \texttt{ADV-0}, this assumption is a constraint enforced by the traffic prior. 
The term $\mathbb{E}[D_{\text{KL}}(\mathcal{G}_\psi || \mathcal{G}_{\text{ref}})]$ in the generalization bound directly corresponds to the regularization term in the IPL loss (Eq.~\ref{eq:ipl_loss}).
By minimizing this KL-divergence during training, \texttt{ADV-0} explicitly optimizes the policy to maximize the lower bound on performance across the entire $\delta$-neighborhood of the naturalistic prior. This ensures that as long as the real-world long-tail events fall within the physical plausibility modeled by the pretrained generator, the safety guarantees hold.

\clearpage
\section{Supplementary Tables and Figures}\label{appendix:results}

%%%%%%%%%%%%%%%%%%%%%%%%%%%%%%%%%%%%%%%%%%%%%%%%%%%%%%%%%%%%%%%%%%%%%%%%%%%%%%%

%%%%%%%%%%%%%%%%%%%%%%%%%%%%%%%%%%%%%%%%%%%%%%%%%%%%%%%%%%%%%%%%%%%%%%%

\begin{table}[htbp]
\centering
\caption{Detailed results of safety-critical scenario generation using \texttt{ADV-0}.}
\vspace{-5pt}
\label{tab:scenario_gen_adv0}
\setlength{\tabcolsep}{3pt}
\renewcommand{\arraystretch}{1.1}
\begin{small}
\resizebox{\textwidth}{!}{%
\begin{tabular}{lcccccc}
\toprule
\multirow{2}{*}{ADV-0 Variation} & \multicolumn{2}{c}{\textbf{Replay}} & \multicolumn{2}{c}{\textbf{IDM}} & \multicolumn{2}{c}{\textbf{RL Agent}} \\
\cmidrule(lr){2-3} \cmidrule(lr){4-5} \cmidrule(lr){6-7}
 & CR $\uparrow$ & Reward $\downarrow$ & CR $\uparrow$ & Reward $\downarrow$ & CR $\uparrow$ & Reward $\downarrow$ \\
\midrule
Pretrained (logit-based sampling) & $31.72\% \pm 1.06\%$ & $43.31 \pm 1.10$ & $18.84\%  \pm 1.48\%$ & $50.66  \pm 2.22$ & $12.49\%  \pm 1.06\%$ & $52.12  \pm 2.03$ \\
Pretrained (energy-based sampling) & $85.09\%  \pm 1.13\%$ & $1.89 \pm 0.20$ & $40.14\%  \pm 1.06\%$ & $45.72 \pm 0.47$ & $36.30\%  \pm 0.77\%$ & $41.99 \pm 0.33$ \\
\midrule
GRPO Finetuned & $92.61 \pm 0.62\%$ & $0.87 \pm 0.06$ & $46.34 \pm 0.73\%$ & $39.60 \pm 0.13$ & $41.95 \pm 0.47\%$ & $38.45 \pm 0.42$ \\
PPO Finetuned & $90.93 \pm 0.69\%$ & $1.01 \pm 0.07$ & $45.09 \pm 0.63\%$ & $40.69 \pm 0.22$ & $39.12 \pm 0.38\%$ & $39.63 \pm 0.43$ \\
SAC Finetuned & $91.88 \pm 0.86\%$ & $1.03 \pm 0.08$ & $46.09 \pm 0.19\%$ & $39.62 \pm 0.24$ & $41.09 \pm 0.59\%$ & $39.17 \pm 0.48$ \\
TD3 Finetuned & $89.54 \pm 0.62\%$ & $1.07 \pm 0.06$ & $46.08 \pm 0.42\%$ & $39.38 \pm 0.11$ & $40.35 \pm 0.84\%$ & $39.68 \pm 0.72$ \\
PPO-Lag Finetuned & $90.08 \pm 0.42\%$ & $1.01 \pm 0.07$ & $45.74 \pm 0.22\%$ & $40.50 \pm 0.09$ & $41.30 \pm 0.50\%$ & $38.77 \pm 0.39$ \\
SAC-Lag Finetuned & $91.54 \pm 0.24\%$ & $0.95 \pm 0.04$ & $45.61 \pm 0.31\%$ & $40.38 \pm 0.06$ & $40.28 \pm 0.73\%$ & $39.07 \pm 0.37$ \\
\midrule
\rowcolor{myblue!20}
\textbf{Avg.} & $\mathbf{91.10 \pm 0.57\%}$ & $\mathbf{0.99 \pm 0.06}$ & $\mathbf{45.83 \pm 0.42\%}$ & $\mathbf{40.03 \pm 0.14}$ & $\mathbf{40.68 \pm 0.59\%}$ & $\mathbf{39.13 \pm 0.47}$ \\
\bottomrule
\end{tabular}%
}
\end{small}
\end{table}

\begin{table*}[!htbp]
\centering
%%%%%%%%%%%%%%%%%%%%%%%%%%%%%%%%%%%%%%%%%%%%%%%%%%%%%%%%%%%%%%%%%%%%%%%
\begin{minipage}[!htbp]{0.49\linewidth} 
\centering
\caption{Cross-validation performances of driving agents learned by GRPO.}
\vspace{-5pt}
\label{tab:agent_grpo}
\setlength{\tabcolsep}{3pt}
\renewcommand{\arraystretch}{1.1}
\resizebox{\linewidth}{!}{%
\begin{tabular}{llcccc}
\toprule
& \textbf{Val. Env.} & {RC} $\uparrow$ & {Crash} $\downarrow$ & {Reward} $\uparrow$ & {Cost} $\downarrow$\\ 
\midrule
\multirow{6}{*}{\rotatebox{90}{\textbf{ADV-0 (w/ IPL)}}} 
 & Replay & 0.713 $\pm$ 0.016 & 0.177 $\pm$ 0.019 & 45.55 $\pm$ 2.21 & 0.55 $\pm$ 0.01 \\
 & ADV-0 & 0.687 $\pm$ 0.005 & 0.273 $\pm$ 0.025 & 42.71 $\pm$ 0.98 & 0.64 $\pm$ 0.01 \\
 & CAT & 0.695 $\pm$ 0.004 & 0.250 $\pm$ 0.022 & 44.50 $\pm$ 0.63 & 0.60 $\pm$ 0.02 \\
 & SAGE & 0.670 $\pm$ 0.015 & 0.270 $\pm$ 0.016 & 42.49 $\pm$ 2.16 & 0.60 $\pm$ 0.02 \\
 & Rule & 0.687 $\pm$ 0.007 & 0.207 $\pm$ 0.012 & 42.87 $\pm$ 1.25 & 0.59 $\pm$ 0.02 \\ 
 \cmidrule(l){2-6}
 & \cellcolor{myblue!30}\textbf{Avg.} & \cellcolor{myblue!30}\textbf{0.690 $\pm$ 0.009} & \cellcolor{myblue!30}\textbf{0.235 $\pm$ 0.019} & \cellcolor{myblue!30}\textbf{43.62 $\pm$ 1.45} & \cellcolor{myblue!30}\textbf{0.60 $\pm$ 0.02} \\ 
\midrule
\multirow{6}{*}{\rotatebox{90}{ADV-0 (w/o IPL)}} 
 & Replay & 0.694 $\pm$ 0.027 & 0.183 $\pm$ 0.025 & 42.20 $\pm$ 3.21 & 0.61 $\pm$ 0.04 \\
 & ADV-0 & 0.667 $\pm$ 0.014 & 0.280 $\pm$ 0.024 & 40.13 $\pm$ 1.55 & 0.67 $\pm$ 0.02 \\
 & CAT & 0.667 $\pm$ 0.019 & 0.267 $\pm$ 0.026 & 39.99 $\pm$ 2.01 & 0.67 $\pm$ 0.03 \\
 & SAGE & 0.661 $\pm$ 0.028 & 0.240 $\pm$ 0.008 & 40.28 $\pm$ 3.31 & 0.63 $\pm$ 0.04 \\
 & Heuristic & 0.666 $\pm$ 0.016 & 0.217 $\pm$ 0.048 & 39.03 $\pm$ 1.62 & 0.64 $\pm$ 0.02 \\ 
 \cmidrule(l){2-6}
 & \cellcolor{myblue!15}\textbf{Avg.} & \cellcolor{myblue!15}\textbf{0.671 $\pm$ 0.021} & \cellcolor{myblue!15}\textbf{0.237 $\pm$ 0.026} & \cellcolor{myblue!15}\textbf{40.33 $\pm$ 2.34} & \cellcolor{myblue!15}\textbf{0.64 $\pm$ 0.03} \\ 
\midrule
\multirow{6}{*}{\rotatebox{90}{CAT}} 
 & Replay & 0.684 $\pm$ 0.015 & 0.197 $\pm$ 0.005 & 41.33 $\pm$ 1.32 & 0.59 $\pm$ 0.02 \\
 & ADV-0 & 0.641 $\pm$ 0.022 & 0.293 $\pm$ 0.042 & 37.84 $\pm$ 1.29 & 0.69 $\pm$ 0.00 \\
 & CAT & 0.646 $\pm$ 0.020 & 0.280 $\pm$ 0.036 & 38.62 $\pm$ 1.36 & 0.68 $\pm$ 0.02 \\
 & SAGE & 0.630 $\pm$ 0.024 & 0.313 $\pm$ 0.034 & 38.17 $\pm$ 1.55 & 0.67 $\pm$ 0.01 \\
 & Heuristic & 0.630 $\pm$ 0.029 & 0.273 $\pm$ 0.033 & 36.57 $\pm$ 1.75 & 0.67 $\pm$ 0.01 \\ 
 \cmidrule(l){2-6}
 & \textbf{Avg.} & \textbf{0.646 $\pm$ 0.022} & \textbf{0.271 $\pm$ 0.030} & \textbf{38.51 $\pm$ 1.45} & \textbf{0.66 $\pm$ 0.01} \\ 
\midrule
\multirow{6}{*}{\rotatebox{90}{Heuristic}} 
 & Replay & 0.694 $\pm$ 0.031 & 0.190 $\pm$ 0.010 & 41.45 $\pm$ 3.01 & 0.61 $\pm$ 0.02 \\
 & ADV-0  & 0.682 $\pm$ 0.068 & 0.323 $\pm$ 0.009 & 41.45 $\pm$ 2.26 & 0.67 $\pm$ 0.04 \\
 & CAT & 0.684 $\pm$ 0.022 & 0.310 $\pm$ 0.021 & 41.89 $\pm$ 2.18 & 0.68 $\pm$ 0.04 \\
 & SAGE & 0.661 $\pm$ 0.073 & 0.357 $\pm$ 0.014 & 40.20 $\pm$ 1.98 & 0.67 $\pm$ 0.01 \\
 & Heuristic & 0.674 $\pm$ 0.029 & 0.250 $\pm$ 0.013 & 39.72 $\pm$ 1.77 & 0.66 $\pm$ 0.02 \\ 
 \cmidrule(l){2-6}
 & \textbf{Avg.} & \textbf{0.679 $\pm$ 0.045} & \textbf{0.286 $\pm$ 0.013} & \textbf{40.94 $\pm$ 2.24} & \textbf{0.66 $\pm$ 0.03} \\ 
\midrule
\multirow{6}{*}{\rotatebox{90}{Replay}} 
 & Replay & 0.680 $\pm$ 0.029 & 0.180 $\pm$ 0.044 & 39.26 $\pm$ 0.90 & 0.61 $\pm$ 0.01 \\
 & ADV-0 & 0.635 $\pm$ 0.042 & 0.308 $\pm$ 0.050 & 36.64 $\pm$ 3.20 & 0.69 $\pm$ 0.01 \\
 & CAT & 0.645 $\pm$ 0.055 & 0.319 $\pm$ 0.038 & 37.48 $\pm$ 1.06 & 0.66 $\pm$ 0.01 \\
 & SAGE & 0.616 $\pm$ 0.062 & 0.319 $\pm$ 0.032 & 35.72 $\pm$ 1.25 & 0.69 $\pm$ 0.03 \\
 & Heuristic & 0.637 $\pm$ 0.047 & 0.264 $\pm$ 0.047 & 35.77 $\pm$ 3.46 & 0.69 $\pm$ 0.02 \\ 
 \cmidrule(l){2-6}
 & \textbf{Avg.} & \textbf{0.643 $\pm$ 0.047} & \textbf{0.278 $\pm$ 0.042} & \textbf{36.97 $\pm$ 1.97} & \textbf{0.67 $\pm$ 0.02} \\ 
\bottomrule
\end{tabular}%
}
\end{minipage}
\hfill 
%%%%%%%%%%%%%%%%%%%%%%%%%%%%%%%%%%%%%%%%%%%%%%%%%%%%%%%%%%%%%%%%%%%%%%%
\begin{minipage}[!htbp]{0.49\linewidth}
\centering
\caption{Cross-validation performances of driving agents learned by PPO.}
\vspace{-5pt}
\label{tab:agent_ppo}
\setlength{\tabcolsep}{3pt}
\renewcommand{\arraystretch}{1.1}
\resizebox{\linewidth}{!}{%
\begin{tabular}{llcccc}
\toprule
& \textbf{Val. Env.} & {RC} $\uparrow$ & {Crash} $\downarrow$ & {Reward} $\uparrow$ & {Cost} $\downarrow$\\ 
\midrule
\multirow{6}{*}{\rotatebox{90}{\textbf{ADV-0 (w/ IPL)}}} 
 & Replay & 0.707 $\pm$ 0.018 & 0.193 $\pm$ 0.015 & 45.56 $\pm$ 0.87 & 0.56 $\pm$ 0.02 \\
 & ADV-0 & 0.681 $\pm$ 0.021 & 0.270 $\pm$ 0.019 & 42.68 $\pm$ 0.69 & 0.62 $\pm$ 0.03 \\
 & CAT & 0.690 $\pm$ 0.010 & 0.260 $\pm$ 0.029 & 43.65 $\pm$ 1.54 & 0.60 $\pm$ 0.04 \\
 & SAGE & 0.678 $\pm$ 0.008 & 0.268 $\pm$ 0.023 & 43.60 $\pm$ 1.37 & 0.59 $\pm$ 0.03 \\
 & Heuristic & 0.676 $\pm$ 0.006 & 0.250 $\pm$ 0.016 & 41.63 $\pm$ 1.59 & 0.61 $\pm$ 0.03 \\ 
 \cmidrule(l){2-6}
 & \cellcolor{myblue!30}\textbf{Avg.} & \cellcolor{myblue!30}\textbf{0.686 $\pm$ 0.013} & \cellcolor{myblue!30}\textbf{0.248 $\pm$ 0.020} & \cellcolor{myblue!30}\textbf{43.42 $\pm$ 1.21} & \cellcolor{myblue!30}\textbf{0.60 $\pm$ 0.03} \\ 
\midrule
\multirow{6}{*}{\rotatebox{90}{ADV-0 (w/o IPL)}} 
 & Replay & 0.696 $\pm$ 0.010 & 0.188 $\pm$ 0.018 & 44.80 $\pm$ 1.01 & 0.59 $\pm$ 0.01 \\
 & ADV-0 & 0.673 $\pm$ 0.006 & 0.280 $\pm$ 0.021 & 42.10 $\pm$ 1.34 & 0.67 $\pm$ 0.01 \\
 & CAT & 0.677 $\pm$ 0.009 & 0.275 $\pm$ 0.047 & 42.62 $\pm$ 2.00 & 0.67 $\pm$ 0.01 \\
 & SAGE & 0.670 $\pm$ 0.013 & 0.263 $\pm$ 0.013 & 42.96 $\pm$ 1.80 & 0.63 $\pm$ 0.03 \\
 & Heuristic & 0.658 $\pm$ 0.002 & 0.265 $\pm$ 0.017 & 39.65 $\pm$ 1.10 & 0.68 $\pm$ 0.02 \\ 
 \cmidrule(l){2-6}
 & \cellcolor{myblue!15}\textbf{Avg.} & \cellcolor{myblue!15}\textbf{0.675 $\pm$ 0.008} & \cellcolor{myblue!15}\textbf{0.254 $\pm$ 0.023} & \cellcolor{myblue!15}\textbf{42.43 $\pm$ 1.45} & \cellcolor{myblue!15}\textbf{0.65 $\pm$ 0.02} \\ 
\midrule
\multirow{6}{*}{\rotatebox{90}{CAT}} 
 & Replay & 0.717 $\pm$ 0.013 & 0.211 $\pm$ 0.041 & 45.40 $\pm$ 1.05 & 0.57 $\pm$ 0.03 \\
 & ADV-0 & 0.666 $\pm$ 0.023 & 0.330 $\pm$ 0.071 & 40.52 $\pm$ 1.85 & 0.66 $\pm$ 0.03 \\
 & CAT & 0.679 $\pm$ 0.021 & 0.319 $\pm$ 0.028 & 42.29 $\pm$ 1.53 & 0.64 $\pm$ 0.01 \\
 & SAGE & 0.653 $\pm$ 0.008 & 0.310 $\pm$ 0.010 & 40.53 $\pm$ 0.75 & 0.65 $\pm$ 0.01 \\
 & Heuristic & 0.676 $\pm$ 0.012 & 0.270 $\pm$ 0.030 & 41.49 $\pm$ 0.70 & 0.62 $\pm$ 0.01 \\ 
 \cmidrule(l){2-6}
 & \textbf{Avg.} & \textbf{0.678 $\pm$ 0.015} & \textbf{0.288 $\pm$ 0.036} & \textbf{42.05 $\pm$ 1.18} & \textbf{0.63 $\pm$ 0.02} \\ 
\midrule
\multirow{6}{*}{\rotatebox{90}{Heuristic}} 
 & Replay & 0.608 $\pm$ 0.038 & 0.210 $\pm$ 0.020 & 37.08 $\pm$ 0.91 & 0.66 $\pm$ 0.02 \\
 & ADV-0 & 0.577 $\pm$ 0.051 & 0.303 $\pm$ 0.050 & 33.53 $\pm$ 0.80 & 0.74 $\pm$ 0.01 \\
 & CAT & 0.593 $\pm$ 0.010 & 0.278 $\pm$ 0.015 & 35.79 $\pm$ 2.10 & 0.70 $\pm$ 0.02 \\
 & SAGE & 0.593 $\pm$ 0.008 & 0.270 $\pm$ 0.012 & 36.02 $\pm$ 2.00 & 0.70 $\pm$ 0.02 \\
 & Heuristic & 0.563 $\pm$ 0.021 & 0.289 $\pm$ 0.020 & 31.72 $\pm$ 1.77 & 0.72 $\pm$ 0.01 \\ 
 \cmidrule(l){2-6}
 & \textbf{Avg.} & \textbf{0.587 $\pm$ 0.026} & \textbf{0.270 $\pm$ 0.023} & \textbf{34.83 $\pm$ 1.52} & \textbf{0.70 $\pm$ 0.02} \\ 
\midrule
\multirow{6}{*}{\rotatebox{90}{Replay}} 
 & Replay & 0.697 $\pm$ 0.052 & 0.229 $\pm$ 0.014 & 44.26 $\pm$ 0.55 & 0.60 $\pm$ 0.02 \\
 & ADV-0 & 0.629 $\pm$ 0.035 & 0.420 $\pm$ 0.030 & 38.32 $\pm$ 1.33 & 0.73 $\pm$ 0.02 \\
 & CAT & 0.663 $\pm$ 0.025 & 0.380 $\pm$ 0.048 & 41.26 $\pm$ 1.67 & 0.68 $\pm$ 0.04 \\
 & SAGE & 0.638 $\pm$ 0.020 & 0.360 $\pm$ 0.045 & 41.01 $\pm$ 1.90 & 0.63 $\pm$ 0.03 \\
 & Heuristic & 0.620 $\pm$ 0.019 & 0.381 $\pm$ 0.032 & 36.63 $\pm$ 1.00 & 0.72 $\pm$ 0.02 \\ 
 \cmidrule(l){2-6}
 & \textbf{Avg.} & \textbf{0.649 $\pm$ 0.030} & \textbf{0.354 $\pm$ 0.034} & \textbf{40.30 $\pm$ 1.29} & \textbf{0.67 $\pm$ 0.03} \\ 
\bottomrule
\end{tabular}%
}
\end{minipage}
\end{table*}

%%%%%%%%%%%%%%%%%%%%%%%%%%%%%%%%%%%%%%%%%%%%%%%%%%%%%%%%%%%%%%%%%%%%%%%

\begin{table*}[!htbp]
\centering
%%%%%%%%%%%%%%%%%%%%%%%%%%%%%%%%%%%%%%%%%%%%%%%%%%%%%%%%%%%%%%%%%%%%%%%
\begin{minipage}[!htbp]{0.49\linewidth} 
\centering
\caption{Cross-validation performances of driving agents learned by PPO-Lag.}
\vspace{-5pt}
\label{tab:agent_ppo_lag}
\setlength{\tabcolsep}{3pt}
\renewcommand{\arraystretch}{1.1}
\resizebox{\linewidth}{!}{%
\begin{tabular}{llcccc}
\toprule
& \textbf{Val. Env.} & {RC} $\uparrow$ & {Crash} $\downarrow$ & {Reward} $\uparrow$ & {Cost} $\downarrow$\\ 
\midrule
\multirow{6}{*}{\rotatebox{90}{\textbf{ADV-0 (w/ IPL)}}} 
 & Replay & 0.676 $\pm$ 0.012 & 0.142 $\pm$ 0.015 & 44.82 $\pm$ 1.20 & 0.53 $\pm$ 0.01 \\
 & ADV-0 & 0.626 $\pm$ 0.008 & 0.260 $\pm$ 0.027 & 38.17 $\pm$ 0.95 & 0.67 $\pm$ 0.02 \\
 & CAT & 0.647 $\pm$ 0.013 & 0.250 $\pm$ 0.027 & 40.80 $\pm$ 1.11 & 0.64 $\pm$ 0.02 \\
 & SAGE & 0.632 $\pm$ 0.014 & 0.271 $\pm$ 0.018 & 39.61 $\pm$ 1.83 & 0.65 $\pm$ 0.02 \\
 & Rule & 0.638 $\pm$ 0.009 & 0.256 $\pm$ 0.019 & 38.34 $\pm$ 1.55 & 0.68 $\pm$ 0.02 \\ 
 \cmidrule(l){2-6}
 & \cellcolor{myblue!30}\textbf{Avg.} & \cellcolor{myblue!30}\textbf{0.644 $\pm$ 0.011} & \cellcolor{myblue!30}\textbf{0.236 $\pm$ 0.021} & \cellcolor{myblue!30}\textbf{40.35 $\pm$ 1.33} & \cellcolor{myblue!30}\textbf{0.63 $\pm$ 0.02} \\ 
\midrule
\multirow{6}{*}{\rotatebox{90}{ADV-0 (w/o IPL)}} 
 & Replay & 0.605 $\pm$ 0.022 & 0.178 $\pm$ 0.028 & 33.08 $\pm$ 2.55 & 0.74 $\pm$ 0.03 \\
 & ADV-0 & 0.603 $\pm$ 0.018 & 0.291 $\pm$ 0.030 & 32.78 $\pm$ 1.83 & 0.74 $\pm$ 0.02 \\
 & CAT & 0.603 $\pm$ 0.023 & 0.272 $\pm$ 0.035 & 32.56 $\pm$ 2.14 & 0.74 $\pm$ 0.03 \\
 & SAGE & 0.585 $\pm$ 0.020 & 0.260 $\pm$ 0.025 & 31.85 $\pm$ 2.84 & 0.72 $\pm$ 0.04 \\
 & Heuristic & 0.593 $\pm$ 0.020 & 0.265 $\pm$ 0.040 & 31.80 $\pm$ 1.90 & 0.76 $\pm$ 0.02 \\ 
 \cmidrule(l){2-6}
 & \cellcolor{myblue!15}\textbf{Avg.} & \cellcolor{myblue!15}\textbf{0.598 $\pm$ 0.021} & \cellcolor{myblue!15}\textbf{0.253 $\pm$ 0.032} & \cellcolor{myblue!15}\textbf{32.41 $\pm$ 2.25} & \cellcolor{myblue!15}\textbf{0.74 $\pm$ 0.03} \\ 
\midrule
\multirow{6}{*}{\rotatebox{90}{CAT}} 
 & Replay & 0.625 $\pm$ 0.015 & 0.190 $\pm$ 0.005 & 36.51 $\pm$ 1.30 & 0.68 $\pm$ 0.02 \\
 & ADV-0  & 0.599 $\pm$ 0.022 & 0.305 $\pm$ 0.042 & 32.91 $\pm$ 1.25 & 0.75 $\pm$ 0.01 \\
 & CAT & 0.608 $\pm$ 0.020 & 0.290 $\pm$ 0.035 & 33.82 $\pm$ 1.35 & 0.73 $\pm$ 0.02 \\
 & SAGE & 0.590 $\pm$ 0.025 & 0.316 $\pm$ 0.035 & 33.19 $\pm$ 1.50 & 0.74 $\pm$ 0.01 \\
 & Heuristic & 0.604 $\pm$ 0.028 & 0.286 $\pm$ 0.033 & 33.09 $\pm$ 1.70 & 0.71 $\pm$ 0.01 \\ 
 \cmidrule(l){2-6}
 & \textbf{Avg.} & \textbf{0.605 $\pm$ 0.022} & \textbf{0.277 $\pm$ 0.030} & \textbf{33.90 $\pm$ 1.42} & \textbf{0.72 $\pm$ 0.01} \\ 
\midrule
\multirow{6}{*}{\rotatebox{90}{Heuristic}} 
 & Replay & 0.630 $\pm$ 0.030 & 0.202 $\pm$ 0.010 & 36.82 $\pm$ 3.00 & 0.69 $\pm$ 0.02 \\
 & ADV-0  & 0.590 $\pm$ 0.065 & 0.333 $\pm$ 0.010 & 33.52 $\pm$ 2.20 & 0.76 $\pm$ 0.04 \\
 & CAT & 0.606 $\pm$ 0.022 & 0.324 $\pm$ 0.020 & 34.27 $\pm$ 2.15 & 0.75 $\pm$ 0.04 \\
 & SAGE & 0.586 $\pm$ 0.070 & 0.354 $\pm$ 0.015 & 32.56 $\pm$ 2.00 & 0.76 $\pm$ 0.01 \\
 & Heuristic & 0.590 $\pm$ 0.029 & 0.290 $\pm$ 0.015 & 32.80 $\pm$ 1.80 & 0.74 $\pm$ 0.02 \\ 
 \cmidrule(l){2-6}
 & \textbf{Avg.} & \textbf{0.600 $\pm$ 0.043} & \textbf{0.301 $\pm$ 0.014} & \textbf{33.99 $\pm$ 2.23} & \textbf{0.74 $\pm$ 0.03} \\ 
\midrule
\multirow{6}{*}{\rotatebox{90}{Replay}} 
 & Replay & 0.620 $\pm$ 0.030 & 0.215 $\pm$ 0.045 & 34.53 $\pm$ 0.90 & 0.70 $\pm$ 0.01 \\
 & ADV-0  & 0.570 $\pm$ 0.040 & 0.364 $\pm$ 0.050 & 30.20 $\pm$ 3.20 & 0.78 $\pm$ 0.01 \\
 & CAT & 0.585 $\pm$ 0.055 & 0.358 $\pm$ 0.040 & 31.53 $\pm$ 1.10 & 0.76 $\pm$ 0.01 \\
 & SAGE & 0.565 $\pm$ 0.060 & 0.358 $\pm$ 0.035 & 29.80 $\pm$ 1.25 & 0.78 $\pm$ 0.03 \\
 & Heuristic & 0.581 $\pm$ 0.045 & 0.320 $\pm$ 0.045 & 30.50 $\pm$ 3.40 & 0.77 $\pm$ 0.02 \\ 
 \cmidrule(l){2-6}
 & \textbf{Avg.} & \textbf{0.584 $\pm$ 0.046} & \textbf{0.323 $\pm$ 0.043} & \textbf{31.31 $\pm$ 1.97} & \textbf{0.76 $\pm$ 0.02} \\ 
\bottomrule
\end{tabular}%
}
\end{minipage}
\hfill 
%%%%%%%%%%%%%%%%%%%%%%%%%%%%%%%%%%%%%%%%%%%%%%%%%%%%%%%%%%%%%%%%%%%%%%%
\begin{minipage}[!htbp]{0.49\linewidth}
\centering
\caption{Cross-validation performances of driving agents learned by SAC.}
\vspace{-5pt}
\label{tab:agent_sac}
\setlength{\tabcolsep}{3pt}
\renewcommand{\arraystretch}{1.1}
\resizebox{\linewidth}{!}{%
\begin{tabular}{llcccc}
\toprule
& \textbf{Val. Env.} & {RC} $\uparrow$ & {Crash} $\downarrow$ & {Reward} $\uparrow$ & {Cost} $\downarrow$\\ 
\midrule
\multirow{6}{*}{\rotatebox{90}{\textbf{ADV-0 (w/ IPL)}}} 
 & Replay & 0.781 $\pm$ 0.011 & 0.130 $\pm$ 0.016 & 53.66 $\pm$ 1.02 & 0.41 $\pm$ 0.01 \\
 & ADV-0 & 0.736 $\pm$ 0.011 & 0.305 $\pm$ 0.022 & 49.70 $\pm$ 0.34 & 0.53 $\pm$ 0.01 \\
 & CAT & 0.745 $\pm$ 0.013 & 0.268 $\pm$ 0.024 & 50.48 $\pm$ 1.02 & 0.52 $\pm$ 0.03 \\
 & SAGE & 0.745 $\pm$ 0.016 & 0.250 $\pm$ 0.025 & 48.84 $\pm$ 1.79 & 0.51 $\pm$ 0.04 \\
 & Heuristic & 0.758 $\pm$ 0.031 & 0.170 $\pm$ 0.019 & 50.48 $\pm$ 2.79 & 0.45 $\pm$ 0.06 \\ 
 \cmidrule(l){2-6}
 & \cellcolor{myblue!30}\textbf{Avg.} & \cellcolor{myblue!30}\textbf{0.753 $\pm$ 0.016} & \cellcolor{myblue!30}\textbf{0.225 $\pm$ 0.021} & \cellcolor{myblue!30}\textbf{50.63 $\pm$ 1.39} & \cellcolor{myblue!30}\textbf{0.48 $\pm$ 0.03} \\ 
\midrule
\multirow{6}{*}{\rotatebox{90}{ADV-0 (w/o IPL)}} 
 & Replay & 0.784 $\pm$ 0.020 & 0.160 $\pm$ 0.022 & 54.40 $\pm$ 1.69 & 0.44 $\pm$ 0.03 \\
 & ADV-0 & 0.706 $\pm$ 0.020 & 0.307 $\pm$ 0.026 & 45.92 $\pm$ 2.59 & 0.58 $\pm$ 0.03 \\
 & CAT & 0.729 $\pm$ 0.011 & 0.287 $\pm$ 0.026 & 47.81 $\pm$ 1.48 & 0.58 $\pm$ 0.03 \\
 & SAGE & 0.743 $\pm$ 0.027 & 0.260 $\pm$ 0.024 & 49.35 $\pm$ 2.73 & 0.50 $\pm$ 0.05 \\
 & Heuristic & 0.743 $\pm$ 0.033 & 0.203 $\pm$ 0.034 & 48.98 $\pm$ 3.48 & 0.49 $\pm$ 0.07 \\ 
 \cmidrule(l){2-6}
 & \cellcolor{myblue!15}\textbf{Avg.} & \cellcolor{myblue!15}\textbf{0.741 $\pm$ 0.022} & \cellcolor{myblue!15}\textbf{0.243 $\pm$ 0.026} & \cellcolor{myblue!15}\textbf{49.29 $\pm$ 2.39} & \cellcolor{myblue!15}\textbf{0.52 $\pm$ 0.04} \\ 
\midrule
\multirow{6}{*}{\rotatebox{90}{CAT}} 
 & Replay & 0.775 $\pm$ 0.013 & 0.177 $\pm$ 0.041 & 54.06 $\pm$ 1.12 & 0.42 $\pm$ 0.00 \\
 & ADV-0 & 0.698 $\pm$ 0.010 & 0.353 $\pm$ 0.034 & 45.81 $\pm$ 0.57 & 0.60 $\pm$ 0.03 \\
 & CAT & 0.704 $\pm$ 0.018 & 0.327 $\pm$ 0.041 & 45.78 $\pm$ 1.55 & 0.60 $\pm$ 0.02 \\
 & SAGE & 0.705 $\pm$ 0.025 & 0.297 $\pm$ 0.029 & 46.65 $\pm$ 3.41 & 0.56 $\pm$ 0.03 \\
 & Heuristic & 0.727 $\pm$ 0.024 & 0.247 $\pm$ 0.038 & 47.48 $\pm$ 2.57 & 0.53 $\pm$ 0.04 \\ 
 \cmidrule(l){2-6}
 & \textbf{Avg.} & \textbf{0.722 $\pm$ 0.018} & \textbf{0.280 $\pm$ 0.037} & \textbf{47.96 $\pm$ 1.84} & \textbf{0.54 $\pm$ 0.02} \\ 
\midrule
\multirow{6}{*}{\rotatebox{90}{Heuristic}} 
 & Replay & 0.718 $\pm$ 0.030 & 0.200 $\pm$ 0.028 & 47.20 $\pm$ 4.13 & 0.54 $\pm$ 0.07 \\
 & ADV-0 & 0.665 $\pm$ 0.025 & 0.330 $\pm$ 0.025 & 42.10 $\pm$ 2.64 & 0.63 $\pm$ 0.02 \\
 & CAT & 0.672 $\pm$ 0.024 & 0.320 $\pm$ 0.022 & 42.50 $\pm$ 2.84 & 0.64 $\pm$ 0.03 \\
 & SAGE & 0.670 $\pm$ 0.030 & 0.295 $\pm$ 0.030 & 42.30 $\pm$ 3.77 & 0.60 $\pm$ 0.05 \\
 & Heuristic & 0.678 $\pm$ 0.028 & 0.275 $\pm$ 0.015 & 43.10 $\pm$ 4.29 & 0.61 $\pm$ 0.05 \\ 
 \cmidrule(l){2-6}
 & \textbf{Avg.} & \textbf{0.681 $\pm$ 0.027} & \textbf{0.284 $\pm$ 0.024} & \textbf{43.44 $\pm$ 3.53} & \textbf{0.60 $\pm$ 0.04} \\ 
\midrule
\multirow{6}{*}{\rotatebox{90}{Replay}} 
 & Replay & 0.710 $\pm$ 0.035 & 0.217 $\pm$ 0.035 & 45.51 $\pm$ 1.30 & 0.56 $\pm$ 0.02 \\
 & ADV-0 & 0.625 $\pm$ 0.045 & 0.360 $\pm$ 0.046 & 37.81 $\pm$ 2.50 & 0.67 $\pm$ 0.03 \\
 & CAT & 0.652 $\pm$ 0.035 & 0.353 $\pm$ 0.041 & 39.20 $\pm$ 2.00 & 0.66 $\pm$ 0.04 \\
 & SAGE & 0.635 $\pm$ 0.029 & 0.332 $\pm$ 0.040 & 38.54 $\pm$ 2.10 & 0.64 $\pm$ 0.03 \\
 & Heuristic & 0.638 $\pm$ 0.027 & 0.316 $\pm$ 0.035 & 39.09 $\pm$ 2.50 & 0.65 $\pm$ 0.01 \\ 
 \cmidrule(l){2-6}
 & \textbf{Avg.} & \textbf{0.652 $\pm$ 0.034} & \textbf{0.316 $\pm$ 0.039} & \textbf{40.03 $\pm$ 2.08} & \textbf{0.64 $\pm$ 0.03} \\ 
\bottomrule
\end{tabular}%
}
\end{minipage}
\end{table*}

%%%%%%%%%%%%%%%%%%%%%%%%%%%%%%%%%%%%%%%%%%%%%%%%%%%%%%%%%%%%%%%%%%%%%%%%%%%%%%%

\begin{figure}[!htbp]
  \begin{center}
    \centerline{\includegraphics[width=0.99\columnwidth]{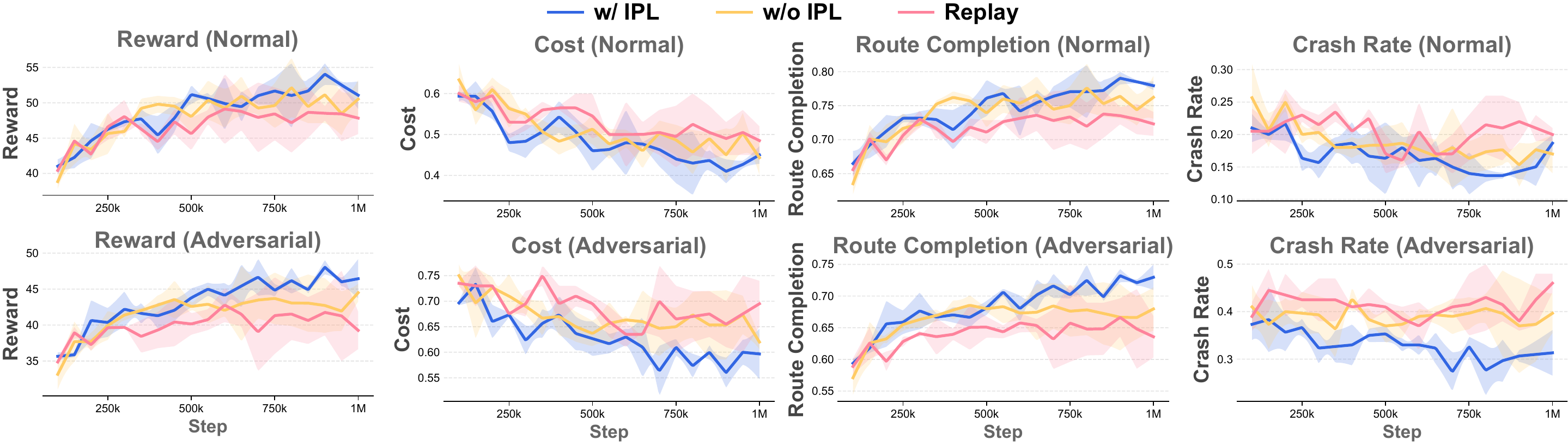}}
    \vspace{-5pt}
    \caption{Learning curves of TD3.}
    \label{fig:td3_curves}
  \end{center}
\end{figure}

%%%%%%%%%%%%%%%%%%%%%%%%%%%%%%%%%%%%%%%%%%%%%%%%%%%%%%%%%%%%%%%%%%%%%%%

\begin{table*}[!htbp]
\centering
%%%%%%%%%%%%%%%%%%%%%%%%%%%%%%%%%%%%%%%%%%%%%%%%%%%%%%%%%%%%%%%%%%%%%%%
\begin{minipage}[!htbp]{0.49\linewidth} 
\centering
\caption{Cross-validation performances of driving agents learned by SAC-Lag.}
\vspace{-5pt}
\label{tab:agent_saclag}
\setlength{\tabcolsep}{3pt}
\renewcommand{\arraystretch}{1.1}
\resizebox{\linewidth}{!}{%
\begin{tabular}{llcccc}
\toprule
& \textbf{Val. Env.} & {RC} $\uparrow$ & {Crash} $\downarrow$ & {Reward} $\uparrow$ & {Cost} $\downarrow$\\ 
\midrule
\multirow{6}{*}{\rotatebox{90}{\textbf{ADV-0 (w/ IPL)}}} 
 & Replay & 0.787 $\pm$ 0.006 & 0.127 $\pm$ 0.017 & 54.62 $\pm$ 0.26 & 0.40 $\pm$ 0.02 \\
 & ADV-0 & 0.729 $\pm$ 0.002 & 0.303 $\pm$ 0.042 & 48.78 $\pm$ 0.35 & 0.54 $\pm$ 0.03 \\
 & CAT & 0.725 $\pm$ 0.011 & 0.297 $\pm$ 0.019 & 48.01 $\pm$ 1.10 & 0.57 $\pm$ 0.02 \\
 & SAGE & 0.736 $\pm$ 0.007 & 0.250 $\pm$ 0.024 & 48.88 $\pm$ 0.58 & 0.52 $\pm$ 0.01 \\
 & Heuristic & 0.742 $\pm$ 0.022 & 0.217 $\pm$ 0.045 & 49.25 $\pm$ 2.68 & 0.49 $\pm$ 0.06 \\ 
 \cmidrule(l){2-6}
 & \cellcolor{myblue!30}\textbf{Avg.} & \cellcolor{myblue!30}\textbf{0.744 $\pm$ 0.010} & \cellcolor{myblue!30}\textbf{0.239 $\pm$ 0.029} & \cellcolor{myblue!30}\textbf{49.91 $\pm$ 0.99} & \cellcolor{myblue!30}\textbf{0.50 $\pm$ 0.03} \\ 
\midrule
\multirow{6}{*}{\rotatebox{90}{ADV-0 (w/o IPL)}}
 & Replay & 0.776 $\pm$ 0.009 & 0.135 $\pm$ 0.025 & 53.75 $\pm$ 1.29 & 0.42 $\pm$ 0.02 \\
 & ADV-0 & 0.689 $\pm$ 0.004 & 0.320 $\pm$ 0.020 & 44.41 $\pm$ 0.98 & 0.61 $\pm$ 0.01 \\
 & CAT & 0.699 $\pm$ 0.002 & 0.320 $\pm$ 0.000 & 45.36 $\pm$ 0.43 & 0.61 $\pm$ 0.03 \\
 & SAGE & 0.707 $\pm$ 0.009 & 0.260 $\pm$ 0.030 & 46.63 $\pm$ 0.19 & 0.56 $\pm$ 0.02 \\
 & Heuristic & 0.730 $\pm$ 0.013 & 0.205 $\pm$ 0.035 & 47.97 $\pm$ 0.34 & 0.49 $\pm$ 0.03 \\ 
 \cmidrule(l){2-6}
 & \cellcolor{myblue!15}\textbf{Avg.} & \cellcolor{myblue!15}\textbf{0.720 $\pm$ 0.007} & \cellcolor{myblue!15}\textbf{0.248 $\pm$ 0.022} & \cellcolor{myblue!15}\textbf{47.62 $\pm$ 0.65} & \cellcolor{myblue!15}\textbf{0.54 $\pm$ 0.02} \\ 
\midrule
\multirow{6}{*}{\rotatebox{90}{CAT}} 
 & Replay & 0.765 $\pm$ 0.015 & 0.165 $\pm$ 0.036 & 52.80 $\pm$ 1.15 & 0.43 $\pm$ 0.01 \\
 & ADV-0 & 0.685 $\pm$ 0.012 & 0.345 $\pm$ 0.035 & 44.55 $\pm$ 0.61 & 0.61 $\pm$ 0.03 \\
 & CAT & 0.692 $\pm$ 0.018 & 0.320 $\pm$ 0.047 & 44.78 $\pm$ 1.50 & 0.61 $\pm$ 0.02 \\
 & SAGE & 0.695 $\pm$ 0.025 & 0.305 $\pm$ 0.033 & 45.87 $\pm$ 3.22 & 0.57 $\pm$ 0.03 \\
 & Heuristic & 0.712 $\pm$ 0.024 & 0.255 $\pm$ 0.042 & 46.50 $\pm$ 2.52 & 0.54 $\pm$ 0.04 \\ 
 \cmidrule(l){2-6}
 & \textbf{Avg.} & \textbf{0.710 $\pm$ 0.019} & \textbf{0.278 $\pm$ 0.039} & \textbf{46.90 $\pm$ 1.80} & \textbf{0.55 $\pm$ 0.03} \\ 
\midrule
\multirow{6}{*}{\rotatebox{90}{Heuristic}} 
 & Replay & 0.731 $\pm$ 0.030 & 0.193 $\pm$ 0.024 & 48.36 $\pm$ 4.17 & 0.53 $\pm$ 0.07 \\
 & ADV-0 & 0.677 $\pm$ 0.022 & 0.320 $\pm$ 0.024 & 43.47 $\pm$ 2.58 & 0.62 $\pm$ 0.02 \\
 & CAT & 0.686 $\pm$ 0.024 & 0.313 $\pm$ 0.021 & 43.81 $\pm$ 2.86 & 0.63 $\pm$ 0.03 \\
 & SAGE & 0.687 $\pm$ 0.027 & 0.283 $\pm$ 0.031 & 43.61 $\pm$ 3.71 & 0.59 $\pm$ 0.05 \\
 & Heuristic & 0.691 $\pm$ 0.028 & 0.267 $\pm$ 0.012 & 44.19 $\pm$ 4.19 & 0.60 $\pm$ 0.05 \\ 
 \cmidrule(l){2-6}
 & \textbf{Avg.} & \textbf{0.694 $\pm$ 0.026} & \textbf{0.275 $\pm$ 0.022} & \textbf{44.69 $\pm$ 3.50} & \textbf{0.59 $\pm$ 0.04} \\ 
\midrule
\multirow{6}{*}{\rotatebox{90}{Heuristic}} 
 & Replay & 0.718 $\pm$ 0.035 & 0.205 $\pm$ 0.034 & 46.25 $\pm$ 1.25 & 0.55 $\pm$ 0.02 \\
 & ADV-0 & 0.635 $\pm$ 0.047 & 0.355 $\pm$ 0.045 & 38.80 $\pm$ 2.55 & 0.66 $\pm$ 0.03 \\
 & CAT & 0.665 $\pm$ 0.033 & 0.345 $\pm$ 0.040 & 40.15 $\pm$ 2.00 & 0.65 $\pm$ 0.04 \\
 & SAGE & 0.645 $\pm$ 0.031 & 0.325 $\pm$ 0.040 & 39.50 $\pm$ 2.11 & 0.63 $\pm$ 0.03 \\
 & Rule & 0.650 $\pm$ 0.029 & 0.305 $\pm$ 0.036 & 39.80 $\pm$ 2.54 & 0.64 $\pm$ 0.01 \\ 
 \cmidrule(l){2-6}
 & \textbf{Avg.} & \textbf{0.663 $\pm$ 0.035} & \textbf{0.307 $\pm$ 0.039} & \textbf{40.90 $\pm$ 2.09} & \textbf{0.63 $\pm$ 0.03} \\ 
\bottomrule
\end{tabular}%
}
\end{minipage}
\hfill 
%%%%%%%%%%%%%%%%%%%%%%%%%%%%%%%%%%%%%%%%%%%%%%%%%%%%%%%%%%%%%%%%%%%%%%%
\begin{minipage}[!htbp]{0.49\linewidth}
\centering
\caption{Cross-validation performances of driving agents learned by TD3.}
\vspace{-5pt}
\label{tab:agent_td3}
\setlength{\tabcolsep}{3pt}
\renewcommand{\arraystretch}{1.1}
\resizebox{\linewidth}{!}{%
\begin{tabular}{llcccc}
\toprule
& \textbf{Val. Env.} & {RC} $\uparrow$ & {Crash} $\downarrow$ & {Reward} $\uparrow$ & {Cost} $\downarrow$\\ 
\midrule
\multirow{6}{*}{\rotatebox{90}{\textbf{ADV-0 (w/ IPL)}}} 
 & Replay & 0.787 $\pm$ 0.002 & 0.187 $\pm$ 0.031 & 53.15 $\pm$ 0.76 & 0.43 $\pm$ 0.01 \\
 & ADV-0 & 0.711 $\pm$ 0.016 & 0.323 $\pm$ 0.038 & 45.54 $\pm$ 1.70 & 0.59 $\pm$ 0.03 \\
 & CAT & 0.724 $\pm$ 0.016 & 0.300 $\pm$ 0.029 & 46.67 $\pm$ 1.55 & 0.58 $\pm$ 0.02 \\
 & SAGE & 0.734 $\pm$ 0.016 & 0.267 $\pm$ 0.046 & 47.69 $\pm$ 1.52 & 0.53 $\pm$ 0.05 \\
 & Heuristic & 0.757 $\pm$ 0.021 & 0.200 $\pm$ 0.014 & 49.87 $\pm$ 2.64 & 0.49 $\pm$ 0.02 \\ 
 \cmidrule(l){2-6}
 & \cellcolor{myblue!30}\textbf{Avg.} & \cellcolor{myblue!30}\textbf{0.743 $\pm$ 0.014} & \cellcolor{myblue!30}\textbf{0.255 $\pm$ 0.032} & \cellcolor{myblue!30}\textbf{48.58 $\pm$ 1.63} & \cellcolor{myblue!30}\textbf{0.52 $\pm$ 0.03} \\ 
\midrule
\multirow{6}{*}{\rotatebox{90}{ADV-0 (w/o IPL)}} 
 & Replay & 0.761 $\pm$ 0.031 & 0.160 $\pm$ 0.024 & 50.21 $\pm$ 2.09 & 0.44 $\pm$ 0.01 \\
 & ADV-0 & 0.644 $\pm$ 0.046 & 0.423 $\pm$ 0.057 & 40.86 $\pm$ 2.90 & 0.67 $\pm$ 0.03 \\
 & CAT & 0.689 $\pm$ 0.036 & 0.373 $\pm$ 0.077 & 45.46 $\pm$ 2.04 & 0.59 $\pm$ 0.05 \\
 & SAGE & 0.669 $\pm$ 0.049 & 0.337 $\pm$ 0.068 & 41.95 $\pm$ 3.39 & 0.62 $\pm$ 0.05 \\
 & Heuristic & 0.685 $\pm$ 0.046 & 0.277 $\pm$ 0.069 & 43.25 $\pm$ 4.07 & 0.61 $\pm$ 0.08 \\ 
 \cmidrule(l){2-6}
 & \cellcolor{myblue!15}\textbf{Avg.} & \cellcolor{myblue!15}\textbf{0.690 $\pm$ 0.042} & \cellcolor{myblue!15}\textbf{0.314 $\pm$ 0.059} & \cellcolor{myblue!15}\textbf{44.35 $\pm$ 2.90} & \cellcolor{myblue!15}\textbf{0.59 $\pm$ 0.04} \\ 
\midrule
\multirow{6}{*}{\rotatebox{90}{CAT}} 
 & Replay & 0.755 $\pm$ 0.013 & 0.160 $\pm$ 0.029 & 49.03 $\pm$ 0.97 & 0.48 $\pm$ 0.01 \\
 & ADV-0 & 0.668 $\pm$ 0.017 & 0.367 $\pm$ 0.033 & 42.60 $\pm$ 0.23 & 0.65 $\pm$ 0.02 \\
 & CAT & 0.673 $\pm$ 0.006 & 0.343 $\pm$ 0.005 & 42.46 $\pm$ 1.60 & 0.65 $\pm$ 0.02 \\
 & SAGE & 0.687 $\pm$ 0.016 & 0.300 $\pm$ 0.008 & 43.99 $\pm$ 2.09 & 0.58 $\pm$ 0.01 \\
 & Heuristic & 0.706 $\pm$ 0.009 & 0.237 $\pm$ 0.029 & 45.21 $\pm$ 0.88 & 0.56 $\pm$ 0.03 \\ 
 \cmidrule(l){2-6}
 & \textbf{Avg.} & \textbf{0.698 $\pm$ 0.012} & \textbf{0.281 $\pm$ 0.021} & \textbf{44.66 $\pm$ 1.15} & \textbf{0.58 $\pm$ 0.02} \\ 
\midrule
\multirow{6}{*}{\rotatebox{90}{Heuristic}} 
 & Replay & 0.713 $\pm$ 0.030 & 0.210 $\pm$ 0.016 & 45.57 $\pm$ 3.01 & 0.52 $\pm$ 0.02 \\
 & ADV-0 & 0.632 $\pm$ 0.051 & 0.380 $\pm$ 0.010 & 40.47 $\pm$ 2.24 & 0.65 $\pm$ 0.04 \\
 & CAT & 0.661 $\pm$ 0.022 & 0.320 $\pm$ 0.020 & 42.40 $\pm$ 2.18 & 0.61 $\pm$ 0.04 \\
 & SAGE & 0.654 $\pm$ 0.045 & 0.315 $\pm$ 0.015 & 42.51 $\pm$ 1.99 & 0.63 $\pm$ 0.01 \\
 & Heuristic & 0.652 $\pm$ 0.047 & 0.305 $\pm$ 0.022 & 41.35 $\pm$ 1.85 & 0.60 $\pm$ 0.02 \\ 
 \cmidrule(l){2-6}
 & \textbf{Avg.} & \textbf{0.662 $\pm$ 0.039} & \textbf{0.306 $\pm$ 0.017} & \textbf{42.46 $\pm$ 2.25} & \textbf{0.60 $\pm$ 0.03} \\ 
\midrule
\multirow{6}{*}{\rotatebox{90}{Replay}} 
 & Replay & 0.727 $\pm$ 0.026 & 0.210 $\pm$ 0.010 & 47.77 $\pm$ 3.00 & 0.51 $\pm$ 0.07 \\
 & ADV-0 & 0.638 $\pm$ 0.019 & 0.435 $\pm$ 0.005 & 39.01 $\pm$ 2.08 & 0.67 $\pm$ 0.02 \\
 & CAT & 0.643 $\pm$ 0.038 & 0.455 $\pm$ 0.015 & 39.96 $\pm$ 3.15 & 0.69 $\pm$ 0.04 \\
 & SAGE & 0.654 $\pm$ 0.035 & 0.340 $\pm$ 0.050 & 41.54 $\pm$ 2.42 & 0.59 $\pm$ 0.04 \\
 & Heuristic & 0.700 $\pm$ 0.020 & 0.280 $\pm$ 0.010 & 45.02 $\pm$ 1.55 & 0.56 $\pm$ 0.01 \\ 
 \cmidrule(l){2-6}
 & \textbf{Avg.} & \textbf{0.672 $\pm$ 0.028} & \textbf{0.344 $\pm$ 0.018} & \textbf{42.66 $\pm$ 2.44} & \textbf{0.60 $\pm$ 0.04} \\ 
\bottomrule
\end{tabular}%
}
\end{minipage}
\end{table*}

\begin{figure}[!htbp]
  \begin{center}
    \centerline{\includegraphics[width=0.99\columnwidth]{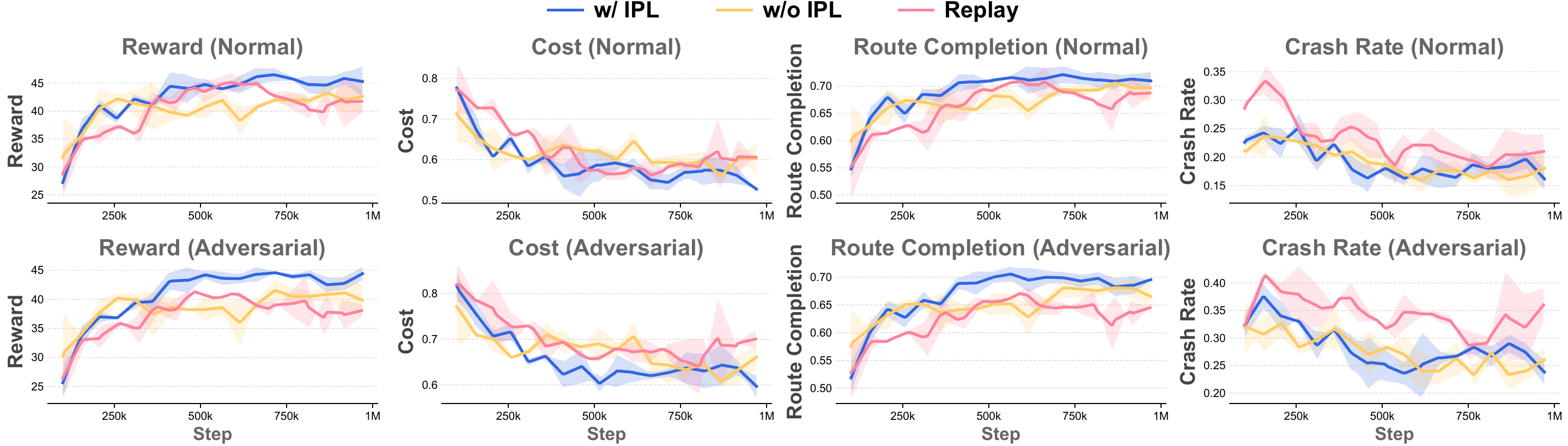}}
    \vspace{-5pt}
    \caption{Learning curves of GRPO.}
    \label{fig:grpo_curves}
  \end{center}
\end{figure}

%%%%%%%%%%%%%%%%%%%%%%%%%%%%%%%%%%%%%%%%%%%%%%%%%%%%%%%%%%%%%%%%%%%%%%%%%%%%%%%

\vspace{-10pt}
\begin{table}[!htbp]
\centering
\caption{Performance comparison of learning-based planners before and after adversarial fine-tuning using ADV-0 (GRPO).}
\vspace{-5pt}
\label{tab:planner_results_all}
\setlength{\tabcolsep}{10pt}
\begin{small}
\resizebox{0.8\textwidth}{!}{%
\begin{tabular}{l|l|cccc}
\toprule
\textbf{Model Phase} & \textbf{Val. Env.} & {RC $\uparrow$} & {Crash $\downarrow$} & {Reward $\uparrow$} & {Cost $\downarrow$} \\ 
\midrule
\multirow{6}{*}{{PlanTF (Pretrained)}} 
 & Replay & 0.727 $\pm$ 0.021 & 0.220 $\pm$ 0.027 & 41.53 $\pm$ 3.10 & 0.76 $\pm$ 0.03 \\
 & ADV-0 & 0.581 $\pm$ 0.030 & 0.420 $\pm$ 0.035 & 32.11 $\pm$ 2.54 & 1.24 $\pm$ 0.05 \\
 & CAT & 0.615 $\pm$ 0.025 & 0.385 $\pm$ 0.030 & 35.26 $\pm$ 2.16 & 1.09 $\pm$ 0.03 \\
 & SAGE & 0.595 $\pm$ 0.028 & 0.407 $\pm$ 0.032 & 33.80 $\pm$ 2.80 & 1.15 $\pm$ 0.04 \\
 & Heuristic & 0.620 $\pm$ 0.020 & 0.352 $\pm$ 0.032 & 36.53 $\pm$ 1.97 & 0.94 $\pm$ 0.03 \\ 
 \cmidrule(l){2-6}
 & \textbf{Avg.} & \textbf{0.628 $\pm$ 0.025} & \textbf{0.357 $\pm$ 0.031} & \textbf{35.85 $\pm$ 2.51} & \textbf{1.04 $\pm$ 0.04}   \\ 
\midrule
\multirow{6}{*}{{PlanTF (Fine-tuned)}} 
 & Replay & 0.738 $\pm$ 0.015 & 0.199 $\pm$ 0.018 & 46.20 $\pm$ 1.88 & 0.62 $\pm$ 0.02 \\
 & ADV-0 & 0.655 $\pm$ 0.012 & 0.293 $\pm$ 0.022 & 40.80 $\pm$ 1.20 & 0.85 $\pm$ 0.00 \\
 & CAT & 0.668 $\pm$ 0.010 & 0.273 $\pm$ 0.020 & 42.11 $\pm$ 1.16 & 0.78 $\pm$ 0.01 \\
 & SAGE & 0.640 $\pm$ 0.018 & 0.305 $\pm$ 0.025 & 39.50 $\pm$ 1.56 & 0.88 $\pm$ 0.03 \\
 & Heuristic & 0.671 $\pm$ 0.024 & 0.246 $\pm$ 0.038 & 41.28 $\pm$ 2.32 & 0.72 $\pm$ 0.05 \\ 
 \cmidrule(l){2-6}
 & \textbf{Avg.} & \textbf{0.674 $\pm$ 0.016} & \textbf{0.263 $\pm$ 0.025} & \textbf{41.98 $\pm$ 1.62} & \textbf{0.77 $\pm$ 0.02} \\ 
\midrule
\multicolumn{2}{c|}{\textbf{Average Relative Change}} & \cellcolor{myteal!15}\textbf{+7.46\%} & \cellcolor{myteal!30}\textbf{-26.23\%} & \cellcolor{myteal!20}\textbf{+17.11\%} & \cellcolor{myteal!25}\textbf{-25.68\%} \\
\cmidrule[0.9pt]{1-6}
\multirow{6}{*}{{SMART (Pretrained)}} 
 & Replay & 0.686 $\pm$ 0.020 & 0.255 $\pm$ 0.023 & 38.59 $\pm$ 3.51 & 0.83 $\pm$ 0.05 \\
 & ADV-0 & 0.540 $\pm$ 0.035 & 0.460 $\pm$ 0.049 & 29.80 $\pm$ 2.99 & 1.35 $\pm$ 0.04 \\
 & CAT & 0.565 $\pm$ 0.031 & 0.433 $\pm$ 0.033 & 31.50 $\pm$ 2.43 & 1.23 $\pm$ 0.05 \\
 & SAGE & 0.556 $\pm$ 0.032 & 0.448 $\pm$ 0.038 & 30.27 $\pm$ 3.10 & 1.30 $\pm$ 0.04 \\
 & Heuristic & 0.586 $\pm$ 0.029 & 0.384 $\pm$ 0.026 & 33.16 $\pm$ 2.20 & 1.05 $\pm$ 0.07 \\ 
 \cmidrule(l){2-6}
 & \textbf{Avg.}  & \textbf{0.587 $\pm$ 0.029} & \textbf{0.396 $\pm$ 0.034} & \textbf{32.66 $\pm$ 2.85} & \textbf{1.15 $\pm$ 0.05} \\  
\midrule
\multirow{6}{*}{{SMART (Fine-tuned)}} 
 & Replay & 0.705 $\pm$ 0.010 & 0.230 $\pm$ 0.026 & 42.13 $\pm$ 2.11 & 0.77 $\pm$ 0.01 \\
 & ADV-0 & 0.610 $\pm$ 0.018 & 0.340 $\pm$ 0.022 & 36.51 $\pm$ 1.56 & 0.97 $\pm$ 0.02 \\
 & CAT & 0.625 $\pm$ 0.012 & 0.313 $\pm$ 0.022 & 37.82 $\pm$ 1.40 & 0.95 $\pm$ 0.03 \\
 & SAGE & 0.595 $\pm$ 0.020 & 0.352 $\pm$ 0.028 & 35.20 $\pm$ 1.82 & 1.05 $\pm$ 0.03 \\
 & Heuristic & 0.620 $\pm$ 0.016 & 0.292 $\pm$ 0.018 & 37.57 $\pm$ 1.63 & 0.85 $\pm$ 0.03 \\ 
 \cmidrule(l){2-6}
 & \textbf{Avg.} & \textbf{0.631 $\pm$ 0.015} & \textbf{0.305 $\pm$ 0.023} & \textbf{37.85 $\pm$ 1.70} & \textbf{0.92 $\pm$ 0.02} \\ 
\midrule
\multicolumn{2}{c|}{\textbf{Average Relative Change}} & \cellcolor{myteal!15}\textbf{+7.57\%} & \cellcolor{myteal!30}\textbf{-22.88\%} & \cellcolor{myteal!20}\textbf{+15.86\%} & \cellcolor{myteal!25}\textbf{-20.31\%} \\ 
\bottomrule
\end{tabular}%
}
\end{small}
\end{table}

%%%%%%%%%%%%%%%%%%%%%%%%%%%%%%%%%%%%%%%%%%%%%%%%%%%%%%%%%%%%%%%%%%%%%%%%%%%%%%%
\vspace{-10pt}
\begin{figure}[htbp]
    \centering
    \begin{minipage}[t]{0.49\linewidth}
        \centering
        \includegraphics[width=0.6\linewidth]{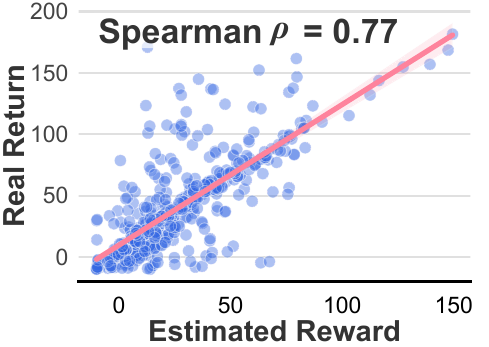}
        \caption{\textbf{Validation of proxy reward estimator.} Comparison of the estimated returns against the ground-truth. The strong Spearman correlation ($\rho = 0.77$) suggests that our rule-based proxy effectively preserves the preference ranking of adversarial candidates.}
        \label{fig:proxy_correlation}
    \end{minipage}
    \hfill
    \begin{minipage}[t]{0.49\linewidth}
        \centering
        \includegraphics[width=0.6\linewidth]{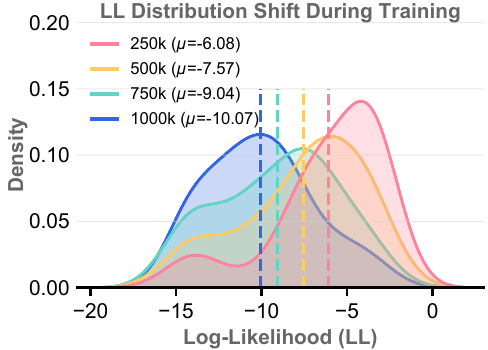}
        \caption{\textbf{Evolution of the adversary distribution}. Likelihood of adversarial trajectories at different training steps. As the ego improves, the distribution shifts towards lower values, suggesting that \texttt{ADV-0} actively identifies nonstationary failure boundary.}
        \label{fig:ll_shift}
    \end{minipage}
\end{figure}
\vspace{-10pt}

\vspace{-10pt}
\begin{table}[!htbp]
\centering
\caption{\textbf{Detailed breakdown of the trajectory-level reward model values}. Values represent the average accumulated discounted reward (over a 2.0s horizon with discount factor $\gamma=0.99$) of the \textit{best} planned trajectory selected by the model across all validation steps. \textbf{Reward components \& weights}: \textbf{Progress} ($w_\text{prog}=1.0$, longitudinal advance), \textbf{Collision} ($w_\text{coll}=20.0$, penalty for overlap with objects), \textbf{Off-road} ($w_\text{off}=5.0$, penalty for lane deviation), \textbf{Comfort} ($w_\text{acc}=0.1, w_\text{jerk}=0.1$, penalties for harsh dynamics), \textbf{Speed Efficiency} ($w_\text{eff}=0.2$, penalty for deviating from 10m/s). Note that the negative improvement in Speed Efficiency reflects a safety-efficiency trade-off, where the fine-tuned planner adopts a more conservative velocity profile to satisfy safety constraints.}
\vspace{-5pt}
\label{tab:planner_breakdown}
\setlength{\tabcolsep}{6pt}
\begin{small}
\resizebox{0.8\textwidth}{!}{%
\begin{tabular}{l|ccccc|c}
\toprule
\multirow{2}{*}{\textbf{Method}} & \textbf{Progress} & \textbf{Collision} & \textbf{Off-road} & \textbf{Comfort} & \textbf{Speed Eff.} & \multirow{2}{*}{\textbf{Total Score}} \\
 & {( + )} & {( - )} & {( - )} & {( - )} & {( - )} & \\
\midrule
{PlanTF (Pretrained)} 
 & 14.17 $\pm$ 1.53 & -2.85 $\pm$ 0.92 & -0.12 $\pm$ 0.05 & -0.34 $\pm$ 0.08 & -0.54 $\pm$ 0.12 & \textbf{10.32 $\pm$ 1.79} \\
{PlanTF (Fine-tuned)} 
 & 16.45 $\pm$ 0.85 & -0.89 $\pm$ 0.45 & -0.05 $\pm$ 0.04 & -0.24 $\pm$ 0.05 & -0.64 $\pm$ 0.13 & \textbf{14.63 $\pm$ 0.97} \\
\midrule
\textbf{Improvement} & \cellcolor{myteal!10}\textbf{+16.09\%} & \cellcolor{myteal!30}\textbf{+68.77\%} & \cellcolor{myteal!25}\textbf{+58.33\%} & \cellcolor{myteal!15}\textbf{+29.41\%} & \cellcolor{myyellow!30}-18.52\% & \cellcolor{myteal!20}\textbf{+41.76\%} \\
\cmidrule[0.9pt]{1-7}
{SMART (Pretrained)} 
 & 13.20 $\pm$ 1.80 & -3.61 $\pm$ 1.11 & -0.25 $\pm$ 0.08 & -1.26 $\pm$ 0.25 & -0.60 $\pm$ 0.15 & \textbf{7.48 $\pm$ 2.14} \\
{SMART (Fine-tuned)}  
 & 15.10 $\pm$ 1.10 & -1.75 $\pm$ 0.37 & -0.17 $\pm$ 0.05 & -0.92 $\pm$ 0.19 & -0.65 $\pm$ 0.11 & \textbf{11.61 $\pm$ 1.18} \\
 \midrule
\textbf{Improvement} & \cellcolor{myteal!10}\textbf{+14.39\%} & \cellcolor{myteal!30}\textbf{+51.52\%} & \cellcolor{myteal!20}\textbf{+32.00\%} & \cellcolor{myteal!15}\textbf{+26.98\%} & \cellcolor{myyellow!20}-8.33\% & \cellcolor{myteal!20}\textbf{+55.21\%} \\
\bottomrule
\end{tabular}%
}
\end{small}
\end{table}

% \begin{figure}[!htbp]
%     \centering
%     \includegraphics[width=0.3\linewidth]{figs/reward_correlation_plot.pdf}
%     \caption{\textbf{Validation of proxy reward estimator.} Comparison of the estimated rewards against the ground-truth returns. The strong Spearman correlation ($\rho = 0.77$) suggests that our rule-based proxy effectively preserves the preference ranking of adversarial candidates.}
%     \label{fig:proxy_correlation}
% \end{figure}

% \begin{figure}[!htbp]
%     \centering
%     \includegraphics[width=0.3\linewidth]{figs/ll_shift_distribution.pdf}
%     \caption{\textbf{Evolution of the adversarial distribution during training}. The likelihood distribution of generated adversarial trajectories at different training steps (from 250k to 1000k). As the ego agent improves, the distribution mean progressively shifts towards lower values, suggesting that \texttt{ADV-0} actively explores the long tail to identify nonstationary failure boundary of the defender.}
%     \label{fig:ll_shift}
% \end{figure}

%%%%%%%%%%%%%%%%%%%%%%%%%%%%%%%%%%%%%%%%%%%%%%%%%%%%%%%%%%%%%%%%%%%%%%%%%%%%%%%

\begin{table}[htbp]
\centering
\caption{\textbf{Full results of quantitative evaluation on the unbiased long-tailed set mined from real-world data.} The benchmark consists of four long-tail scenario categories filtered by strict physical thresholds: \textit{Critical TTC} ($\min \text{TTC} < 0.4s$), \textit{Critical PET} ($\text{PET} < 1.0s$), \textit{Hard Dynamics} (Longitudinal Acc $< -4.0 m/s^2$ or $|\text{Jerk}| > 4.0 m/s^3$), and \textit{Rare Cluster} (topologically sparse trajectory clusters). \textbf{Reactive Traffic} denotes whether background vehicles utilize IDM/MOBIL policies to interact with the agent ($\checkmark$) or strictly follow logged trajectories ($\times$). Metrics assess \textbf{Safety Margin} (higher values indicate earlier risk detection), \textbf{Stability \& Comfort} (lower Jerk indicates smoother control), and \textbf{Defensive Driving} performance, quantified by Near-Miss Rate (hazardous proximity without collision) and RDP Violation (percentage of time requiring deceleration $> 6 m/s^2$ to avoid collision).}
\vspace{-5pt}
\label{tab:long_tail}
\setlength{\tabcolsep}{2.5pt}
\renewcommand{\arraystretch}{1.1}
\begin{small}
\resizebox{0.95\textwidth}{!}{%
\begin{tabular}{llcccccccc}
\toprule
\multirow{2}[3]{*}{\rotatebox{90}{}} & \multirow{2}[3]{*}{\makecell{\textbf{Scenario}\\\textbf{Category}}} & \multirow{2}[3]{*}{\textbf{Percentage}} & \multirow{2}[3]{*}{\makecell{\textbf{Reactive}\\\textbf{Traffic}}} & \multicolumn{2}{c}{\textbf{Safety Margin}} & \multicolumn{2}{c}{\textbf{Stability \& Comfort}} & \multicolumn{2}{c}{\textbf{Defensive Driving}} \\
\cmidrule(lr){5-6} \cmidrule(lr){7-8} \cmidrule(lr){9-10}
 & & & & \makecell{\textbf{Avg Min-TTC}\\($\uparrow$)} & \makecell{\textbf{Avg Min-PET}\\($\uparrow$)} & \makecell{\textbf{Mean Abs Jerk}\\($\downarrow$)} & \makecell{\textbf{95\% Jerk}\\($\downarrow$)} & \makecell{\textbf{Near-Miss Rate}\\($\downarrow$)} & \makecell{\textbf{RDP Violation Rate}\\($\downarrow$)} \\
\midrule
\multirow{10}[6]{*}{\rotatebox{90}{ADV-O (w/ IPL)}} 
 & Critical TTC & 7.40\% & \multirow{4}{*}{$\checkmark$} & $0.645 \pm 0.122$ & $1.450 \pm 0.50$ & $1.853 \pm 0.188$ & $5.850 \pm 1.252$ & $65.28\% \pm 1.45\%$ & $15.53\% \pm 4.16\%$ \\
 & Critical PET & 3.40\% & & $0.492 \pm 0.025$ & $1.150 \pm 0.20$ & $1.623 \pm 0.095$ & $5.157 \pm 0.653$ & $74.57\% \pm 5.52\%$ & $58.26\% \pm 3.29\%$ \\
 & Hard Dynamics & 3.20\% & & $0.885 \pm 0.157$ & N/A & $1.685 \pm 0.250$ & $4.555 \pm 0.951$ & $55.40\% \pm 8.26\%$ & $29.54\% \pm 4.80\%$ \\
 & Rare Cluster & 5.20\% & & $1.951 \pm 0.450$ & $0.850 \pm 0.25$ & $1.451 \pm 0.143$ & $4.650 \pm 0.684$ & $58.55\% \pm 2.12\%$ & $31.48\% \pm 2.81\%$ \\
\cmidrule(lr){2-10}
 & \multicolumn{3}{c}{\cellcolor{myblue!20}\textbf{Avg}} & \cellcolor{myblue!20}$\mathbf{0.993 \pm 0.189}$ & \cellcolor{myblue!20}$\mathbf{1.150 \pm 0.317}$ & \cellcolor{myblue!20}$\mathbf{1.653 \pm 0.169}$ & \cellcolor{myblue!20}$\mathbf{5.053 \pm 0.885}$ & \cellcolor{myblue!20}$\mathbf{63.45\% \pm 4.34\%}$ & \cellcolor{myblue!20}$\mathbf{33.70\% \pm 3.77\%}$ \\
\cmidrule(lr){2-10}
 & Critical TTC & 7.40\% & \multirow{4}{*}{$\times$} & $0.658 \pm 0.130$ & $1.410 \pm 0.48$ & $1.812 \pm 0.165$ & $5.784 \pm 1.102$ & $62.15\% \pm 1.32\%$ & $14.80\% \pm 3.83\%$ \\
 & Critical PET & 3.40\% & & $2.550 \pm 0.036$ & $1.080 \pm 0.22$ & $1.651 \pm 0.117$ & $5.923 \pm 0.823$ & $68.40\% \pm 4.86\%$ & $65.13\% \pm 2.56\%$ \\
 & Hard Dynamics & 3.20\% & & $0.850 \pm 0.140$ & N/A & $1.525 \pm 0.380$ & $4.421 \pm 1.259$ & $58.11\% \pm 7.52\%$ & $30.23\% \pm 3.50\%$ \\
 & Rare Cluster & 5.20\% & & $2.051 \pm 0.422$ & $0.820 \pm 0.24$ & $1.480 \pm 0.155$ & $4.580 \pm 0.626$ & $55.23\% \pm 2.05\%$ & $35.63\% \pm 1.51\%$ \\
\cmidrule(lr){2-10}
 & \multicolumn{3}{c}{\cellcolor{myblue!20}\textbf{Avg}} & \cellcolor{myblue!20}$\mathbf{1.527 \pm 0.182}$ & \cellcolor{myblue!20}$\mathbf{1.103 \pm 0.313}$ & \cellcolor{myblue!20}$\mathbf{1.617 \pm 0.204}$ & \cellcolor{myblue!20}$\mathbf{5.177 \pm 0.953}$ & \cellcolor{myblue!20}$\mathbf{60.97\% \pm 3.94\%}$ & \cellcolor{myblue!20}$\mathbf{36.45\% \pm 2.85\%}$ \\
\midrule
\multirow{10}[6]{*}{\rotatebox{90}{CAT}} 
 & Critical TTC & 7.40\% & \multirow{4}{*}{$\checkmark$} & $0.465 \pm 0.058$ & $1.350 \pm 0.45$ & $2.123 \pm 0.245$ & $6.726 \pm 1.064$ & $81.98\% \pm 3.12\%$ & $24.46\% \pm 3.47\%$ \\
 & Critical PET & 3.40\% & & $0.411 \pm 0.021$ & $0.950 \pm 0.25$ & $1.653 \pm 0.108$ & $4.706 \pm 0.405$ & $91.67\% \pm 3.61\%$ & $73.20\% \pm 8.84\%$ \\
 & Hard Dynamics & 3.20\% & & $0.573 \pm 0.137$ & N/A & $1.889 \pm 0.271$ & $4.985 \pm 0.706$ & $85.42\% \pm 9.55\%$ & $41.04\% \pm 7.71\%$ \\
 & Rare Cluster & 5.20\% & & $1.850 \pm 0.55$ & $0.750 \pm 0.25$ & $1.834 \pm 0.292$ & $5.271 \pm 1.155$ & $73.08\% \pm 7.69\%$ & $40.48\% \pm 4.46\%$ \\
\cmidrule(lr){2-10}
 & \multicolumn{3}{c}{\textbf{Avg}} & $\mathbf{0.825 \pm 0.192}$ & $\mathbf{1.017 \pm 0.317}$ & $\mathbf{1.875 \pm 0.229}$ & $\mathbf{5.422 \pm 0.833}$ & $\mathbf{83.04\% \pm 5.99\%}$ & $\mathbf{44.80\% \pm 6.12\%}$ \\
\cmidrule(lr){2-10}
 & Critical TTC & 7.40\% & \multirow{4}{*}{$\times$} & $0.469 \pm 0.058$ & $1.310 \pm 0.40$ & $2.168 \pm 0.206$ & $6.871 \pm 1.149$ & $81.08\% \pm 4.68\%$ & $24.82\% \pm 3.54\%$ \\
 & Critical PET & 3.40\% & & $2.850 \pm 1.10$ & $0.910 \pm 0.30$ & $1.818 \pm 0.085$ & $5.876 \pm 0.869$ & $75.00\% \pm 12.50\%$ & $87.97\% \pm 4.17\%$ \\
 & Hard Dynamics & 3.20\% & & $0.559 \pm 0.091$ & N/A & $1.960 \pm 0.299$ & $5.232 \pm 0.886$ & $81.25\% \pm 6.25\%$ & $44.03\% \pm 5.88\%$ \\
 & Rare Cluster & 5.20\% & & $1.780 \pm 0.60$ & $0.720 \pm 0.28$ & $1.969 \pm 0.182$ & $6.028 \pm 0.820$ & $64.10\% \pm 8.01\%$ & $47.96\% \pm 5.38\%$ \\
\cmidrule(lr){2-10}
 & \multicolumn{3}{c}{\textbf{Avg}} & $\mathbf{1.415 \pm 0.462}$ & $\mathbf{0.980 \pm 0.327}$ & $\mathbf{1.979 \pm 0.193}$ & $\mathbf{6.002 \pm 0.931}$ & $\mathbf{75.36\% \pm 7.86\%}$ & $\mathbf{51.20\% \pm 4.74\%}$ \\
\midrule
\multirow{10}[6]{*}{\rotatebox{90}{Heuristic}} 
 & Critical TTC & 7.40\% & \multirow{4}{*}{$\checkmark$} & $0.513 \pm 0.066$ & $1.550 \pm 0.60$ & $2.252 \pm 0.101$ & $6.849 \pm 0.698$ & $81.98\% \pm 1.56\%$ & $25.73\% \pm 1.89\%$ \\
 & Critical PET & 3.40\% & & $0.404 \pm 0.039$ & $1.100 \pm 0.50$ & $1.957 \pm 0.316$ & $6.204 \pm 0.577$ & $85.42\% \pm 3.61\%$ & $81.71\% \pm 6.07\%$ \\
 & Hard Dynamics & 3.20\% & & $0.938 \pm 0.244$ & N/A & $2.151 \pm 0.308$ & $6.353 \pm 1.384$ & $62.50\% \pm 16.54\%$ & $41.41\% \pm 5.35\%$ \\
 & Rare Cluster & 5.20\% & & $1.600 \pm 0.45$ & $0.800 \pm 0.30$ & $2.155 \pm 0.224$ & $6.761 \pm 1.107$ & $67.95\% \pm 2.22\%$ & $46.31\% \pm 1.31\%$ \\
\cmidrule(lr){2-10}
 & \multicolumn{3}{c}{\textbf{Avg}} & $\mathbf{0.864 \pm 0.200}$ & $\mathbf{1.150 \pm 0.467}$ & $\mathbf{2.129 \pm 0.237}$ & $\mathbf{6.542 \pm 0.942}$ & $\mathbf{74.46\% \pm 5.98\%}$ & $\mathbf{48.79\% \pm 3.65\%}$ \\
\cmidrule(lr){2-10}
 & Critical TTC & 7.40\% & \multirow{4}{*}{$\times$} & $0.503 \pm 0.061$ & $1.480 \pm 0.55$ & $2.229 \pm 0.054$ & $6.945 \pm 0.891$ & $79.28\% \pm 4.13\%$ & $26.07\% \pm 2.60\%$ \\
 & Critical PET & 3.40\% & & $3.550 \pm 1.50$ & $1.050 \pm 0.45$ & $2.103 \pm 0.171$ & $6.449 \pm 0.485$ & $75.00\% \pm 12.50\%$ & $88.02\% \pm 2.03\%$ \\
 & Hard Dynamics & 3.20\% & & $0.833 \pm 0.190$ & N/A & $2.098 \pm 0.358$ & $6.444 \pm 1.220$ & $64.58\% \pm 7.22\%$ & $43.71\% \pm 0.52\%$ \\
 & Rare Cluster & 5.20\% & & $1.520 \pm 0.40$ & $0.750 \pm 0.35$ & $2.366 \pm 0.288$ & $7.273 \pm 1.668$ & $56.41\% \pm 12.36\%$ & $50.61\% \pm 1.95\%$ \\
\cmidrule(lr){2-10}
 & \multicolumn{3}{c}{\textbf{Avg}} & $\mathbf{1.601 \pm 0.538}$ & $\mathbf{1.093 \pm 0.450}$ & $\mathbf{2.199 \pm 0.218}$ & $\mathbf{6.778 \pm 1.066}$ & $\mathbf{68.82\% \pm 9.05\%}$ & $\mathbf{52.10\% \pm 1.77\%}$ \\
\midrule
\multirow{10}[6]{*}{\rotatebox{90}{Replay}} 
 & Critical TTC & 7.40\% & \multirow{4}{*}{$\checkmark$} & $0.355 \pm 0.045$ & $0.950 \pm 0.35$ & $2.855 \pm 0.359$ & $8.553 \pm 2.101$ & $94.28\% \pm 2.51\%$ & $55.69\% \pm 6.27\%$ \\
 & Critical PET & 3.40\% & & $0.282 \pm 0.015$ & $0.650 \pm 0.15$ & $2.555 \pm 0.427$ & $7.857 \pm 1.259$ & $96.86\% \pm 1.56\%$ & $92.54\% \pm 1.85\%$ \\
 & Hard Dynamics & 3.20\% & & $0.453 \pm 0.124$ & N/A & $2.951 \pm 0.550$ & $8.200 \pm 1.655$ & $85.49\% \pm 9.55\%$ & $68.23\% \pm 8.46\%$ \\
 & Rare Cluster & 5.20\% & & $1.257 \pm 0.555$ & $0.450 \pm 0.15$ & $2.753 \pm 0.387$ & $8.450 \pm 1.858$ & $81.50\% \pm 3.59\%$ & $62.47\% \pm 4.50\%$ \\
\cmidrule(lr){2-10}
 & \multicolumn{3}{c}{\textbf{Avg}} & $\mathbf{0.587 \pm 0.185}$ & $\mathbf{0.683 \pm 0.217}$ & $\mathbf{2.779 \pm 0.431}$ & $\mathbf{8.265 \pm 1.718}$ & $\mathbf{89.53\% \pm 4.30\%}$ & $\mathbf{69.73\% \pm 5.27\%}$ \\
\cmidrule(lr){2-10}
 & Critical TTC & 7.40\% & \multirow{4}{*}{$\times$} & $0.389 \pm 0.050$ & $1.050 \pm 0.40$ & $2.653 \pm 0.281$ & $8.159 \pm 1.806$ & $91.51\% \pm 2.11\%$ & $52.45\% \pm 5.58\%$ \\
 & Critical PET & 3.40\% & & $1.855 \pm 1.204$ & $0.720 \pm 0.18$ & $2.481 \pm 0.357$ & $7.620 \pm 0.956$ & $88.56\% \pm 4.25\%$ & $90.18\% \pm 2.11\%$ \\
 & Hard Dynamics & 3.20\% & & $0.481 \pm 0.112$ & N/A & $2.825 \pm 0.485$ & $7.950 \pm 1.458$ & $82.15\% \pm 8.81\%$ & $65.57\% \pm 7.25\%$ \\
 & Rare Cluster & 5.20\% & & $1.188 \pm 0.483$ & $0.420 \pm 0.12$ & $2.922 \pm 0.410$ & $8.688 \pm 1.928$ & $78.29\% \pm 3.16\%$ & $65.82\% \pm 3.80\%$ \\
\cmidrule(lr){2-10}
 & \multicolumn{3}{c}{\textbf{Avg}} & $\mathbf{0.978 \pm 0.462}$ & $\mathbf{0.730 \pm 0.233}$ & $\mathbf{2.720 \pm 0.383}$ & $\mathbf{8.104 \pm 1.537}$ & $\mathbf{85.13\% \pm 4.58\%}$ & $\mathbf{68.50\% \pm 4.68\%}$ \\
\bottomrule
\end{tabular}%
}
\end{small}
\end{table}

%%%%%%%%%%%%%%%%%%%%%%%%%%%%%%%%%%%%%%%%%%%%%%%%%%%%%%%%%%%%%%%%%%%%%%%%%%%%%%%
% \clearpage
% \vspace{-30pt}
\begin{figure}[t]
  \begin{center}
    \centerline{\includegraphics[width=0.6\columnwidth]{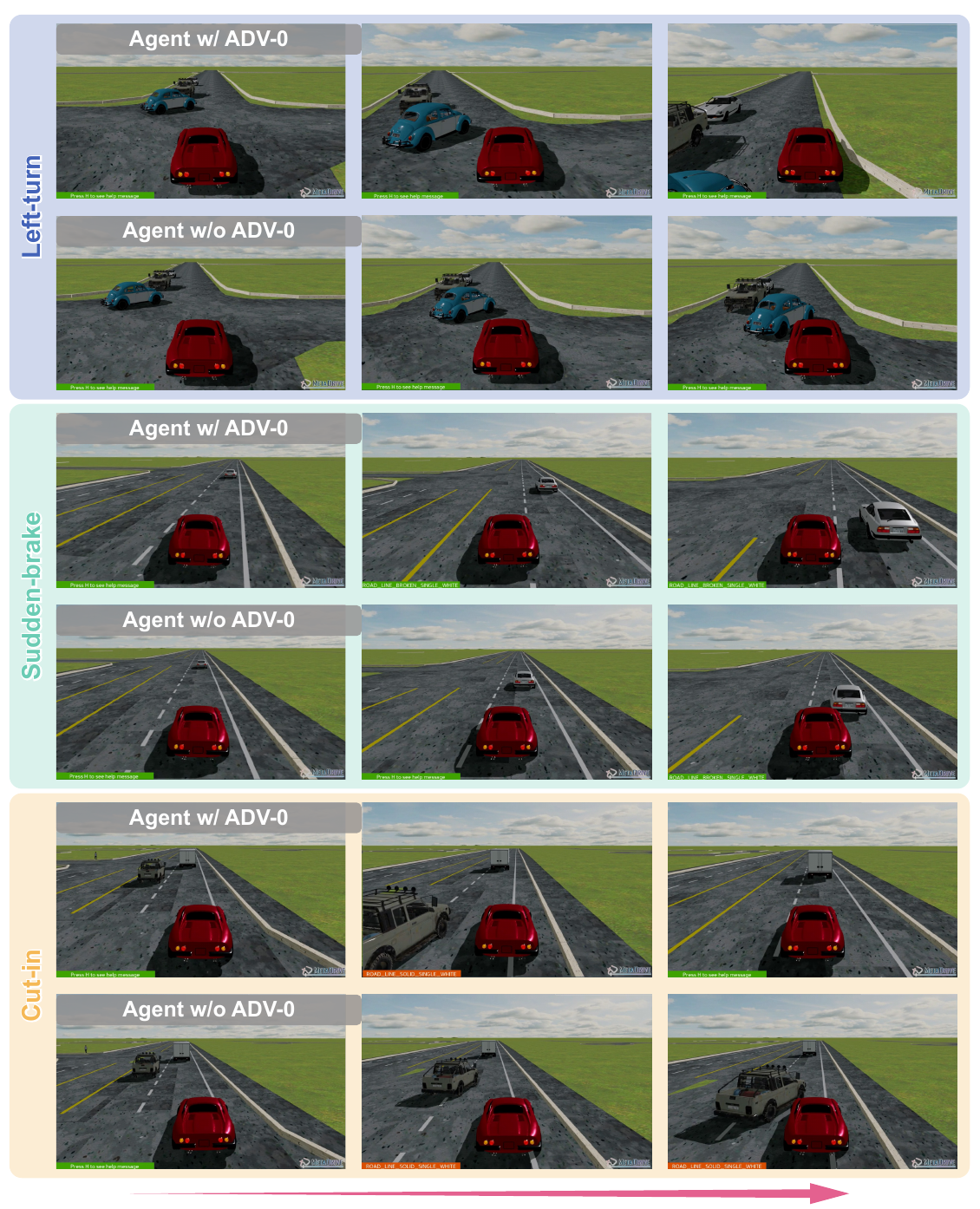}}
    \vspace{-5pt}
    % \caption{Examples of improved safe driving ability after being trained with \texttt{ADV-0}.}
    \caption{\textbf{Qualitative comparison of improved safe driving ability after being trained with \texttt{ADV-0}.} We showcase three typical safety-critical scenarios generated by the adversary: Left-turn, Sudden-brake, and Cut-in. In each column, the bottom row (Agent w/o \texttt{ADV-0}) shows the baseline agent failing to anticipate the aggressive behavior of the background traffic, resulting in collisions. In contrast, the top row (Agent w/ \texttt{ADV-0}) demonstrates that the agent trained with our framework learns robust defensive behaviors, such as yielding at intersections or decelerating in time, thereby successfully avoiding accidents.}
    \label{fig:case-appendix}
  \end{center}
\end{figure}
\vspace{-100pt}

\vspace{-100pt}
\begin{figure}[b]
  \begin{center}
    \centerline{\includegraphics[width=0.9\columnwidth]{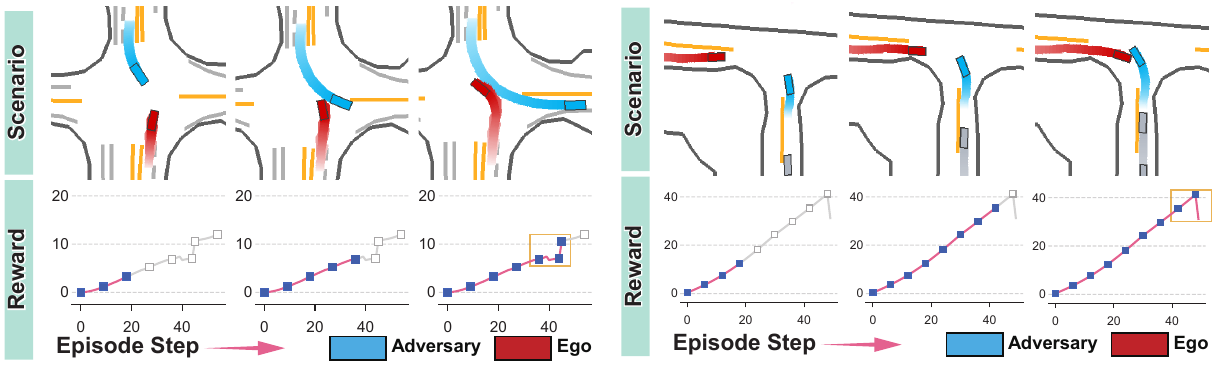}}
    \vspace{-5pt}
    \caption{\textbf{Additional qualitative examples of reward-reduced adversarial scenarios from} \texttt{ADV-0}. In the first case (left), the adversary interrupts the straight-going ego, forcing it to deviate from the lane centerline to avoid a crash; this maneuver halts the ego's progress, causing the cumulative reward to stagnate. In the second case (right), a collision occurs near the end of the episode, triggering a sharp drop in the cumulative reward due to the safety penalty.}
    \label{fig:case-reward-appendix}
  \end{center}
\end{figure}

%%%%%%%%%%%%%%%%%%%%%%%%%%%%%%%%%%%%%%%%%%%%%%%%%%%%%%%%%%%%%%%
\clearpage
\section{Detailed Experimental Setups}\label{appendix:setups}
In this section, we provide a comprehensive description of the experimental environment, datasets, baseline methods, model implementations, and hyperparameter configurations used in our study. Our experimental design follows the protocols established in prior works \cite{zhang2023cat,nie2025steerable,stoler2025seal} to ensure a fair and rigorous comparison.

\subsection{Environment and Dataset}\label{appendix: }

\paragraph{Waymo Open Motion Dataset (WOMD).}
We utilize the WOMD as the source of real-world traffic scenarios. 
WOMD is a large-scale dataset containing diverse and complex urban driving environments captured in various conditions. Each scenario spans 9 seconds and is sampled at 10 Hz, capturing complex interactions between vehicles, pedestrians, and cyclists. Following the standard practices in safety-critical scenario generation, we filter and select a subset of 500 scenarios that involve interactive and complex behaviors.

\paragraph{MetaDrive simulator.}
All experiments are conducted within the MetaDrive simulator \cite{li2022metadrive}, a lightweight and efficient platform that supports importing real-world data for closed-loop simulation.
MetaDrive constructs the static map environment and replays the background traffic trajectories based on the WOMD logs. 
The simulation runs at a frequency of 10 Hz. The observation space consists of the ego vehicle's kinematic state (velocity, steering, heading), navigation information (relative distance and direction to references), and surrounding information (surrounding traffic, road boundaries, and road lines) encoded as a vector by a simulated 2D LiDAR with 30 lasers and a 50-meter detection range.
The action space consists of low-level continuous control signals including steering, brake, and throttle.

\paragraph{Reward definition.}
The ground-truth reward function for the ego agent is designed to balance safety and progression. It is composed of a dense driving reward and sparse terminal penalties. Formally, the reward function $R_t$ at step $t$ is defined as:
\begin{equation}\label{eq:metadrive}
    R_t = R_{\text{driving}} + R_{\text{success}} - P_{\text{collision}} - P_{\text{offroad}}
\end{equation}
where $R_{\text{driving}} = d_{t} - d_{t-1}$ represents the longitudinal progress along the reference route, incentivizing the agent to move toward the destination. $R_{\text{success}} = +10$ is a sparse reward granted upon reaching the destination. Safety penalties are applied for terminal failures: $P_{\text{crash}} = 10$ for collisions with vehicles or objects, and $P_{\text{out}} = 10$ for driving out of the drivable road boundaries. Additionally, a small speed reward of $0.1 \times v_t$ is added to encourage movement. The episode terminates if the agent succeeds, crashes, or leaves the drivable area. For Lagrangian-based algorithms (e.g., PPO-Lag), we define a binary cost function $C_t$ which equals 1 if a safety violation occurs and 0 otherwise.

\subsection{Detailed Implementation}\label{app:implementation}

\subsubsection{Proxy reward estimator}\label{subsubsec:proxy_reward}

To efficiently estimate the ego's expected return $\hat{J}(Y^\text{Adv}, \pi_\theta)$ without executing computationally expensive closed-loop simulations during the inner loop, we implement a vectorized rule-based proxy evaluator. This module calculates the geometric interaction between a candidate adversarial trajectory $Y^\text{Adv}$ and a sampled ego response $Y^\text{Ego}$ from the cached history buffer. 
Let $Y^\text{Ego} = \{p^\text{ego}_t, \psi^\text{ego}_t\}_{t=1}^T$ and $Y^\text{Adv} = \{p^\text{adv}_t, \psi^\text{adv}_t\}_{t=1}^T$ denote the sequences of position and yaw for the ego and adversary, respectively. The proxy reward is calculated by mimicking the scheme in MetaDrive (Eq.~\ref{eq:metadrive}):

\paragraph{Collision detection.} We approximate the vehicle geometry using Oriented Bounding Boxes (OBB) defined by the center position, yaw, length $l$, and width $w$. For each timestep $t$, we compute the four corner coordinates of both vehicles. We employ the Separating Axis Theorem (SAT) to determine if the two OBBs overlap. A collision penalty $r_{\text{crash}}$ is applied if an overlap is detected at any timestep, and the evaluation terminates early.

\paragraph{Route progress and termination.} We map the ego's position to the Frenet frame of the reference lane to obtain the longitudinal coordinate $s_t$ and lateral deviation $d_t$. The evaluation terminates if:
(1) \textit{Success:} the longitudinal progress $s_t / L_{\text{total}} > 0.95$, granting a reward $r_{\text{success}}$.
(2) \textit{Off-road:} the lateral deviation $|d_t| > 10.0$ meters, applying a penalty $r_{\text{offroad}}$.

\paragraph{Dense reward.} If no termination condition is met, a dense driving reward is accumulated based on the incremental longitudinal progress $\Delta s_t = s_t - s_{t-1}$. The total estimated return is the sum of step-wise rewards and terminal bonuses/penalties:
\begin{equation}
    \mathcal{R}_{\text{proxy}} = \sum_{t=1}^{T_{\text{end}}} \left( \lambda_{\text{drive}} \cdot \Delta s_t \right) + \mathbb{I}_{\text{crash}} \cdot r_{\text{crash}} + \mathbb{I}_{\text{success}} \cdot r_{\text{success}} + \mathbb{I}_{\text{offroad}} \cdot r_{\text{offroad}}.
\end{equation}
Once a terminal condition is met, the summation stops, and the cumulative value is returned. This geometric calculation is fully vectorized across the batch of candidate trajectories, allowing for rapid evaluation of potential attacks.
In our experiments, we set $r_{\text{success}}=10$, $r_{\text{crash}}=-10$, $r_{\text{offroad}}=-10$, and $\lambda_{\text{drive}}=1.0$.

\paragraph{Baseline schemes.}
For the ablation study in \cref{fig:reward_model}, we compare against:
(1) \textit{Experience:} We query the ego's replay buffer. If the exact scenario context exists, we use the recorded return; otherwise, we retrieve the return of the nearest neighbor scenario based on trajectory similarity.
(2) \textit{RewardModel:} We train a separate learnable reward model $\mathcal{M}(X, Y^\text{Ego}, Y^\text{Adv})$ via supervised regression on the historical interaction dataset to predict the scalar return.
(3) \textit{GTReward:} We execute the physics engine in a parallel process to roll out the interaction between $\pi_\theta$ and the specific $Y^\text{Adv}$ to obtain the exact return.
While our results suggest that rule-based approach is effective, future work could explore integrating value function approximations (e.g., Q-networks from Actor-Critic architecture) to estimate returns for more complex reward functions.

%%%%%%%%%%%%%%%
\paragraph{Context-aware buffer.} 
As noted in \cref{subsec:sampling}, the validity of the rule-based proxy estimator relies on the geometric consistency between the adversarial candidate $Y^{\text{Adv}}$ and the ego response $Y^{\text{Ego}}$. Since different scenarios possess distinct map topologies and coordinate systems, using a global history buffer would lead to physically meaningless calculations. 
To address this, we implement the ego history buffer $\mathcal{H}_{\text{ego}}$ as a context-indexed dictionary, mapping each unique scenario ID to a First-In-First-Out (FIFO) queue of its historical ego trajectories: $\mathcal{H}_{\text{ego}}(X) = \{Y^{\text{Ego}}_{i}|X\}_{i=1}^N$. This ensures that the geometric interaction $R_{\text{proxy}}(Y^{\text{Ego}}, Y^{\text{Adv}})$ is always computed between trajectories residing in the same spatial environment.

A key challenge raised by the dynamic training process is handling newly sampled contexts $X_{\text{new}}$ that have not yet been interacted with by the current ego policy, resulting in an empty buffer $\mathcal{H}_{\text{ego}}(X_{\text{new}}) = \emptyset$. To address this, we employ a warm-up rollout before the inner-loop adversarial update begins for a sampled batch of contexts:
(1) We check the buffer status for each context $X$ in the batch.
(2) If $\mathcal{H}_{\text{ego}}(X_{\text{new}}) = \emptyset$, we execute a single inference rollout of the current ego policy $\pi_\theta$ in the simulator. Importantly, this rollout is conducted against the non-adversarial replay log.
(3) The resulting trajectory $Y^{\text{Ego}}_{\text{ref}}$ represents the ego's baseline behavior in the absence of attacks and is added to $\mathcal{H}_{\text{ego}}(X)$.
This mechanism ensures that the proxy estimator always has a valid reference to approximate the ego's vulnerability, even for unseen scenarios. 
In addition, the proxy estimator always calculates geometric interactions between trajectories sharing the same spatial topology.
As the training progresses, new interactions under adversarial perturbations are generated and added to the buffer, gradually shifting the distribution in $\mathcal{H}_{\text{ego}}(X)$ from naturalistic responses to defensive responses against attacks.

\subsubsection{Evaluation and benchmark}

\paragraph{Adversarial scenario generation.}
We follow the safety-critical scenario generation evaluation protocol used in \citet{zhang2023cat}.
For each scenario, the evaluation presented in \cref{tab:scenario_gen} and \cref{tab:scenario_gen_adv0} follows a two-stage process: (1) The environment is reset with a fixed seed and the ego agent first interacts with the log-replayed environment to generate a reference trajectory; (2) The adversarial generator conditions on the context and the ego's reference trajectory to generate adversarial trajectories for one selected adversary (which is marked as \texttt{Object-of-Interests} in WOMD). This trajectory is set as a fixed plan for the adversary traffic in the simulator. 
For \texttt{ADV-0}, we employ a worst-case sampling strategy for evaluation, setting the sampling temperature $\tau \to 0$ (Eq.~\ref{eq:sampling}) to select the trajectory with the highest estimated adversarial utility from $K=32$ candidates. We test against three kinds of driving policies: a \textit{Replay} policy that follows ground truth logs from WOMD, an \textit{IDM} policy representing reactive rule-based drivers, and \textit{RL agents} trained via standard PPO on replay logs. These systems under test execute their control loop in the environment.
The primary metrics are Collision Rate (CR), defined as the percentage of episodes where the ego collides with the adversary, and Ego's Return (ER), the cumulative reward achieved by the ego. 
For the baselines adversarial generators, we utilize their official implementations but adapt them for our evaluation environment to ensure a fair comparison. All of them follow the same evaluation procedure. For the realism penalty metric in \cref{fig:real_pen}, we adopt the trajectory-level measure from \citet{nie2025steerable}, which discourages trajectories that
are physically implausible or exhibit unnatural driving behavior.

\paragraph{Performance validation of learned agents.}
To rigorously evaluate the generalizability of the learned AD policies and compare different adversarial learning methods, we established a cross-validation protocol where each trained agent is tested against multiple distinct scenario generators. 
For the evaluation, we utilized a held-out test set of 100 WOMD scenarios that were not seen during training. 
We compared five types of agents: those adversarially trained by \texttt{ADV-0} (with and without IPL), \texttt{CAT}, \texttt{Heuristic}, and a baseline trained solely on \texttt{Replay} data. All these agents are trained using 400 WOMD scenarios.
Each agent was evaluated in five distinct environments with the held-out test set: \texttt{Replay}, \texttt{ADV-0}, \texttt{CAT}, \texttt{SAGE}, and \texttt{Heuristic}.
Note that since \texttt{SAGE} introduces an additional scenario difficulty training curriculum, we exclude it for training methods.
For each evaluation run, we used agents saved at the best validation checkpoints and recorded four metrics: Route Completion (RC), Crash Rate (percentage of episodes ending in collision), Reward (cumulative environmental reward), and Cost (safety violation penalty). 
To ensure statistical significance, the reported results in \cref{tab:agent_average_all} are averaged across 6 different underlying RL algorithms (GRPO, PPO, PPO-Lag, SAC, SAC-Lag, TD3) and multiple random seeds.
For the specific cross-validation in \cref{tab:cross_val_adv0}, we utilized the TD3 agent and compared the relative performance change when the adversary is enhanced with IPL versus a pretrained prior equipped with energy-based sampling (Eq.~\ref{eq:sampling}).

\paragraph{Evaluation in long-tailed scenarios.} Generated scenarios from adversarial models can be biased and cause a sim-to-real gap for AD policies.
To ensure an unbiased evaluation of policy robustness, we construct a curated evaluation set by mining additional 500 held-out scene segments from the WOMD. These scenarios are not shown in the training set.
We employ strict physical thresholds to identify and categorize rare, safety-critical events: (1) \textit{Critical TTC}: scenarios containing frames where the TTC between the ego and any object drops below $0.4s$; (2) \textit{Critical PET}: scenarios with a Post-Encroachment Time (PET) lower than 1.0s, indicating high-risk intersection crossings; (3) \textit{Hard Dynamics}: scenarios involving aggressive behaviors, defined by longitudinal deceleration exceeding $-4.0 m/s^2$ or absolute jerk exceeding $4.0 m/s^3$; and (4) \textit{Rare Cluster}: scenarios belonging to the two lowest-density clusters identified via K-Means clustering ($k=10$) on trajectory features for all interacting objects, including curvature, velocity profiles, and displacement. 
During evaluation, we reproduce these scenarios in the simulator and utilize two traffic modes: a non-reactive mode where background vehicles follow logged trajectories, and a reactive mode where vehicles are controlled by IDM and MOBIL policies to simulate human-like interaction with the agent. 
In addition to safety margin and stability metrics, we also report \textit{Near-Miss Rate}, defined as the percentage of episodes where TTC $< 1.0s$ or distance $< 0.5m$ without collision, and \textit{RDP Violation}, which measures the frequency of violating the Responsibility-Sensitive Safety (RSS) Danger Priority safety distances.

%%%%%%%%%%%%%%%%%%%%%%%%%%%%%%%%%%%%%%%%%%%%%%

\subsubsection{Adaptability to different RL algorithms}
\label{subsubsec:adaptability}

This section elaborates on the implementation details regarding the integration of \texttt{ADV-0} with various RL algorithms, as mentioned in Section~\ref{subsec:framework}. We specifically address the synchronization mechanisms and the specialized credit assignment strategy developed for critic-free architectures to ensure stable convergence in safety-critical tasks.

\paragraph{RL algorithms.}
Our framework is designed to be algorithm-agnostic, treating the adversarial generator as a dynamic component of the environment dynamics $\mathcal{P}_\psi$. Consequently, the ego agent perceives the generated adversarial trajectories simply as state transitions, allowing \texttt{ADV-0} to support both on-policy and off-policy algorithms. In our experiments, we instantiate the ego policy using six distinct algorithms, covering both on-policy and off-policy paradigms, as well as Lagrangian variants for constrained optimization: GRPO, PPO, SAC, TD3, PPO-Lag, and SAC-Lag. The primary distinction in implementation lies in the data collection and update scheduling.
For \textit{on-policy} methods (e.g., PPO, GRPO), the training alternates strictly between the adversary and the defender. In the outer loop, we fix the adversary $\psi$ and collect a batch of trajectories $\mathcal{B}$ using the current ego policy $\pi_\theta$. The policy is updated using this batch, which is then discarded to ensure that the policy gradient is estimated using the data distribution induced by the current adversary.
Conversely, for \textit{off-policy} methods (e.g., SAC, TD3), we maintain a replay buffer $\mathcal{D}$. While the adversary $\psi$ evolves periodically, older transitions in $\mathcal{D}$ technically become off-dynamics data. To mitigate the impact of non-stationarity, we employ a sliding window approach where the replay buffer has a limited capacity, ensuring that the value function estimation relies predominantly on recent interactions with the current or near-current adversary. The adversary update frequency $N_{\text{freq}}$ is tuned such that the off-policy agent has sufficient steps to adapt to the current risk distribution before the adversary shifts its strategy. This historical diversity acts as a natural form of domain randomization, preventing the ego from overfitting to specific attack patterns.

\paragraph{Credit assignment in critic-free methods.}
While critic-based methods (e.g., PPO) rely on a value function $V(s)$ to reduce variance, critic-free methods like GRPO typically utilize sequence-level outcome supervision. In the training of LLMs, it is common to assign the final reward of a completed sequence to all tokens. However, we identify that \textit{directly applying this sequence-level supervision to AD tasks leads to severe credit assignment issues, particularly when addressing the long-tail distribution}. 
Our experiments revealed that applying standard outcome supervision leads to training instability and policy collapse after certain steps. This occurs because safety-critical failures (e.g., collisions) often happen at the very end ($t=T$) of a long-horizon episode. Assigning a low return to the entire trajectory incorrectly penalizes the correct driving behaviors exhibited in the early stages of the episode ($t \ll T$), resulting in high-variance gradients that disrupt the fine-tuning phase.
On the other hand, implementing standard process supervision (e.g., via Monte Carlo value estimation across rollouts \cite{guo2025deepseek}) would require the physical simulator to support resetting to arbitrary intermediate states to perform multiple forward rollouts from every timestep. In complex high-fidelity simulators, this requirement introduces substantial engineering complexities regarding state serialization and incurs prohibitive computational overhead.

To resolve this, we propose a \textit{step-aligned group advantage estimator} that provides dense step-level supervision without requiring a learned critic or simulator state resets.
Specifically, for each update step, we sample a scenario context $X$ and generate a group of $G$ independent episodes ($G=6$ in our experiments) starting from the exact same initial state (achieved via scenario seeding) but with different stochastic action realizations.
Let $\tau_i = \{(s_t, a_t, r_t)\}_{t=0}^{T_i}$ denote the $i$-th transition in the group. We implement the following modifications to the advantage estimation:
\begin{enumerate}
    \item \textbf{Calculating returns-to-go:} Instead of the total episode return, we calculate the discounted return-to-go $R_{t,i} = \sum_{k=t}^{T_i}\gamma^{k-t}r_{k,i}$ for each step $t$ in transition $i$. This ensures that an action is only evaluated based on its consequences.
    \item \textbf{Step-aligned group normalization:} We compute the advantage $A_{t,i}$ by normalizing $R_{t,i}$ against the returns of other trajectories in the same group at the same time step $t$, which uses the peer group as a dynamic baseline:
    \begin{equation}\label{eq:grpo}
        A_{t,i} = \frac{R_{t,i} - \mu_t}{\sigma_t + \epsilon}, \quad \text{where } \mu_t = \frac{1}{G}\sum_{j=1}^G R_{t,j}, \quad \sigma_t = \sqrt{\frac{1}{G}\sum_{j=1}^G (R_{t,j} - \mu_t)^2}.
    \end{equation}
    \item \textbf{Baseline padding:} Since episodes have varying lengths (e.g., due to early termination from crashes), the group size at step $t$ could decrease. To maintain a low-variance baseline, we apply zero-padding to terminated trajectories. If trajectory $j$ ends at $T_j < t$, we set $R_{t,j} = 0$. This ensures the baseline $\mu_t$ is always computed over the full group size $G$, correctly reflecting that survival yields higher future returns than termination.
    
    \item \textbf{Advantage clipping:} To prevent single outliers such as rare failures from dominating the gradient and destabilizing the policy, we clip the calculated advantages as: $\hat{A}_{t,i} = \text{clip}(A_{t,i}, -C, C)$, where $C=5.0$.
\end{enumerate}
Then Eq.~\ref{eq:grpo} is integrated into the standard GRPO loss and optimized via mini-batch gradient descent.
This modification provides an efficient and low-variance gradient signal that correctly attributes failure to specific actions leading up to it, without the instability observed in outcome supervision or expensive state resetting.

%%%%%%%%%%%%%%%%%%%%%%%%%%%%%%%%%%%%%%%%%%%%%%

\subsubsection{Application to learning-based motion planning models}
\label{app:planner_implementation}

In this section, we detail the implementation of applying \texttt{ADV-0} to fine-tune trajectory planning models. Unlike standard end-to-end RL policies that output control actions directly, motion planners output future trajectories which are then executed by a low-level controller. This introduces challenges regarding re-planning and reward attribution. To address this, we decouple the planning evaluation from environmental execution, inspired by \citet{chen2025rift}. Note that this framework can be applied to any RL algorithm, and we adopt GRPO as a demonstration.

\paragraph{Model architectures.}
To demonstrate the versatility of \texttt{ADV-0}, we apply it to two representative categories of state-of-the-art motion planning models:

\begin{enumerate}
    \item \textbf{Autoregressive generation (SMART~\cite{wu2024smart}):} 
    SMART formulates motion generation as a next-token prediction task, analogous to LLMs. It discretizes vectorized map data and continuous agent trajectories into sequence tokens, utilizing a decoder-only Transformer to model spatial-temporal dependencies. By autoregressively predicting the next motion token, the model effectively captures complex multi-agent interactions and has demonstrated potential for motion generation and planning.

    \item \textbf{Multimodal Scoring (PlanTF~\cite{cheng2024rethinking}):} 
    PlanTF is a Transformer-based imitation learning planner designed to address the shortcut learning phenomenon often observed in history-dependent models. It encodes the ego vehicle's current kinematic states, map polylines, and surrounding agents using a specialized attention-based state dropout encoder to mitigate compounding errors. This architecture allows the planner to generate robust closed-loop trajectories by focusing on the causal factors of the current scene rather than overfitting to historical observations. PlanTF has achieved state-of-the-art in several popular motion planning benchmarks.
\end{enumerate}

Following the standard pretrain-then-finetune practice, we first apply imitation learning to train supervised policies by behavior cloning.
To deploy these models in closed-loop simulation, we implement a wrapper policy that executes a re-planning cycle every $N=10$ simulation steps (1.0s). At each cycle, the planner receives the current observation and generates a trajectory. A PID controller then tracks this trajectory to produce steering and acceleration commands for the underlying physics engine. We employ an advanced PID controller with a dynamic lookahead distance to ensure smooth tracking of the planned path.

\paragraph{Fine-tuning planners via \texttt{ADV-0.}}
Directly applying standard RL algorithms to fine-tune trajectory planners is inefficient due to the sparsity of rewards relative to the high-dimensional output space. In addition, the low-level controller executed during a re-planning horizon increases the difficulty of reward credit assignment.
To address this, we decouple planning evaluation from execution and implement a state-wise reward model (SWRM) to provide dense supervision directly on the planned trajectories, following \citet{chen2025rift}.
We employ GRPO to fine-tune the planners. The process at each re-planning step $t$ is as follows:

\begin{enumerate}
    \item \textbf{Generation:} The planner generates a group of $K$ candidate trajectories $\{T_1, \dots, T_K\}$ conditioned on the current state $s_t$. For PlanTF, these are the multimodal outputs; for SMART, we sample $K$ sequences via temperature sampling.
    \item \textbf{Evaluation:} Instead of rolling out $K$ trajectories in the simulator, we evaluate them immediately using SWRM. SWRM calculates an instant reward $r_k$ for each $T_k$ based on geometric and kinematic properties over a horizon $H=2.0s$:
    \begin{equation}
        r(T_k) = w_{\text{prog}} \cdot \Delta_{\text{long}} - w_{\text{coll}} \cdot \mathbb{I}_{\text{coll}} - w_{\text{road}} \cdot \mathbb{I}_{\text{off}} - w_{\text{comf}} \cdot \text{Jerk},
    \end{equation}
    where weights are set to $w_{\text{coll}}=20.0$, $w_{\text{road}}=5.0$.
    \item \textbf{Optimization:} We compute the advantage for each candidate as $A_k = (r_k - \bar{r}) / \sigma_r$, where $\bar{r}$ and $\sigma_r$ are the mean and standard deviation of rewards within the group. The policy is updated to maximize the likelihood of high-advantage trajectories using the GRPO objective:
    \begin{equation}
    \mathcal{L}_{\text{GRPO}} = -\frac{1}{K} \sum_{k=1}^K \min \left( \frac{\pi_\theta(T_k|s_t)}{\pi_{\text{old}}(T_k|s_t)} A_k, \text{clip}\left(\frac{\pi_\theta(T_k|s_t)}{\pi_{\text{old}}(T_k|s_t)}, 1-\epsilon, 1+\epsilon\right) A_k \right).
\end{equation}
    For PlanTF, we fine-tune the trajectory scoring head and decoder; for SMART, we fine-tune the token prediction logits.
    \item \textbf{Execution:} The trajectory with the highest SWRM score is selected for execution by the PID controller to advance the environment to the next re-planning step.
\end{enumerate}

For PlanTF, we fine-tune the trajectory scoring head; for SMART, we fine-tune the token prediction head. This approach allows the planner to learn from the adversarial scenarios generated by \texttt{ADV-0} by explicitly penalizing trajectories that the SWRM identifies as risky, without requiring dense environmental feedback.

\paragraph{Adversarial interaction.}
The \texttt{ADV-0} adversary operates in the inner loop as described in the main paper. The adversary generator $\mathcal{G}_\psi$ creates challenging scenarios based on the planner's executed history, and is further updated via IPL. The planner is then fine-tuned via GRPO to propose safer trajectories in response to these generated risks.

\subsection{Baselines}

We compare \texttt{ADV-0} against a comprehensive set of baselines, categorized into adversarial scenario generators (methods that create the environment) and closed-loop adversarial training frameworks (methods that train the ego policy).

\paragraph{Backbone model.}
Consistent with prior works \cite{zhang2023cat,nie2025steerable,stoler2025seal}, we employ DenseTNT \cite{gu2021densetnt} as the backbone motion prediction model for the adversarial generator. DenseTNT is an anchor-free, goal-based motion forecasting model capable of generating multimodal distributions of future trajectories, which is known for its high performance on the WOMD benchmark. We initialize the generator using the publicly available pretrained checkpoint, ensuring a fair comparison of the generation capabilities.

\paragraph{Adversarial generators.}
To evaluate the effectiveness of the generated scenarios, we compare \texttt{ADV-0} against a comprehensive set of adversarial generation methods, covering optimization-based, learning-based, and sampling-based paradigms:
\begin{itemize}
    \item \textbf{Heuristic}~\cite{zhang2023cat}: A hand-crafted baseline that modifies the trajectory of the background vehicle to intercept the ego vehicle's path using Bezier curve fitting. It heuristically generates aggressive cut-ins or emergency braking maneuvers based on the ego vehicle's position. This serves as an oracle method representing worst-case physical attacks.
    
    \item \textbf{CAT}~\cite{zhang2023cat}: A state-of-the-art sampling-based approach that generates adversarial trajectories by resampling from the DenseTNT traffic prior. It selects trajectories that maximize the posterior probability of collision with the ego vehicle's planned path.
    
    \item \textbf{KING}~\cite{hanselmann2022king}: A gradient-based approach that perturbs adversarial trajectories by backpropagating through a differentiable kinematic bicycle model to minimize the distance to the ego vehicle.
    
    \item \textbf{AdvTrajOpt}~\cite{zhang2022adversarial}: An optimization-based approach that formulates adversarial generation as a trajectory optimization problem. It employs Projected Gradient Descent (PGD) to iteratively modify trajectory waypoints to induce collisions.
    
    \item \textbf{SEAL}~\cite{stoler2025seal}: A skill-enabled adversary that combines a learned objective function with a reactive policy. It utilizes a scoring network to predict collision criticality and ego behavior deviation.
    
    \item \textbf{GOOSE}~\cite{ransiek2024goose}: A goal-conditioned RL framework. The adversary is modeled as an RL agent that learns to manipulate the control points of Non-Uniform Rational B-Splines (NURBS) to construct safety-critical trajectories.
    
    \item \textbf{SAGE}~\cite{nie2025steerable}: A recent preference alignment framework that fine-tunes motion generation models using pairs of trajectories. It learns to balance adversariality and realism, allowing for test-time steerability via weight interpolation between adversarial and realistic expert models.
\end{itemize}

\paragraph{Adversarial training frameworks.}
To demonstrate the effectiveness of our closed-loop training pipeline, we compare \texttt{ADV-0} with the following training paradigms. All methods use the same outer-loop ego policy training structure but differ in how the training environment is generated and how the inner and outer loops are integrated:
\begin{itemize}
    \item \textbf{Replay (w/o Adversary):} The ego agent is trained purely on the original log-replay scenarios from WOMD without any adversarial modification. This serves as a lower bound for performance.
    \item \textbf{Heuristic training:} The ego agent is trained against the Heuristic rule-based generator described above.
    \item \textbf{CAT:} The state-of-the-art closed-loop training framework where the ego agent is trained against the CAT generator. The generator selects adversarial trajectories based on collision probability against the latest ego policy but does not update its policy via preference learning during the training loop.
    \item \textbf{ADV-0 w/o IPL:} An ablation variant of \texttt{ADV-0} where IPL is removed and the adversary is not fine-tuned. It relies solely on energy-based sampling from the pretrained backbone. This isolates the contribution of the evolving adversary.
\end{itemize}
For fair comparison, all adversarial training methods (CAT, \texttt{ADV-0}, etc.) utilize the same curriculum learning schedule regarding the frequency and intensity of adversarial encounters.
Note that we explicitly exclude comparisons with standard adversarial RL frameworks, such as RARL \cite{pinto2017robust,ma2018improved} or observation-based perturbation methods \cite{tessler2019action,zhang2020robust} due to two key factors: 
(1) They primarily focus on perturbations to observations or state vectors. In contrast, ADV-0 targets behavioral robustness by altering the transition dynamics via trajectory generation.  
(2) Standard adversarial RL models the adversary as an agent with a low-dimensional action space. Our setting involves noisy real-world traffic data and the adversary outputs high-dimensional continuous trajectories. 
Training a standard RL adversary from scratch to generate effective trajectories in this noisy environment is computationally intractable and fails to converge in our preliminary experiments.

\subsection{Hyperparameters}

We provide the detailed hyperparameters used in our experiments to facilitate reproducibility. Table~\ref{tab:rl_hyperparams_all} lists the parameters for the various RL algorithms used to train the ego agent. Table~\ref{tab:adv0_hyperparams} details the hyperparameters for the \texttt{ADV-0} framework, including the IPL fine-tuning process and the min-max training schedule.

\vspace{-5pt}
\begin{table}[thb]
\centering
\caption{Hyperparameters for different RL algorithms used in the experiments.}
\vspace{-5pt}
\label{tab:rl_hyperparams_all}
\begin{small}
% --- Table 1: TD3 ---
\begin{minipage}[t]{0.32\textwidth}
\centering
\caption{TD3}
\vspace{-5pt}
\label{tab:hp_td3}
\begin{tabular}{ll}
\toprule
\textbf{Hyper-parameter} & \textbf{Value} \\
\midrule
Discount Factor $\gamma$ & 0.99 \\
Batch Size & 256 \\
Actor Learning Rate & 3e-4 \\
Critic Learning Rate & 3e-4 \\
Target Update $\tau$ & 0.005 \\
Policy Delay & 2 \\
Exploration Noise & 0.1 \\
Policy Noise & 0.2 \\
Noise Clip & 0.5 \\
\bottomrule
\end{tabular}
\end{minipage}
\hfill
% --- Table 2: SAC & SAC-Lag ---
\begin{minipage}[t]{0.32\textwidth}
\centering
\caption{SAC \& SAC-Lag}
\vspace{-5pt}
\label{tab:hp_sac}
\begin{tabular}{ll}
\toprule
\textbf{Hyper-parameter} & \textbf{Value} \\
\midrule
Discount Factor $\gamma$ & 0.99 \\
Batch Size & 256 \\
Learning Rate & 3e-4 \\
Target Update $\tau$ & 0.005 \\
Entropy $\alpha$ & 0.2 (Auto) \\
Cost Coefficient & 0.5 \\
\midrule
\textit{SAC-Lag Specific} & \\
Cost Limit & 0.3 \\
Lagrangian LR & 5e-2 \\
\bottomrule
\end{tabular}
\end{minipage}
\hfill
% --- Table 3: PPO, PPO-Lag & GRPO ---
\begin{minipage}[t]{0.32\textwidth}
\centering
\caption{PPO, PPO-Lag \& GRPO}
\vspace{-5pt}
\label{tab:hp_ppo_grpo}
\begin{tabular}{ll}
\toprule
\textbf{Hyper-parameter} & \textbf{Value} \\
\midrule
Discount Factor $\gamma$ & 0.99 \\
Batch Size & 256 \\
Learning Rate & 3e-5 \\
Update Timestep & 4096 \\
Epochs per Update & 10 \\
Clip Ratio & 0.2 \\
GAE Lambda $\lambda$ & 0.95 \\
Entropy Coefficient & 0.01 \\
Value Coefficient & 0.5 \\
\midrule
\textit{Algorithm Specific} & \\
Cost Limit (PPO-Lag) & 0.4 \\
Lagrangian LR (Lag) & 5e-2 \\
Group Size (GRPO) & 6 \\
KL Beta (GRPO) & 0.001 \\
\bottomrule
\end{tabular}
\end{minipage}
\end{small}
\end{table}

\begin{table}[th]
\caption{Hyperparameters for \texttt{ADV-0} Framework and IPL Fine-tuning.}
\vspace{-5pt}
\label{tab:adv0_hyperparams}
\centering
\setlength{\tabcolsep}{3pt}
\renewcommand{\arraystretch}{1.1}
\begin{small}
\resizebox{0.7\textwidth}{!}{%
\begin{tabular}{llc}
\toprule
\textbf{Module} & \textbf{Parameter} & \textbf{Value} \\
\midrule
{Backbone (DenseTNT)} & Hidden Size & 128 \\
& Sub-graph Depth & 3 \\
& Global-graph Depth & 1 \\
& Trajectory Modes ($K$) & 32 \\
& NMS Threshold & 7.2  \\
\midrule
{IPL Fine-tuning} & Learning Rate & 5e-6 \\
& Temperature ($\tau$) & 0.05 \\
& Optimizer & AdamW \\
& Scheduler & CosineAnnealing \\
& Gradient Accumulation Steps & 16 \\
& Pairs per Scenario & 8 \\
& Reward Margin & 5.0 \\
& Spatial Diversity Threshold & 2.0 m \\
\midrule
{Min-Max Schedule (RARL)} 
& Adversary Update Frequency & Every 5 Ego Updates \\
& Adversary Training Iterations & 5 Epochs per Block \\
& Adversary Training Batch Size & 32 Scenarios \\
& Adversarial Sampling Temperature & 0.1 \\
& Max Training Timesteps & $1 \times 10^6$ \\
& Opponent Trajectory Candidates & 32 \\
& Ego History Buffer Length  & 5 \\
& Min Probability (Curriculum) & 0.1 \\
\bottomrule
\end{tabular}}
\end{small}
\end{table}

%%%%%%%%%%%%%%%%%%%%%%%%%%%%%%%%%%%%%%%%%%%%%%%%%%%%%%%%%%%%%%

% \clearpage
% \section{Discussions}

% To make \texttt{ADV-0} more intuitive and theoretically grounded, we discuss through the following \textit{``How to Understand"} questions.

% \subsection{How to understand the ego history buffer?}

% \subsection{How to understand the benefit of temperature sampling?}

\end{document}